\documentclass{article} 
\PassOptionsToPackage{numbers}{natbib}
\usepackage[final]{neurips_2023}

\usepackage{color}
\usepackage{bm}
\usepackage{hyperref}
\usepackage{amsmath,amsfonts,amsthm}
\usepackage{blindtext}
\usepackage{listings}
\usepackage{algorithm}
\usepackage{algorithmic}
\usepackage{booktabs}
\usepackage{multirow}
\usepackage{multicol}
\usepackage{caption}
\usepackage{subcaption}
\usepackage{pifont}



\usepackage{ifthen}

\newif\ifdebug
\debugfalse

\newcommand{\bingrui}[1]{{\color{blue}{\bf\sf [Bingrui: #1]}}}


\newtheorem{lemma}{Lemma}
\newtheorem{theorem}{Theorem}



\newcommand{\vect}[1]{\boldsymbol{\mathbf{#1}}}

\newcommand{\E}[1]{\mathbb{E}\left[#1\right]}



\newcommand{\cv}{\vect c}

\newcommand{\gv}{\vect g}

\newcommand{\mv}{\vect m}
\newcommand{\nv}{\vect n}

\newcommand{\rv}{\vect r}
\newcommand{\sv}{\vect s}

\newcommand{\vv}{\vect v}
\newcommand{\wv}{\vect w}
\newcommand{\xv}{\vect x}

\newcommand{\Mv}{\vect M}
\newcommand{\Nv}{\vect N}

\newcommand{\Qv}{\vect Q}
\newcommand{\Rv}{\vect R}

\newcommand{\Tv}{\vect T}

\newcommand{\Wv}{\vect W}

\newcommand{\Eb}{\mathbb E}

\newcommand{\Rb}{\mathbb R}

\newcommand{\Zb}{\mathbb Z}

\newcommand{\norm}[1]{\left\lVert#1\right\rVert}
\newcommand{\abs}[1]{\left|#1\right|}

\newcommand{\set}[1]{\left\{#1\right\}}

\makeatletter
\newtheorem*{rep@theorem}{\rep@title}
\newcommand{\newreptheorem}[2]{%
	\newenvironment{rep#1}[1]{%
		\def\rep@title{#2 \ref{##1}}%
		\begin{rep@theorem}}%
		{\end{rep@theorem}}}
\makeatother

\usepackage{url}
\usepackage{listings}
\usepackage{amssymb, amsmath}
\usepackage{xfrac}
\usepackage{enumitem}
\usepackage[utf8]{inputenc} 
\usepackage[T1]{fontenc}    
\usepackage{hyperref}       
\usepackage{url}            
\usepackage{booktabs}       
\usepackage{amsfonts}       
\usepackage{nicefrac}       
\usepackage{microtype}      
\usepackage{xcolor}         

\newlist{myitems}{enumerate}{3}
\setlist[myitems, 1]
{label=\arabic{myitemsi}.,leftmargin=15pt,labelwidth=10pt,labelsep=5pt,
topsep=0pt,parsep=0pt,partopsep=0pt,noitemsep
}

\title{Memory Efficient Optimizers with 4-bit States}

%

\usepackage[symbol,stable]{footmisc}

\author{%
Bingrui Li$^{1}$, Jianfei Chen$^{1\dagger}$, Jun Zhu$^1$ \\
$^1$Dept. of Comp. Sci. and Tech., Institute for AI, BNRist Center, THBI Lab, \\
Tsinghua-Bosch Joint ML Center, Tsinghua University \\
\texttt{lbr22@mails.tsinghua.edu.cn};~~\texttt{\{jianfeic, dcszj\}@tsinghua.edu.cn} \\
}

\newcommand{\method}{LPMM}

\begin{document}

\maketitle

\begin{abstract}
Optimizer states are a major source of memory consumption for training neural networks, limiting the maximum trainable model within given memory budget. Compressing the optimizer states from 32-bit floating points to lower bitwidth is promising to reduce the training memory footprint, while the current lowest achievable bitwidth is 8-bit. In this work, we push optimizer states bitwidth down to 4-bit through a detailed empirical analysis of first and second moments. Specifically, we find that moments have complicated outlier patterns, that current block-wise quantization cannot accurately approximate. We use a smaller block size and propose to utilize both row-wise and column-wise information for better quantization. We further identify a zero point problem of quantizing the second moment, and solve this problem with a linear quantizer that excludes the zero point. Our 4-bit optimizers are evaluated on a wide variety of benchmarks including natural language understanding, machine translation, image classification, and instruction tuning. On all the tasks our optimizers can achieve comparable accuracy with their full-precision counterparts, while enjoying better memory efficiency.\footnote{Code is available at \url{https://github.com/thu-ml/low-bit-optimizers}}

\end{abstract}
\footnotetext[2]{Corresponding author.}

\section{Introduction}

Large-scale models with a massive amount of parameters~\citep{brown2020language,chowdhery2022palm,fedus2022switch,hoffmann2022an,touvron2023llama, zhang2022opt} have shown impressive few-shot learning abilities on general tasks~\cite{weiemergent}.
Despite being powerful, training these models is challenging.
Memory capacity is one of the main bottlenecks of training large-scale models.
Modern neural networks are typically trained with stateful optimizers such as Adam~\cite{kingma2015adam}, which need to maintain one or two optimizer states (i.e., first and second moments) per each parameter. As the model size grows, the memory consumed by optimizer states can be a dominating factor of memory consumption~\cite{rajbhandari2020zero, rajbhandari2021zero-infty}. 






There are several attempts to reduce optimizers' memory consumption. Factorization~\cite{anil2019memory, Chen2020Extreme, shazeer2018adafactor} applies low-rank approximation to optimizer states, 
delta tuning~\cite{ding2022delta,houlsby2019adapter,hu2022lora,lester2021power,li2021prefix} avoids maintaining optimizer states for most parameters by only tuning a small subset, and low-precision optimizers~\citep{dettmers20218,ramesh2021zero}  represent their states with low-precision numerical formats, which consume less memory. 

Among these methods, low-precision optimizers are attractive due to their simplicity and wide applicability. 
Dettmers et al.~\cite{dettmers20218} propose an 8-bit optimizer with reparameterized embedding layers (``stable embedding layers'') and a block-wise 8-bit dynamic exponential numerical format for optimizer states. Their 8-bit optimizers achieve  similar convergence to full-precision optimizers on language modeling, image classification, machine translation, and  language understanding tasks. 


In this work, we push the required numerical precision of low-precision optimizers from 8 to 4-bit through  analyzing  the patterns in the first and second moments and designing dedicated quantizers for optimizer states. Specifically, we find that moments exhibit complicated outlier patterns, that vary across different parameter tensors. The large block size  proposed by Dettmers et al.~\cite{dettmers20218} cannot properly handle all different outlier patterns. Based on this observation, we propose to use a smaller block size, which improves the approximation of first moment.

For the second moment, we find its quantization suffers from a \emph{zero-point problem}. Since the update direction is usually inversely proportional to the square root of the second moment, quantizing non-zero quantities to zero will cause significant deviation. To address this problem, we propose a simple linear quantizer to exclude the zero-point for second moment. We further propose a stronger quantization technique, \emph{rank-1 normalization}, to improve the approximation of second moment by better handling the outlier patterns. Our proposed quantization techniques are robust enough to achieve lossless convergence under 4-bit, even without the stable embedding layers proposed by Dettmers et al.~\cite{dettmers20218}. 

Finally, we investigate the combination of factorization methods~\cite{anil2019memory, shazeer2018adafactor} with low-precision optimizers, and propose a memory efficient optimizer which utilizes quantized 4-bit first moment and factorized second moment. For applicable tasks, the hybrid optimizer enjoys best of both worlds: good convergence and memory efficiency.




We evaluate our 4-bit optimizers on a wide range of tasks, including natural language understanding, machine translation, image classification, and instruction tuning of large language models. On all the benchmarks, our 4-bit optimizer can converge similarly fast with full-precision optimizers, and the converged models do not have noticeable accuracy loss.
Our optimizers consumes less memory than existing 8-bit optimizers~\cite{dettmers20218}, while improves the throughput of language model finetuning with optimizer offloading~\cite{rajbhandari2021zero-infty, ren2021zero-offload}  due to reduced communication cost.


\section{Preliminaries}

In this section, we present some preliminaries of compression-based memory efficient optimizers and discuss quantization methods for compression of optimizer states in a general formulation.

\begin{algorithm}[t]
  \caption{Compression-based Memory Efficient Optimization Framework}
  \label{alg:framework}
  \begin{algorithmic}[1]
    \REQUIRE{black-box stochastic optimization algorithm $\mathcal{A}$, 
    initial parameter $\wv_0 \in \Rb^p$,
    initial optimizer state $\bar{\sv}_0 = \vect 0$,
    total number of iterations $T$}
    \FOR{$t = 1, 2, \dots, T$}
      \STATE Sample a minibatch $\zeta_{t}$ and get stochastic gradient $\gv_t = \nabla_{\wv} f(\wv_{t-1}, \zeta_t)$
      \STATE $\sv_{t-1} \leftarrow \text{decompress}(\bar{\sv}_{t-1})$
      \STATE $\wv_t, \sv_t \leftarrow \mathcal{A}\left(\wv_{t-1}, \sv_{t-1}, \gv_t\right)$
      \STATE $\bar{\sv}_{t} \leftarrow \text{compress}(\sv_t)$
    \ENDFOR
    \RETURN {$\wv_T$}
  \end{algorithmic}
\end{algorithm}

\subsection{A Framework for Compression-based Memory Efficient Optimizers}

Gradient-based stateful optimizers like SGDM~\citep{qian1999momentum, sutskever2013importance}, Adam~\citep{kingma2015adam}, AdamW~\citep{loshchilov2017decoupled} are the principal choices in deep neural network training. 
However, the memory footprint of stateful optimizers is several times of model itself, which results in a bottleneck for large model pretraining/finetuning.
Consider the update rule of the Adam optimizer:
\begin{align}
    \label{eq:adam}
    \textbf{Adam}(\wv_{t-1}, \mv_{t-1}, \vv_{t-1}, \gv_t) = \begin{cases}
        \mv_t & = \beta_1\mv_{t-1} + (1-\beta_1)\gv_t \\
        \vv_t & = \beta_2\vv_{t-1} + (1-\beta_2)\gv_t^2 \\
        \hat{\mv}_t & = \mv_t / (1-\beta_1^t) \\
        \hat{\vv}_t & = \vv_t / (1-\beta_2^t) \\
        \wv_t & = \wv_{t-1} - \eta \cdot \hat{\mv}_t / \left(\sqrt{\hat{\vv}_t} + \epsilon\right)
    \end{cases}
\end{align}

During the training process, the model parameters $\wv_t$ and optimizer states (i.e., first and second moments) $\mv_t$, $\vv_t$  need to be  stored persistently in the GPU memory. 
As the model grows large, optimizer states are a main source of training memory consumption.
Each parameter dimension takes two 32-bit optimizer states, making the memory consumption of Adam-style optimizers three times larger than stateless optimizers like SGD.


Compressing the optimizer states is a promising method to achieve memory efficient optimization.
Formally, given a gradient-based optimizer $\mathcal{A}$, a memory efficient version of the optimization algorithm with compressed optimizer states is given by Alg.~\ref{alg:framework}. 
The algorithm $\mathcal{A}$ can be any gradient-based optimizers like SGDM, Adam, AdamW, etc.
See App.~\ref{sec:app-quant-framework-example} for examples.
In Alg. 1, the precise states $\sv_t$ are only temporal, and only compressed states  $\bar{\sv}_t$ are stored persistently in the GPU memory. 
Memory footprint reduction can be achieved since in neural networks, the state vectors $\mv_t$ and $\vv_t$ are usually concatenation of state vectors of each parameterized layer. Therefore, we can perform the optimizer steps (Line 3-5) separately for each layer, so only one single layer of the precise states are presented in the memory at a time. The states for all other layers are kept compressed. 
In the rest of this paper, we focus on the compression and decompression method, to achieve high compression rate of optimizer states while maintaining good convergence of the optimizer. 


\subsection{Main Compression Method: Quantization}\label{sec:quant-space}

Quantizing the optimizer states to lower precision (e.g. 8-bit integers) is an effective way to compression optimizer states~\cite{dettmers20218}. In this case, the optimizer states are compressed with a quantizer and decompressed with a dequantizer. 
The low-precision numerical format significantly impacts the accuracy of quantization methods. Here, we present a general framework for numerical formats we considered for compressing optimizer states. 

A quantizer converts full-precision tensors to low-precision formats. 
Based on the formulation proposed by Dettmers and Zettlemoyer~\citep{dettmers2022case}, we disentangle the quantizer $\Qv(\cdot)$ into two parts: normalization $\Nv(\cdot)$ and mapping $\Mv(\cdot)$, 
which applies sequentially and element-wisely to a tensor to be quantized.
For concise presentation, we only discuss quantizers for unsigned inputs, and defer the discussion of signed inputs in App.~\ref{sec:app-quant-signed-case}.
Formally, the quantizer for a tensor $\xv \in \Rb^p$ is given by
\begin{align*}
    q_j := \Qv(x_j) = \Mv \circ \Nv (x_j).
\end{align*}
\paragraph{Normalization}
The normalization operator $\Nv$ scales each elements of $\xv$ into the unit interval, i.e. $[0, 1]$. 
Normalization can have different granularity,
such as per-tensor, per-token (row)~\cite{park2022nuqmm, yao2022zeroquant}, per-channel (column)~\cite{bondarenko2021understanding}, group-wise~\cite{park2022nuqmm, yao2022zeroquant} and block-wise~\cite{dettmers20218}.
The per-tensor and block-wise normalization operators are given by
\begin{align*}
    n_j &:= \Nv_{\text{per-tensor}}(x_j) = x_j / \max_{1 \leq i \leq p}\abs{x_i}, \\
    n_j &:= \Nv_{\text{block-wise}}(x_j) = x_j / 
    \max \set{ \abs{x_i} :
        1 + B\left\lfloor j / B \right\rfloor 
        \leq i \leq B\left(\left\lfloor j / B \right\rfloor + 1\right)
    }, 
\end{align*}
respectively, where the involved scaling factors are called \emph{quantization scale}, which are persistently stored together with quantized tensor until dequantization. 
The granularity of normalization presents a trade-off of quantization error and memory overhead. Normalization method with low quantization error and acceptable memory overhead is preferred.
In this case, the coarsest per-tensor normalization operator has negligible memory overhead, i.e. only 1 scaling factor regardless of tensor size, but suffers from largest error due to outliers. 
Block-wise normalization views the tensor as an 1-dimensional array,
divides the array into blocks of size $B$ called block and assigns a quantization scale within each block, which leads to $\lceil p / B \rceil$ quantizaton scales totally.
The block size could be adapted to trade-off quantization error and memory overhead.

\paragraph{Mapping}
A mapping~\citep{dettmers20218} converts normalized quantities to low-bitwidth integers. 
Formally, the mapping operator $\Mv = \Mv_{\Tv, b}$ is equipped with a bitwidth $b$ and a predefined increasing mapping, named \emph{quantization mapping} $\Tv: [0, 2^b - 1] \cap \Zb \to [0, 1]$. 
Then $\Mv$ is defined as 
\begin{align*}
    q_j := \Mv(n_j) = \arg\min_{0 \leq i < 2^b} \abs{n_j - \Tv(i)}.
\end{align*}
The design of $\Tv$ is critical as it could effectively mitigate quantization error by capturing the distribution information of $\nv$. 
There are two kinds mappings that are of specific interest to optimizer states quantization, linear mapping and dynamic exponent (DE) mapping~\citep{dettmers20158}. 
A linear mapping $\Tv(i) = (i + 1) / {2^b}$ defines a linear quantizer, where the quantization intevals distribute evenly in each block. 
The DE mapping can approximate small values well, similar to floating point numbers.
DE splits the binary representation of a low-precision integer $i$ into three parts: a leading sequence of $E$ zeros, followed by an indicator bit one, and  remaining $F$ fraction bits. DE defines $\Tv(i) = 10^{-E(i)} \mbox{fraction}(i)$, where $\mbox{fraction}(i)\in [0.1, 1]$.
See App.~\ref{sec:app-quant-map} for full specifications and visualizations of different quantization mappings.

The normalization $\Nv$ and mapping $\Mv$ roughly play the same role in finding good quantization point candidates and they affect each other. 
If an oracle normalization scaling the original tensor $\xv$ to a uniform distribution is accessible, linear mapping could be used generally.
On the contrary, if the optimal mapping could be readily identified with respect to certain metrics for a per-tensor normalized tensor, there is no necessity to develop additional normalization methods that incur extra overhead for mitigating the negative effects of outliers.
In essence, if one of the two operators approaches optimality, the quantizer can still perform well even when the other operator is set to its most basic configuration. Additionally, as one operator becomes increasingly powerful, the potential for improvement in the other operator gradually diminishes.

\paragraph{Dequantization} The dequantizer is just the inverse of the quantizer, which is simply
\begin{align*}
    \Tilde{x}_j := \Nv^{-1} \circ \Tv(q_j).
\end{align*}

Based on our formulation, we name quantizers by their  normalization and mapping methods as \texttt{Norm./Map.}. For example, 8-bit optimizers~\citep{dettmers20218} use
block-wise normalization with a block size of 2048 and dynamic exponent mapping, which we call \texttt{B2048/DE} in the rest of the paper.

\section{Compressing First Moment}\label{sec:first-order}


\begin{figure}[t]
    \centering
    \includegraphics[width=1.0\textwidth]{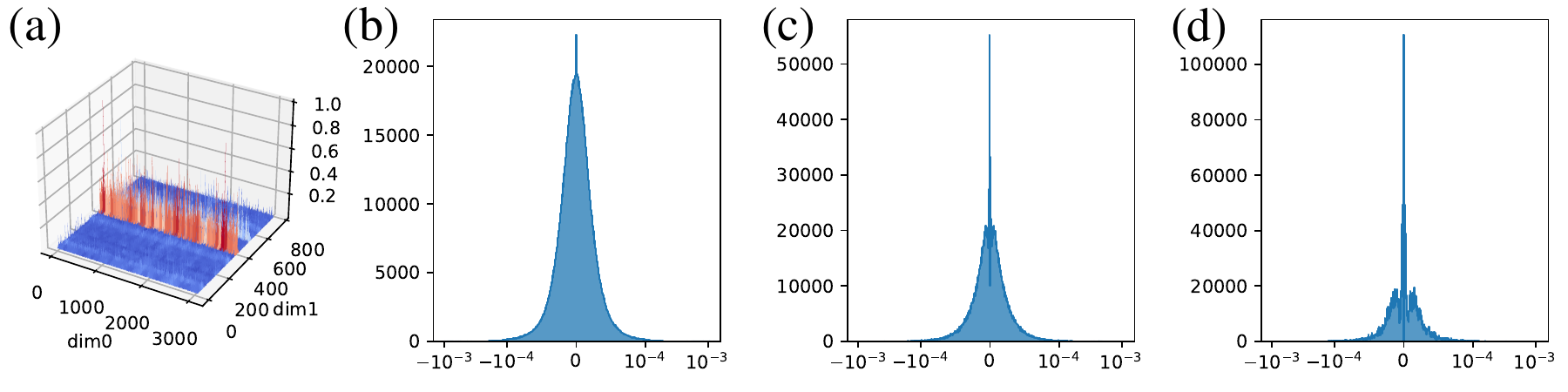}
    \caption{\small{Visualization of the  first moment in the \texttt{layers.3.blocks.1.mlp.fc1} layer in a Swin-T model. 
    (a): Magnitude of the first moment.
    (b): Histogram of the first moment. 
    (c): Moment approximated by \texttt{B128/DE}.
    (d): Moment approximated by \texttt{B2048/DE}.
    }}
    \label{fig:main-histogram-blocksize}
\end{figure}

In the next two sections, we describe our design of compression and decompression methods in Alg.~\ref{alg:framework} to realize our memory efficient 4-bit optimizers. 
We first discuss the compression of the first moment. 
Our compression method for first moment is based on Dettmers et al.~\citep{dettmers20218}, which uses block-wise normalization with a block size of 2048 and dynamic exponent mapping~\cite{dettmers20158}.
We preliminary reduce the bitwidth  from 8-bit to 4-bit and discover that the first moment is rather robust to quantization. 
This simple optimizer can already converge with 4-bit first moment, though in some cases there is accuracy degradation. 

To further improve the performance, we investigate the patterns of the first moment. 
There are some outliers in the moments and 
outliers significantly affect quantization scale due to their large magnitude. 
Outlier patterns have been studied for weights and activations. 
It is shown that the weights are rather smooth~\citep{dettmers2022llm, xiao2022smoothquant, zeng2023glmb}, while the activation have column-wise outliers~\citep{bondarenko2021understanding, dettmers2022llm, wei2022outlier, xiao2022smoothquant}, i.e., the outliers in activation always lie in some fixed channels. 
However, we find that the outlier patterns in optimizer states are quite complicated.
Specifically, Fig.~\ref{fig:main-outlier-pattern} shows patterns of first moment in a Swin-T model during training.
Outliers of the \texttt{layers.3.blocks.0.mlp.fc1} layer lie in fixed rows while outliers of the \texttt{layers.3.blocks.0.mlp.fc2} layer lie in fixed columns.
Actually, the outlier patterns vary across different architectures, layers and parameters.
See more patterns in App.~\ref{sec:app-outlier-pattern}.

The complicated outlier pattern makes optimizer states harder to quantize. 
There exist some moment tensors, that the outliers persist roughly in certain columns (dimension 1), as shown in Fig.~\ref{fig:main-histogram-blocksize}.
Block-wise normalization treats this tensor as an flattened one-dimensional sequence, in the row-first order. Therefore, any block size of 2048 would include an entry in the outlier column, resulting in a large quantization scale. In this case, block-wise normalization is not better than per-tensor normalization. 
Therefore, we adopt a smaller block size of 128, as it provides enhanced performance while incurring only a little memory overhead.
In Fig.~\ref{fig:main-histogram-blocksize}, we show that quantizing with a smaller block size approximates the moments better than a large block size.
 

\begin{figure}[t]
    \begin{minipage}{0.34\textwidth}
    \centering
    \includegraphics[width=1.\textwidth]{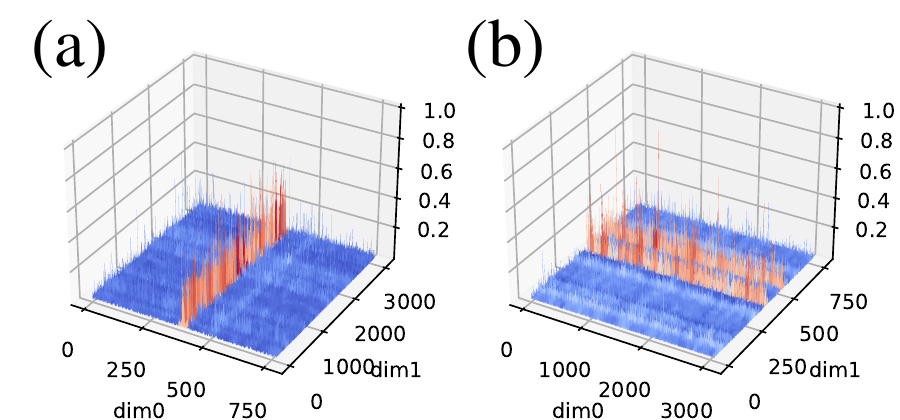}
    \caption{\small{Outlier patterns vary across two first moment tensors. 
    (a): outliers lie in fixed rows (dimension 0).
    (b): outliers lie in fixed columns (dimension 1).}}
    \label{fig:main-outlier-pattern}
    \end{minipage}\hfill
    \begin{minipage}{0.55\textwidth}
    \centering
    \includegraphics[width=1.\textwidth]{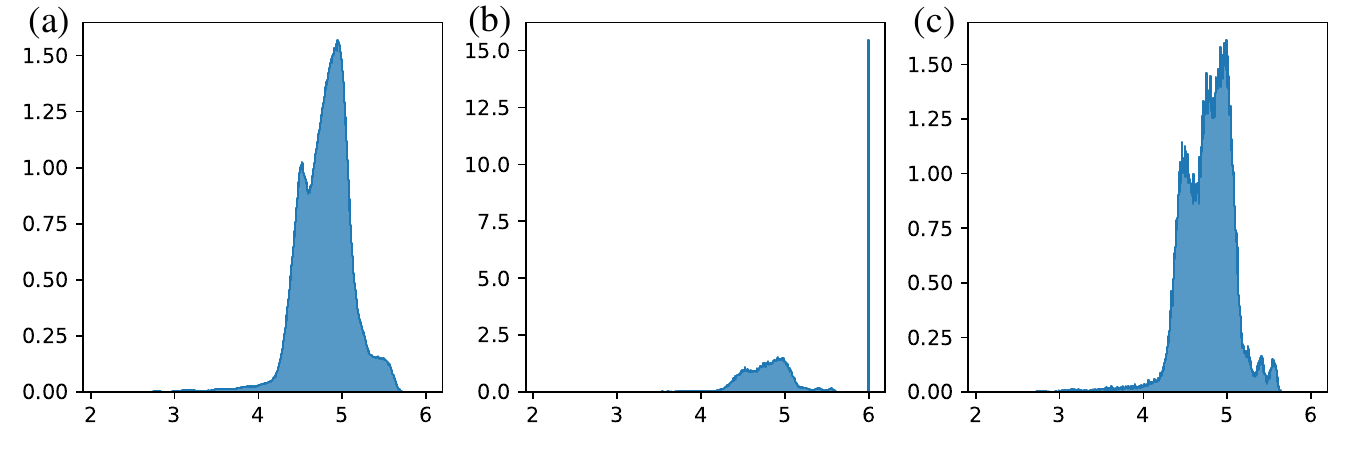}
    \caption{\small{Histogram of the inverse square root of second moment.
    (a) full-precision; (b) quantized with \texttt{B128/DE}; (c) quantized with \texttt{B128/DE-0}.
    All figures are at log10 scale and y-axis represents density.}
    }
    \label{fig:main-histogram-zeropoint}
    \end{minipage}\hfill
    \vspace{-0.03cm}
\end{figure}

\section{Compressing Second Moment}
Compared to the first moment, 
quantizing the second moment is more difficult and incurs training instability. 
Besides its sharper outlier pattern and ill-conditioned distribution compared with first moment, 
we further identify the \emph{zero-point problem} as the main bottleneck of quantizing the second moment.
We propose an improved normalization method with a quantization mapping excluding zero. 
We also propose a factorization method for compressing the second moment, which leads to even better memory efficiency.

\subsection{Zero-point Problem}

The problem of quantizing second moment different from quantizing other tensors in neural networks. 
For weights~\cite{jacob2018quantization}, activations~\cite{jacob2018quantization}, and gradients~\cite{alistarh2017qsgd}, it is desirable to contain zero in the quantization mapping for lower approximation error and training stability. 
Empirically, zero is often the most frequent element~\cite{zeng2023glmb}.
But for second moment in Adam, small values around zero significantly impact the update direction, which is proportional to the inverse square root of the second moment, as shown in Eq.~\ref{eq:adam}. 
In this case, a small quantization error of values around zero will cause catastrophically large deviation of the update direction. 
Fig.~\ref{fig:main-histogram-zeropoint} shows the histogram of the inverse square root (i.e., transformed with $h(v) = 1 / \left( \sqrt{v} + 10^{-6}\right)$) of the second moment quantized with \texttt{B128/DE}.
The quantizer pushes most of entries of the tensor to zero, so the inverse square root of most points fall into $10^6$ due to zero-point problem, resulting in a complete degradation in approximation. 
One remedy is to simply remove zero from the DE quantization map, which we call \texttt{DE-0}. The smallest number representable by \texttt{DE-0} is 0.0033.
With the \texttt{B2048/DE-0} quantizer applied to second moment, the approximation of second moment is more precise (Fig.~\ref{fig:main-histogram-zeropoint}), and the training with 4-bit optimizer stabilizes (Tab.~\ref{tab:main-ablation-sm-gpt2}). 
However, by removing zero, \texttt{DE-0} wastes one of the $2^4=16$ quantization points. 
We propose to use a \texttt{Linear} mapping $\Tv(i) = (i + 1) / {2^b}$, whose smallest representable number is $0.0625$. The linear mapping performs better than \texttt{DE-0} for quantizing second moment.

In Tab.~\ref{tab:main-ablation-sm-gpt2}, we ablate different quantization schemes 
and show that excluding zero from the mapping is indeed the key factor for second moment quantization, 
which cannot be replaced by a smaller block size or/and stochastic rounding~\cite{courbariaux2015binaryconnect}. 
We also test Stable Embedding layers proposed by Dettmers et al.~\cite{dettmers20218}, which are reparameterized embedding layers which can be more stably optimized.
While Stable Embedding could improve training stability, it cannot fully retain accuracy since non-embedding layers still suffer from  zero points. On the other hand, when the zero-point problem is properly addressed, Stable Embedding is no longer required to retain accuracy. 


\begin{table}[htbp]
    \centering
    \caption{\small
    Ablation analysis of 4-bit optimizers on the second moment on the GPT-2 Medium E2E-NLG finetuning task.
    The first line barely turns 8-bit Adam~\citep{dettmers20218} into 4-bit, i.e. B2048/DE for both first and second moments.
    We only vary the quantization scheme for second moment.~
    SR=stochastic rounding (see App.~\ref{sec:app-sr} for details).
    Stable Embedding layers are not quantized. 32-bit AdamW achieves a BLEU  of 67.7.
    }
    \label{tab:main-ablation-sm-gpt2}
    \begin{tabular}{llcccc}
        \toprule
        Normalization & Mapping & Stable Embed.$^*$ & Factorized & Unstable(\%) & BLEU \\
        \midrule
        B2048   & DE     & \ding{55} & \ding{55} & 33 & 66.6 $\pm$ 0.61 \\
        B2048   & DE     & \ding{51} & \ding{55} & 0  & 66.9 $\pm$ 0.52 \\
        \midrule
        B128    & DE     & \ding{55} & \ding{55} & 66 & 65.7 $\pm$ N/A \\
        B128    & DE+SR$^*$& \ding{55} & \ding{55} & 33 & 65.4 $\pm$ 0.02 \\
        B128    & DE     & \ding{51} & \ding{55} & 0  & 67.2 $\pm$ 1.13 \\
        \midrule
        B2048   & DE-0      & \ding{55} & \ding{55} & 0  & 67.5 $\pm$ 0.97 \\
        B2048   & DE-0      & \ding{51} & \ding{55} & 0  & 67.1 $\pm$ 1.02 \\
        B128    & DE-0      & \ding{55} & \ding{55} & 0  & 67.4 $\pm$ 0.59 \\
        Rank-1    & DE-0    & \ding{55} & \ding{55} & 0  & 67.5 $\pm$ 0.58 \\
        Rank-1    & Linear          & \ding{55} & \ding{55} & 0  & \textbf{67.8 $\pm$ 0.51} \\
        \midrule
        Rank-1    & Linear          & \ding{55} & \ding{51} & 0  & \textbf{67.6 $\pm$ 0.33} \\
        \bottomrule
    \end{tabular}
\end{table}

\subsection{Rank-1 Normalization}

We propose an empirically stronger normalization method named \textit{rank-1 normalization}
based on the observation of the heterogeneous outlier pattern in Sec.~\ref{sec:first-order}, inspired by the SM3 optimizer~\citep{anil2019memory}.
Formally, for a matrix-shaped non-negative (second moment) tensor $\xv \in \Rb^{n\times m}$,  its 1-dimensional statistics $\rv \in \Rb^n$ and $\cv \in \Rb^m$ are defined as 
$
    r_i = \max_{1\leq j \leq m}x_{i,j}$ and 
$    c_j = \max_{1\leq i \leq n}x_{i,j}
$,
which are exactly the quantization scales of per-row and per-column normalization.
Rank-1 normalization utilizes two scales jointly and produce a tighter bound for entry, which is defined as 
\begin{align*}
    \Nv_{\text{rank-1}}(x_{i,j}) = \tfrac{1}{\min \set{r_i, c_j}} x_{i,j}.
\end{align*}
Rank-1 normalization could be easily generalized to high-dimensional tensors and signed tensors. See App.~\ref{sec:app-rank-1-normalization} for details and the pseudocode.

Compared with per-tensor, per-token (row), and per-channel (column) normalization, 
rank-1 normalization utilizes the 1-dimensional information in a more fine-grained manner and gives element-wisely unique quantizaion scales.
It deals with the outliers more smartly and effectively 
when outlier persists in fixed rows or columns but the pattern across tensors are unknown and/or varied (Fig.~\ref{fig:main-outlier-pattern}).
On the other hand, 
block-wise normalization is also capable to capture the local information and avoid outliers effectively regardless of the patterns when a small block size is taken, 
but rank-1 normalization provides a better trade-off between memory overhead and approximation. 
Rank-1 normalization falls back to per-tensor normalization for 1-dimensional tensors, so we use \texttt{B128} normalization in those cases.
Empirically, rank-1 normalization match or exceed the performance of \texttt{B128} normalization at a moderate model  scale (hidden\_size=1024). 
It is also possible to combine block-wise and rank-1 normalization together, which we leave for future work.

\subsection{Factorization}
Many memory efficient optimization methods~\cite{anil2019memory, Chen2020Extreme, shazeer2018adafactor} represent the entire second moment with a few number of statistics,
which differ from quantization and they take only sublinear memory cost. 
These works bring more memory saving but they are only applicable to the second moment.
In this work, we find the factorization method proposed in Adafactor~\citep{shazeer2018adafactor} could also avoid zero-point problem effectively, as shown in Tab.~\ref{tab:main-ablation-sm-gpt2}.
Further, we explore the effects of factorization on second moment based on quantized optimizers to attain maximal memory saving while maintain lossless accuracy.
Specifically, when factorization is enabled, we factorize all second moment with dimension greater than 1, 
and quantize 1d second moment tensors.

\section{Experiments}\label{sec:experiments}

We compare our 4-bit optimizers with their full-precision counterparts, as well as other memory efficient optimizers including
8-bit AdamW~\citep{dettmers20218}\footnote{\tiny\url{https://github.com/TimDettmers/bitsandbytes}}, 
Adafactor~\citep{shazeer2018adafactor} and SM3~\citep{anil2019memory}.
8-bit AdamW's optimizer states are not quantized for embedding layers.~
For Adafactor, we compare both the $\beta_1 > 0$ and the $\beta_1 = 0$ (no first moment) configuration.
For our 4-bit optimizers, we report two versions both based on 32-bit AdamW: 
(1) ``4-bit AdamW'' quantizes first moment with 
\texttt{B128/DE} and second moment with \texttt{Rank-1/Linear}.
(2) the more memory efficient ``4-bit Factor'' quantizes first moment with 
\texttt{B128/DE}, factorizes second moment when the dimension of tensor is greater than 1, and quantizes lefted 1-dimensional second moment with \texttt{Rank-1/Linear}.
See App.~\ref{sec:app-exp-details} for details about the experiments.

\paragraph{Models, datasets and hyperparameters}
We report performance metrics on standard benchmarks, including 
image classification (CLS) with Swin-T ~\citep{liu2021swin}\footnote{\tiny \url{https://github.com/microsoft/Swin-Transformer}} on ImageNet-1k~\citep{deng2009imagenet}, 
natural language understanding (NLU) by fine-tuning  RoBERTa-L~\citep{liu2019roberta}\footnote{\tiny \url{https://github.com/huggingface/transformers}} fine-tuning on GLUE~\citep{wang2018glue}, question answering (QA) by fine-tuning RoBERTa-L on  SQuAD~\citep{rajpurkar2018know, rajpurkar2016squad},
natural language generation (NLG) by fine-tuning  GPT-2 Medium~\citep{radford2019language}\footnote{\tiny \url{https://github.com/microsoft/LoRA}} on E2E-NLG~\citep{novikova2017e2e},
machine translation (MT) by training Transformer-Base~\citep{vaswani2017attention}\footnote{\tiny\url{https://github.com/NVIDIA/DeepLearningExamples/tree/master/PyTorch/Translation/Transformer}} on WMT14 en-de~\citep{bojar2014findings} and LLaMA~\citep{touvron2023llama} fine-tuning.
We fine-tune LLaMA-7B, LLaMA-13B and LLaMA-33B~\citep{touvron2023llama} on the Alpaca dataset~\citep{alpaca}\footnote{\tiny \url{https://github.com/tatsu-lab/stanford_alpaca}}
and evaluate them on MMLU~\citep{hendrycks2020mmlu} and standard common sense reasoning benchmarks: HellaSwag~\citep{zellers2019hellaswag}, ARC easy and challenge~\citep{clark2018think} and OpenBookQA~\citep{mihaylov2018can}.


We mainly follow the hyperparameters in the original paper or/and codebase. In each benchmark, we keep same hyperparameters in one optimizer on different quantization schemes, which gives an out-of-box transfer from full-precision optimizer to low-bit optimizer without extra hyperparameter tuning.
See App.~\ref{sec:app-exp-details} for hyperparameters and training details.

\begin{table}[t]
    \centering
    \caption{\small
    Performance on language and vision tasks. 
    Metric: NLU=Mean Accuracy/Correlation. CLS=Accuracy. NLG=BLEU. QA=F1. MT=SacreBleu.
    $^\dagger$: do not quantize optimizer states for embedding layers; 
    $^\ddagger$: $\beta_1 = 0$.
    See more results in App.~\ref{sec:app-detailed-results}.
    }
    \label{tab:main-results}
    \resizebox{\linewidth}{!}{%
    \begin{tabular}{llllll}
        \toprule    
                            & NLU          & CLS   & NLG       & QA        & MT \\
        Optimizer           & RoBERTa-L     & Swin-T& GPT-2 M   & RoBERTa-L & Transformer \\
        \midrule
        32-bit AdamW    
                & 88.9 $\pm$ 0.01   & 81.2 $\pm$ 0.05   & 67.7 $\pm$ 0.67   & 94.6 $\pm$ 0.13 & 26.61 $\pm$ 0.08 \\
        \midrule
        32-bit Adafactor
                & 89.1 $\pm$ 0.00   & 80.0 $\pm$ 0.03   & 67.2 $\pm$ 0.81   & 94.6 $\pm$ 0.14 & 26.52 $\pm$ 0.02 \\
        32-bit Adafactor$^\ddagger$ 
                & 89.3 $\pm$ 0.00   & 79.5 $\pm$ 0.05   & 67.2 $\pm$ 0.63   & 94.7 $\pm$ 0.10 & 26.45 $\pm$ 0.16 \\
32-bit SM3      & 87.5 $\pm$ 0.00   & 79.0 $\pm$ 0.03   & 66.9 $\pm$ 0.58   & 91.7 $\pm$ 0.29 & 22.72 $\pm$ 0.09  \\
~~8-bit AdamW$^\dagger$ 
                & 89.1 $\pm$ 0.00   & 81.0 $\pm$ 0.01   & 67.5 $\pm$ 0.87   & 94.5 $\pm$ 0.04 & 26.66 $\pm$ 0.10 \\
        \midrule
        ~~4-bit AdamW (ours)  
                & 89.1 $\pm$ 0.01   & 80.8 $\pm$ 0.02   & 67.8 $\pm$ 0.51   & 94.5 $\pm$ 0.10 & 26.28 $\pm$ 0.05 \\
        ~~4-bit Factor (ours)
                & 88.9 $\pm$ 0.00   & 80.9 $\pm$ 0.06   & 67.6 $\pm$ 0.33   & 94.6 $\pm$ 0.20 & 26.45 $\pm$ 0.05 \\
        \bottomrule
    \end{tabular}
    }
\end{table}

\paragraph{Accuracy of 4-bit Optimizers}

We first check whether our memory efficient 4-bit optimizers could retain accuracy. 
According to Tab.~\ref{tab:main-results}, 
our 4-bit optimizers can match or exceed  32-bit AdamW performance on all fine-tuning tasks (NLU, QA, and NLG) and are comparable on all pretraining tasks (CLS and MT). 
Sublinear memory optimizers Adafactor ($\beta_1=0$) and SM3 could have better memory efficiency, but they  suffer from performance degradation, particularly on the CLS task. 
According to Tab.~\ref{tab:llama-ft-results}, our 4-bit AdamW will not destroy the capability of pretrained models while enabling them to obtain instruction-following ability.
4-bit AdamW is comparable with 32-bit AdamW on all tasks and does not get worse when the model size grows.
Moreover, their convergence curves closely align (Fig.~\ref{fig:llama-alpaca-curve}). 


\begin{table}[htbp]
    \centering
    \caption{
    Performance on LLaMA fine-tuning on MMLU and commonsense reasoning tasks across different sizes.
    }
    \label{tab:llama-ft-results}
    \begin{tabular}{llccccc}
        \toprule
        Model &Optimizer             & MMLU (5-shot)  & HellaSwag & ARC-e & ARC-c & OBQA \\
        \midrule
        \multirow{3}{*}{LLaMA-7B} 
        & Original               & 33.1 & 73.0 & 52.4 & 40.9 & 42.4 \\
        &32-bit AdamW            & 38.7 & 74.6 & 61.5 & 45.1 & 43.4 \\
        &~~4-bit AdamW           & 38.9 & 74.7 & 61.2 & 44.4 & 43.0 \\
        \midrule 
        \multirow{3}{*}{LLaMA-13B} 
        & Original               & 47.4 & 76.2 & 59.8 & 44.5 & 42.0 \\
        &32-bit AdamW            & 46.5 & 78.8 & 63.6 & 48.3 & 45.2 \\
        &~~4-bit AdamW           & 47.4 & 79.0 & 64.1 & 48.0 & 45.2 \\
        \midrule
        \multirow{3}{*}{LLaMA-33B} 
        & Original               & 54.9 & 79.3 & 58.9 & 45.1 & 42.2 \\
        &32-bit AdamW            & 56.4 & 79.2 & 62.6 & 47.1 & 43.8 \\
        &~~4-bit AdamW           & 54.9 & 79.2 & 61.6 & 46.6 & 45.4 \\
        \bottomrule
    \end{tabular}
\end{table}

\paragraph{Memory and Computing Efficiency} 

We evaluate the memory and computation efficiency of proposed 4-bit optimizers on instruction tuning, NLU, and NLG tasks, in Tab.~\ref{tab:main-mem-time}. Our 4-bit optimizer offers more memory saving compared to 8-bit optimizers, reducing the training memory consumption by up to 57.7\%.
It may look like the memory saving saturates when the bitwidth goes down. This is because we report the total memory consumption (including data, activations, and memory fragments) rather than the optimizer memory consumption alone. In principle, the optimizer states is 2x smaller for 4-bit AdamW than 8-bit AdamW, and about 4x smaller for 4-bit Factor.

The instruction tuning task uses two Nvidia A100 80GB GPUs, while the model is sharded across GPUs with PyTorch's FSDP. In this case, our 4-bit optimizer speeds up training due to reduced communication cost. For the smaller RoBERTa-L and GPT-2 Medium, our 4-bit optimizers appear to be slower than 8-bit AdamW. This is because we have not yet optimize our implementation with fused operators. The speed of our 4-bit optimizers should match or surpass 8-bit optimizers after operator fusion.

We report the largest OPT and LLaMA models trainable under a given memory budget with full-precision and our 4-bit optimizers in Tab.~\ref{tab:largest-finetunable-model}. Our optimizer allows for the training of 4x large OPT models, and enables the training of LLaMA-7B model with a single 80GB GPU.

\begin{figure}[htbp]
    \centering
    \includegraphics[width=0.5\textwidth]{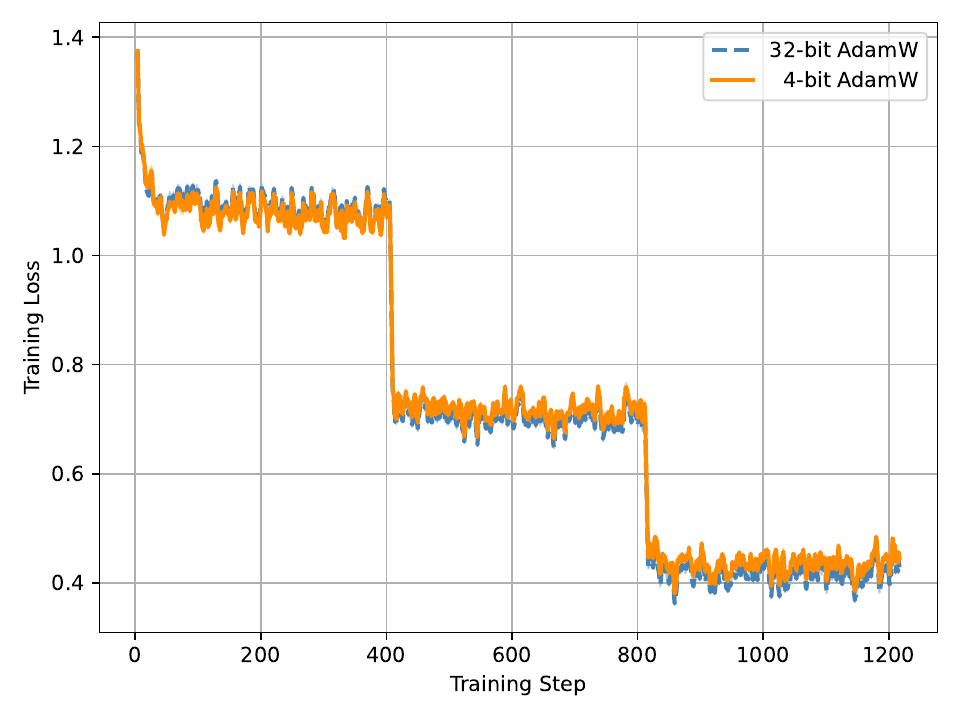}
    \caption{\small Training loss curve of LLaMA-7B fine-tuning on Alpaca dataset (averaged over 3 runs).
    Only result of 4-bit AdamW is reported since all parameters are 1-dimension via FSDP packing.
    See more details in App.~\ref{sec:app-exp-details}.
    }
    \label{fig:llama-alpaca-curve}
\end{figure}

\begin{table}[htbp]
    \centering
    \caption{\small
    Memory and Time of 4-bit optimizers compared with 32-bit AdamW and 8-bit Adam~\citep{dettmers20218}.
    }
    \label{tab:main-mem-time}
    \begin{tabular}{lllll}
        \toprule
        Task &Optimizer                & Time  & Total Mem. &Saved Mem. \\
        \midrule
        \multirow{3}{*}{LLaMA-7B} &32-bit AdamW              & 3.35 h & 75.40 GB &  0.00 GB (0\%) \\
        &~~4-bit AdamW           & 3.07 h & 31.87 GB & 43.54 GB (57.7\%) \\
        &~~4-bit AdamW (fused)   & 3.11 h & 31.88 GB & 43.53 GB (57.7\%) \\
        \midrule 
        \multirow{5}{*}{RoBERTa-L}&32-bit AdamW           & 3.93 min  & 5.31 GB &  0.00 GB (0\%) \\
        &~~8-bit AdamW       &      3.38 min  & 3.34 GB &  1.97 GB (37.1\%) \\
        &~~4-bit AdamW            & 5.59 min  & 3.02 GB &  2.29 GB (43.1\%) \\
        &~~4-bit AdamW (fused)  & 3.17 min  & 3.00 GB &  2.31 GB (43.5\%) \\
        &~~4-bit Factor       & 4.97 min  & 2.83 GB &  2.48 GB (46.7\%)\\
        \midrule
        \multirow{5}{*}{GPT-2 Medium}&32-bit AdamW              & 2.13 h & 6.89 GB &  0.00 GB (0\%) \\
        &~~8-bit AdamW        & 2.04 h & 4.92 GB &  1.97 GB (28.6\%) \\
        &~~4-bit AdamW        & 2.43 h & 4.62 GB &  2.37 GB (34.4\%) \\
        &~~4-bit AdamW (fused) & 2.11 h & 4.62 GB &  2.37 GB (34.4\%) \\
        &~~4-bit Factor      & 2.30 h & 4.44 GB &  2.45 GB (35.6\%)\\
        \bottomrule
    \end{tabular}
\end{table}

\paragraph{Ablation Study}

Finally, we investigate the effectiveness of our proposed quantization schemes and the sensitivity of each moment to quantization in Tab.~\ref{tab:main-ablation}. 
We see that quantizing both the first and second moment brings a marginal drop in accuracy.
Smaller block size on first moment improves accuracy and takes a step towards lossless performance.
Factorizing second moment improves accuracy while leads to better memory efficiency.

\begin{table}[t]
  \begin{minipage}{0.45\linewidth}
    \centering
    \small
    \caption{
    Largest trainable model under given memory budget. 
    We use a batch size of 1 and max length of 512 for this comparison.
    FSDP is enabled at GPUs of 80 GB. 
    }
    \label{tab:largest-finetunable-model}
    \resizebox{\linewidth}{!}{%
    \begin{tabular}{cll}
        \toprule
        & \multicolumn{2}{c}{Largest fine-tunable Model} \\
        \cmidrule(lr){2-3}
        GPU Mem. & 32-bit AdamW & 4-bit AdamW \\
        \midrule
        24 GB & OPT-350M       & OPT-1.3B \\
        80 GB & OPT-1.3B       & OPT-6.7B \\
        80 GB & -              & LLaMA-7B \\
        \bottomrule
    \end{tabular}%
    }
  \end{minipage}
  \hfill
  \begin{minipage}{0.52\linewidth}
    \centering
    \small
    \caption{Ablation study on the impact of compressing different moments to Swin-T pretraining on ImageNet1k.}
    \label{tab:main-ablation}
    \resizebox{\linewidth}{!}{%
    \begin{tabular}{cccc}
        \toprule
        Quant. 1st  & Quant. 2nd& Factor. 2nd  & Acc. \\
        \midrule
        -           & -         & \ding{55}     & 81.2 $\pm$ 0.05 \\
        B2048/DE    & -         & \ding{55}     & 80.9 $\pm$ 0.04 \\
        B128/DE     & -         & \ding{55}     & 81.0 $\pm$ 0.06 \\
        B128/DE     & Rank-1/Linear & \ding{55} & 80.8 $\pm$ 0.02 \\
        B128/DE     & Rank-1/Linear & \ding{51} & 80.9 $\pm$ 0.06 \\
        \bottomrule
    \end{tabular}%
    }
  \end{minipage}
\end{table}

\section{Related Work}


\paragraph{Compression-based memory efficient optimizers}

There have been some works trying to approximate the gradient statistics with sublinear memory cost relative to the number of parameters.
Adafactor~\citep{shazeer2018adafactor} achieves memory reduction by approximating the second-moment of matrix-shaped parameters with the outer product of ``row'' and ``column''.
SM3~\citep{anil2019memory} considers the cover of parameters and maintain one statistics for each element in the cover.
Experimentally, cover composed of slices of co-dimension 1 for each tensor has been adopted. 
Extreme tensoring~\citep{Chen2020Extreme}, compared with SM3, has a different motivation but similar formulation for the experimental choice.
These memory efficient methods only focus on memory reduction on second moment and are applicable to most of the second moment based optimizers, like Adagrad~\citep{duchi2011adaptive}, Adam~\citep{kingma2015adam}, AdamW~\citep{loshchilov2017decoupled}, etc. 



Another line of work achieves memory reduction by using compression-based method 
and maintaining coordinate-wise low-bit precision optimizer states.
The stability of 16-bit Adam is firstly studied in DALL-E~\citep{ramesh2021zero}.
Dettmers et al.~\cite{dettmers20218} uses block-wise and dynamic exponent quantization to reduce the coordinate-wise optimizer states from 16-bit to 8-bit and is applicable to SGDM and Adam/AdamW. 
Compression-based methods only reduce memory by a fixed percentage, irrelevant to the number and shape of parameters. Compression-based methods are less memory efficient compared with  sublinear memory  methods, but exhibit superior performance across a wider range of benchmarks. 


\paragraph{Other memory efficient techniques}
Some works focus on the memory efficiency of activations.
Activation compressed training~\citep{chen2021actnn, liu2022gact} keeps low-precision activations in GPU memory via quantization in forward phase and dequantize the low-precision activations to full-precision layer-wisely in backward phase.
Gradient checkpointing~\citep{chen2016training} only keeps activation in a small number of layers in forward phase and recompute the activations in all layers when gradient computation is needed, which leads a trade-off between storage overhead of activations and additional computation cost.  These methods can be combined with our optimizer for better memory efficiency. 
LoRA~\citep{hu2022lora} freezes the pretrained weights and only tunes new initialized low-rank parameters, which greatly reduce the size of computation graph but is only applicable to language model finetuning.

Sharding~\citep{rajbhandari2020zero} divides model parameters and optimizer states among multiple devices, allowing for more efficient model training through the use of additional GPUs. 
Offloading~\citep{rajbhandari2021zero-infty, ren2021zero-offload} reduces GPU memory consumption by transferring data to CPU memory. 
Our optimizer can be utilized by these methods to reduce the communication overhead. 

\section{Conclusions, Limitations, and Broader Impact}
We propose compression-based memory efficient 4-bit optimizers using quantization and factorization.
We observe the heterogeneous outlier patterns in optimizer states and 
identify the zero-point problem in quantizing second moment as the main bottleneck.
4-bit optimizers achieve lossless performance in finetuning and comparable accuracy in pretraining on a wide range of tasks.

\paragraph{Limitations}
The  optimal quantization setting probably depends on task, datasets, and training details.
While rank-1 normalization and linear mapping for second moment quantization performs consistently well in our experiments, task-specific quantization settings not in the scope of the study might perform better and be helpful to achieve lossless performance.
Our evaluation is currently limited to language and vision tasks, while the applicability of our method to reinforcement learning, audios, and graph learning tasks still needs further study.



\paragraph{Broader Impact}
Our work can facilitate the access to large models for pretraining and finetuning, which were previously constrained by GPU memory limitations. 
This could help democratizing large models and opens up new avenues for research that were previously unattainable due to restricted GPU memory, especially benefiting researchers with limited resources. However, our work can also exaggerate the abuse large models.


\section*{Acknowledgements}
This work was supported by the National Key Research and Development Program of China (No.~2021ZD0110502), NSFC Projects (Nos.~62061136001, 62106123, 62076147, U19A2081, 61972224, 62106120), Tsinghua Institute for Guo Qiang, and the High Performance Computing Center, Tsinghua University. J.Z is
also supported by the XPlorer Prize.

\bibliography{refs}
\bibliographystyle{plain}

\newpage
\appendix

\section{Additional Experiment Results}\label{sec:app-detailed-results}

In this section, we show additional experiment results beyond Tab.~\ref{tab:main-results}.
Tab.~\ref{tab:roberta-glue-results} shows the results of RoBERTa-L finetuning on each task in GLUE datasets.
Tab.~\ref{tab:gpt2-e2e-results} shows the results of GPT-2 Medium finetuning on E2E-NLG via different metrics.
Tab.~\ref{tab:roberta-squad-results} shows the EM and F1 of RoBERTa-L finetuning on SQuAD and SQuAD 2.0 datasets.

\begin{table}[htbp]
    \centering
    \caption{
    Performance of RoBERTa-Large finetuning on GLUE with diverse optimizers.
    Medians and std over 5 runs are reported on all tasks.
    $^\dagger$: do not quantize optimizer states for embedding layers; 
    $^\ddagger$: $\beta_1 = 0$.
    }
    \label{tab:roberta-glue-results}
    \resizebox{\linewidth}{!}{%
    \begin{tabular}{lccccccccc}
        \toprule
        Optimizer  & MNLI & QNLI & QQP  & RTE  & SST-2 & MRPC & CoLA & STS-B \\ 
        \midrule
        32-bit AdamW         
        & 90.2 $\pm$ 0.00 & 94.9 $\pm$ 0.00 & 92.2 $\pm$ 0.00 & 85.2 $\pm$ 0.14 & 96.3 $\pm$ 0.00 & 93.2 $\pm$ 0.01 & 66.9 $\pm$ 0.01 & 92.3 $\pm$ 0.00 \\
        \midrule
        32-bit Adafactor     
        & 90.4 $\pm$ 0.00 & 94.7 $\pm$ 0.00 & 92.2 $\pm$ 0.00 & 85.9 $\pm$ 0.02 & 96.3 $\pm$ 0.00 & 92.8 $\pm$ 0.00 & 67.3 $\pm$ 0.01 & 92.3 $\pm$ 0.00 \\
        32-bit Adafactor$^\ddagger$ 
        & 90.5 $\pm$ 0.00 & 94.8 $\pm$ 0.00 & 92.2 $\pm$ 0.00 & 87.0 $\pm$ 0.03 & 96.3 $\pm$ 0.00 & 92.9 $\pm$ 0.00 & 68.2 $\pm$ 0.01 & 92.2 $\pm$ 0.00 \\
        32-bit SM3 
        & 90.6 $\pm$ 0.00 & 94.2 $\pm$ 0.00 & 89.5 $\pm$ 0.00 & 85.2 $\pm$ 0.02 & 96.0 $\pm$ 0.00 & 90.5 $\pm$ 0.01 & 62.3 $\pm$ 0.04 & 91.4 $\pm$ 0.01 \\
        ~~8-bit AdamW$^\dagger$ 
        & 90.4 $\pm$ 0.00 & 94.8 $\pm$ 0.00 & 92.2 $\pm$ 0.00 & 84.8 $\pm$ 0.02 & 96.2 $\pm$ 0.00 & 93.2 $\pm$ 0.00 & 68.0 $\pm$ 0.00 & 92.2 $\pm$ 0.00 \\
        \midrule
        ~~4-bit AdamW           
        & 90.2 $\pm$ 0.00 & 94.5 $\pm$ 0.00 & 92.0 $\pm$ 0.00 & 85.2 $\pm$ 0.12 & 96.3 $\pm$ 0.00 & 92.8 $\pm$ 0.00 & 67.3 $\pm$ 0.01 & 92.5 $\pm$ 0.00 \\
        ~~4-bit Factor          
        & 90.1 $\pm$ 0.00 & 94.7 $\pm$ 0.00 & 92.2 $\pm$ 0.00 & 85.9 $\pm$ 0.00 & 96.4 $\pm$ 0.00 & 92.7 $\pm$ 0.00 & 68.1 $\pm$ 0.00 & 92.3 $\pm$ 0.00 \\
        \bottomrule
    \end{tabular}%
    }
\end{table}

\begin{table}[htbp]
    \centering
    \caption{
    Performance of GPT-2 Medium finetuning on E2E-NLG Challenge with diverse optimizers.
    Means and std over 3 runs are reported.
    }
    \label{tab:gpt2-e2e-results}
    \resizebox{\linewidth}{!}{%
    \begin{tabular}{lccccc}
        \toprule
        Optimizer & BLEU & NIST & METEOR & ROUGE-L & CIDEr \\
        \midrule
    32-bit AdamW 
    & 67.7 $\pm$ 0.67   & 8.60 $\pm$ 0.08 & 45.7 $\pm$ 0.28 & 68.7 $\pm$ 0.61 & 2.35 $\pm$ 0.04 \\
    \midrule
    32-bit Adafactor
    & 67.2 $\pm$ 0.81   & 8.61 $\pm$ 0.60 & 45.3 $\pm$ 0.08 & 68.3 $\pm$ 0.22 & 2.35 $\pm$ 0.01 \\
    32-bit Adafactor$^\ddagger$ 
    & 67.2 $\pm$ 0.63   & 8.54 $\pm$ 0.09 & 45.6 $\pm$ 0.32 & 68.5 $\pm$ 0.30 & 2.32 $\pm$ 0.02 \\
    32-bit SM3   
    & 66.9 $\pm$ 0.58   & 8.59 $\pm$ 0.04 & 45.4 $\pm$ 0.32 & 68.2 $\pm$ 0.49 & 2.33 $\pm$ 0.03 \\
    ~~8-bit AdamW$^\dagger$ 
    & 67.5 $\pm$ 0.87   & 8.59 $\pm$ 0.08 & 45.7 $\pm$ 0.52 & 68.7 $\pm$ 0.97 & 2.34 $\pm$ 0.06 \\
    \midrule
    ~~4-bit AdamW  
    & 67.8 $\pm$ 0.51   & 8.61 $\pm$ 0.08 & 45.8 $\pm$ 0.23 & 68.9 $\pm$ 0.33 & 2.35 $\pm$ 0.07 \\
    ~~4-bit Factor 
    & 67.6 $\pm$ 0.33   & 8.59 $\pm$ 0.03 & 45.6 $\pm$ 0.43 & 68.6 $\pm$ 0.60 & 2.34 $\pm$ 0.06 \\
        \bottomrule
    \end{tabular}%
    }
\end{table}

\begin{table}[htbp]
    \centering
    \caption{Performance of RoBERTa-Large on SQuAD and SQuAD 2.0 with diverse optimizers. Medians and std over 5 runs are reported.
    }
    \label{tab:roberta-squad-results}
    \begin{tabular}{lcccc}
        \toprule
        & \multicolumn{2}{c}{SQuAD} & \multicolumn{2}{c}{SQuAD 2.0}\\
        Optimizer & EM & F1 & EM & F1 \\
        \midrule
        32-bit AdamW        
        & 89.0 $\pm$ 0.10 & 94.6 $\pm$ 0.13 & 85.8 $\pm$ 0.18 & 88.8 $\pm$ 0.15  \\
        \midrule
        32-bit Adafactor    
        & 88.8 $\pm$ 0.12 & 94.6 $\pm$ 0.14 & 85.8 $\pm$ 0.44 & 88.7 $\pm$ 0.21  \\
        32-bit Adafactor$^\ddagger$ 
        & 89.0 $\pm$ 0.18 & 94.7 $\pm$ 0.10 & 85.9 $\pm$ 0.15 & 88.8 $\pm$ 0.15 \\
        32-bit SM3          
        & 84.2 $\pm$ 0.49 & 91.7 $\pm$ 0.29 & 77.2 $\pm$ 0.71 & 81.1 $\pm$ 0.66 \\
        ~~8-bit AdamW$^\dagger$ 
        & 88.8 $\pm$ 0.15 & 94.5 $\pm$ 0.04 & 86.1 $\pm$ 0.26 & 89.0 $\pm$ 0.26 \\
        \midrule
        ~~4-bit AdamW       
        & 88.8 $\pm$ 0.08 & 94.5 $\pm$ 0.10 & 85.4 $\pm$ 0.28 & 88.4  $\pm$ 0.26 \\
        ~~4-bit Factor      
        & 88.8 $\pm$ 0.38 & 94.6 $\pm$ 0.20 & 85.9 $\pm$ 0.36 & 88.9  $\pm$ 0.18 \\
        \bottomrule
    \end{tabular}
\end{table}




\newpage
\section{Outlier Patterns of Moments}\label{sec:app-outlier-pattern}


In this section, we give a comprehensive visualization about the outlier pattern of optimizer states. 
\cite{anil2019memory} did similar analysis for Adagrad's second moment but here we give a better demonstration about various patterns in optimizer states.
The same technique has been applied to parameters and activations in~\citep{xiao2022smoothquant}.

The outlier pattern of first moment depends on many factors such as data and training hyperparameters.
Here we mainly focus on different transformer models and different layers inside. 
In one transformer block, there is one Attention module and one MLP module, including 6 main parameter matrices. 
We do not focus on additional parameters including bias and parameters in layer normalization~(LN) since they only account a small portion. 
We denote the 6 matrices by $\Wv^Q, \Wv^K, \Wv^V, \Wv^O, \Wv^1, \Wv^2$, respectively. 
When the matrices across different layers are involved at the same time, we add a subscript indicating layer index. 
Note that $\Wv$ has shape $\Rb^{C_\text{o} \times C_\text{i}}$ in a linear layer, 
we call the output and input dimension by dim0/row and dim1/column, respectively.
The per-channel quantizaition used in other works actually correspond to per-row(dim0) normalization here. 

\paragraph{Swin Transformer ImageNet pretraining}
In Fig.~\ref{fig:outlier-pattern-swin-l0b0},\ref{fig:outlier-pattern-swin-l2b0},\ref{fig:outlier-pattern-swin-l3b0}, the magnitude of first moment in transformer blocks at different depths are shown. 
It can be seen that the 1-dimensional structure in all parameter matrices are vague at the initial layer.
At layer 2, the pattern in $\Wv^O$ and $\Wv^1$, that outliers occur at fixed columns, becomes obvious while other parameter matrices remain noisy.
At layer 3, the patterns in $\Wv^O, \Wv^1, \Wv^2$ are quite obvious. 
In $\Wv^O$ and $\Wv^2$, the outliers occur at fixed rows while the pattern in $\Wv^1$ remain unchanged.
The 1-dimensional structure in $\Wv^Q, \Wv^K, \Wv^V$ also seem to appear even though not remarkable.
It is notable that the pattern of same parameter matrix at different depths are not necessarily same. See $\Wv^O$ in Fig.~\ref{fig:outlier-pattern-swin-l2b0},\ref{fig:outlier-pattern-swin-l3b0}.

\begin{figure}[htbp]
    \centering
    \includegraphics[width=0.8\textwidth]{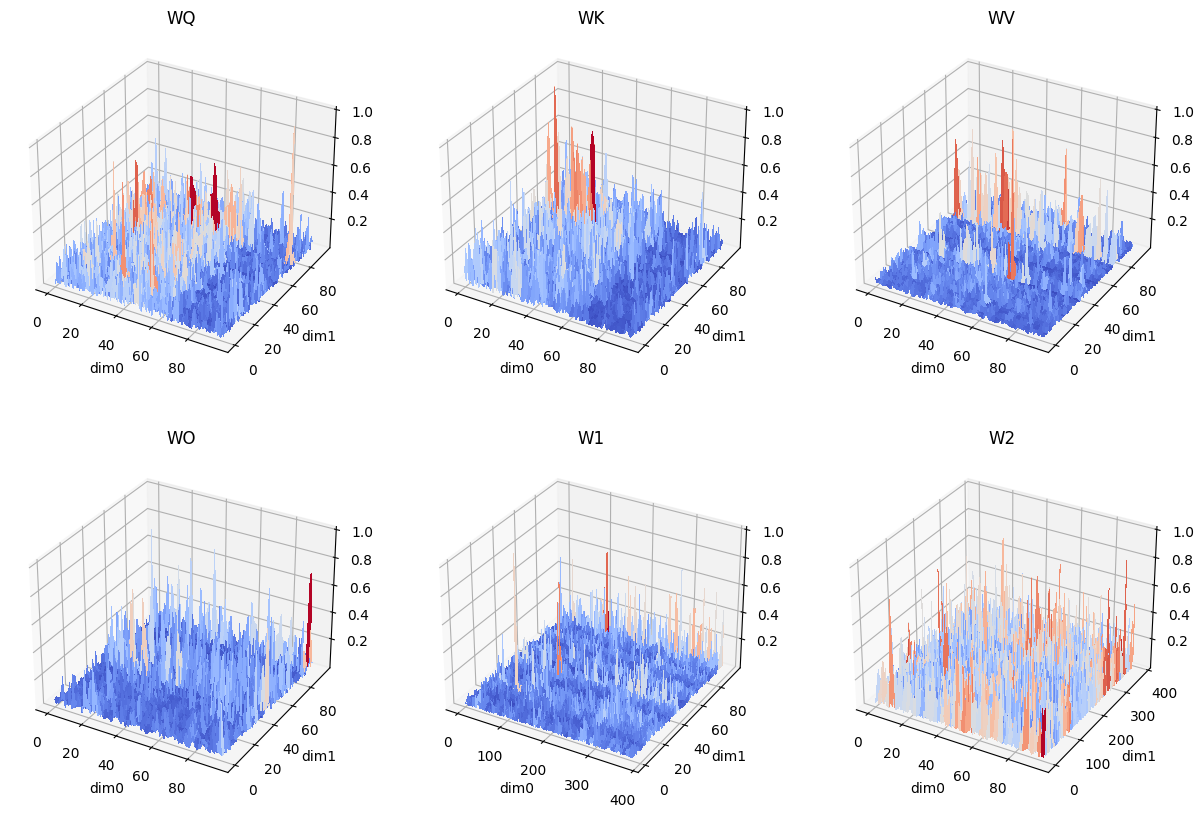}
    \caption{Outlier patterns of first moment in transformer block \texttt{layers.0.blocks.0} of Swin-T at epoch 210.}
    \label{fig:outlier-pattern-swin-l0b0}
\end{figure}

\begin{figure}[htbp]
    \centering
    \includegraphics[width=0.8\textwidth]{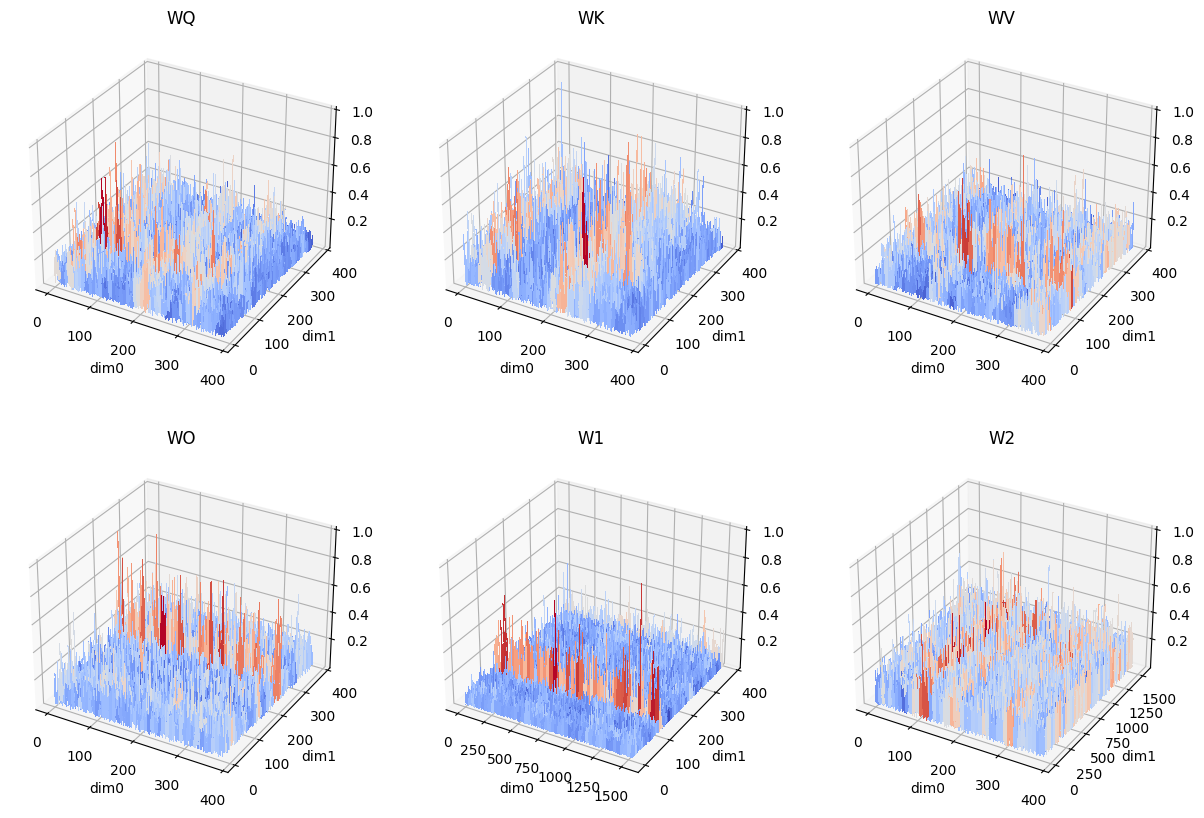}
    \caption{Outlier patterns of first moment in transformer block \texttt{layers.2.blocks.0} of Swin-T at epoch 210.}
    \label{fig:outlier-pattern-swin-l2b0}
\end{figure}

\begin{figure}[htbp]
    \centering
    \includegraphics[width=0.8\textwidth]{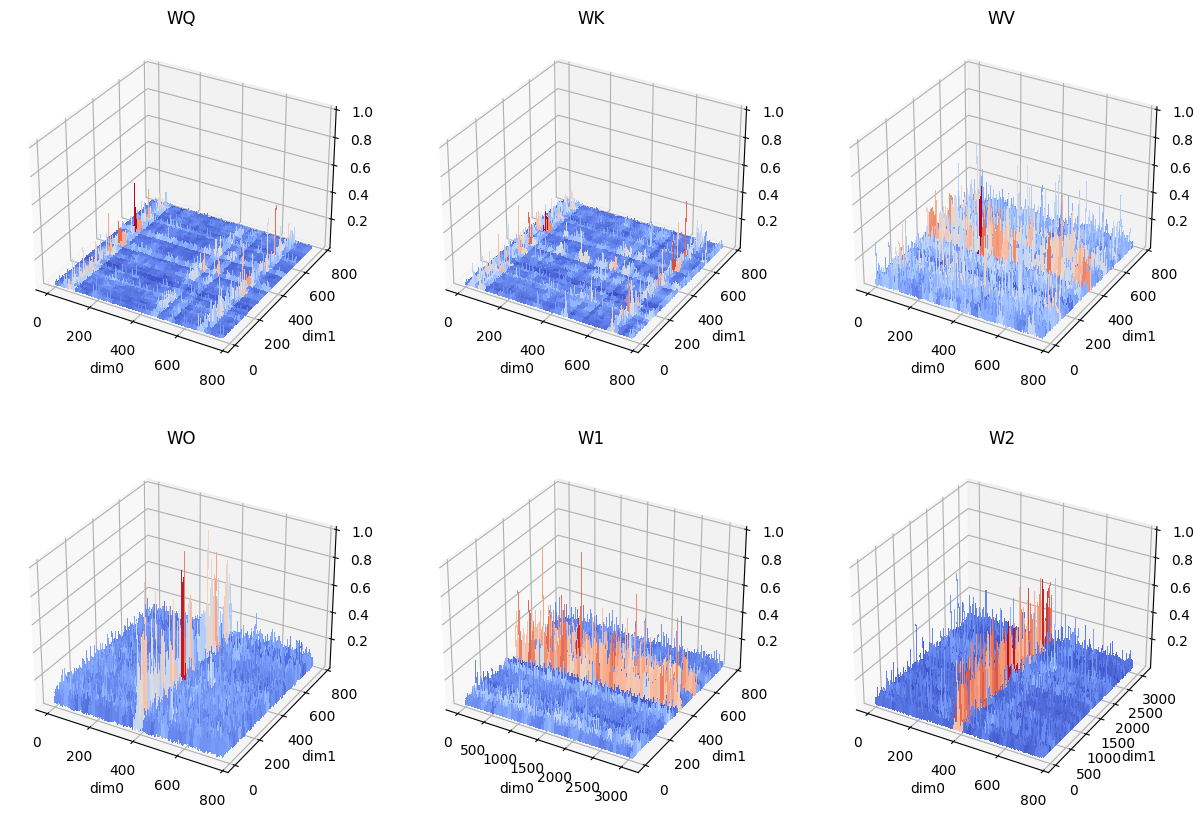}
    \caption{Outlier patterns of first moment in transformer block \texttt{layers.3.blocks.0} of Swin-T at epoch 210.}
    \label{fig:outlier-pattern-swin-l3b0}
\end{figure}

\newpage
\paragraph{RoBERTa-Large GLUE finetuning}

In Fig.~\ref{fig:outlier-pattern-roberta-l0},\ref{fig:outlier-pattern-roberta-l1},\ref{fig:outlier-pattern-roberta-l11},\ref{fig:outlier-pattern-roberta-l12},\ref{fig:outlier-pattern-roberta-l22},\ref{fig:outlier-pattern-roberta-l23}, the magnitude of first moment in transformer blocks of RoBERTa-Large at different depths are shown. 
At layer 0 and layer 1 (initial layers), patterns in $\Wv^O, \Wv^2$ are obvious.
At layer 11 and layer 12 (intermediate layers), patterns are all noisy.
At layer 22 and layer 23 (last layers), patterns in $\Wv^Q, \Wv^K$ are obvious.
Patterns in other matrices are weak.

\begin{figure}[htbp]
    \centering
    \includegraphics[width=0.8\textwidth]{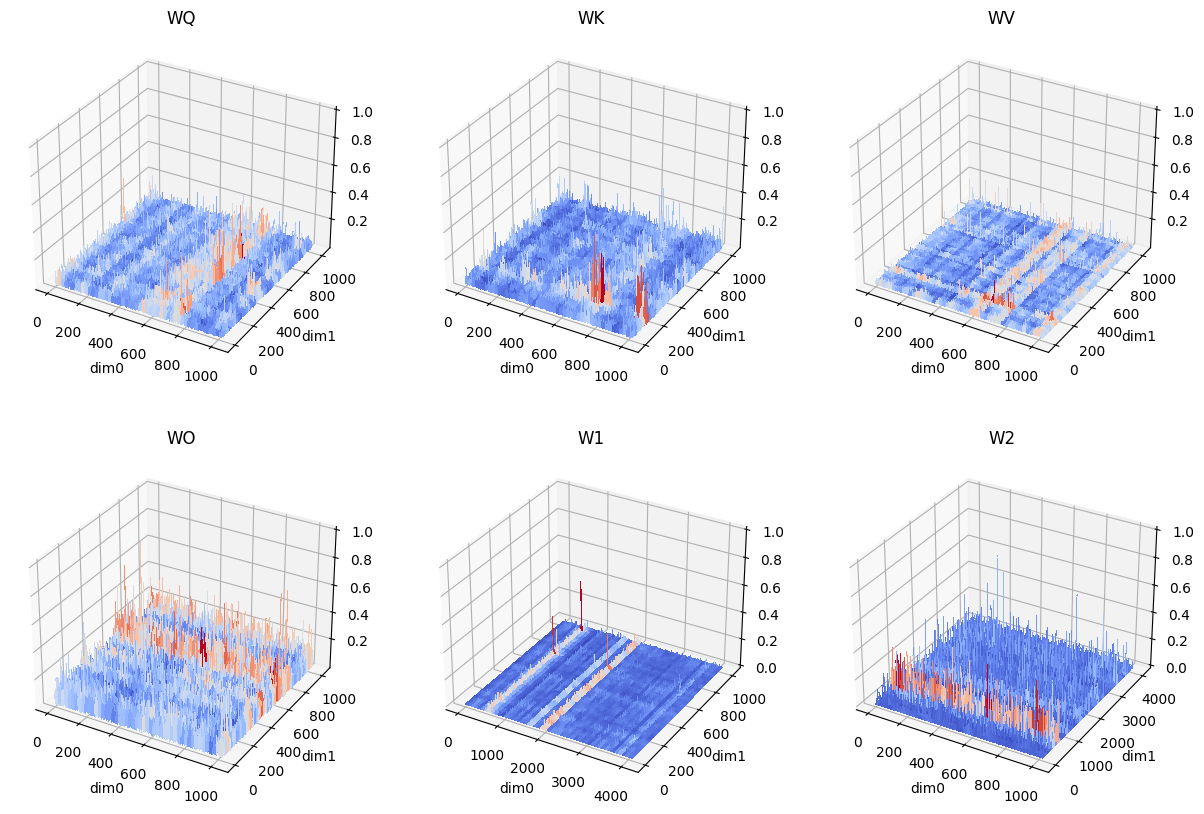}
    \caption{Outlier patterns of first moment in transformer block layer-0 of RoBERTa-Large at epoch 8.}
    \label{fig:outlier-pattern-roberta-l0}
\end{figure}

\begin{figure}[htbp]
    \centering
    \includegraphics[width=0.8\textwidth]{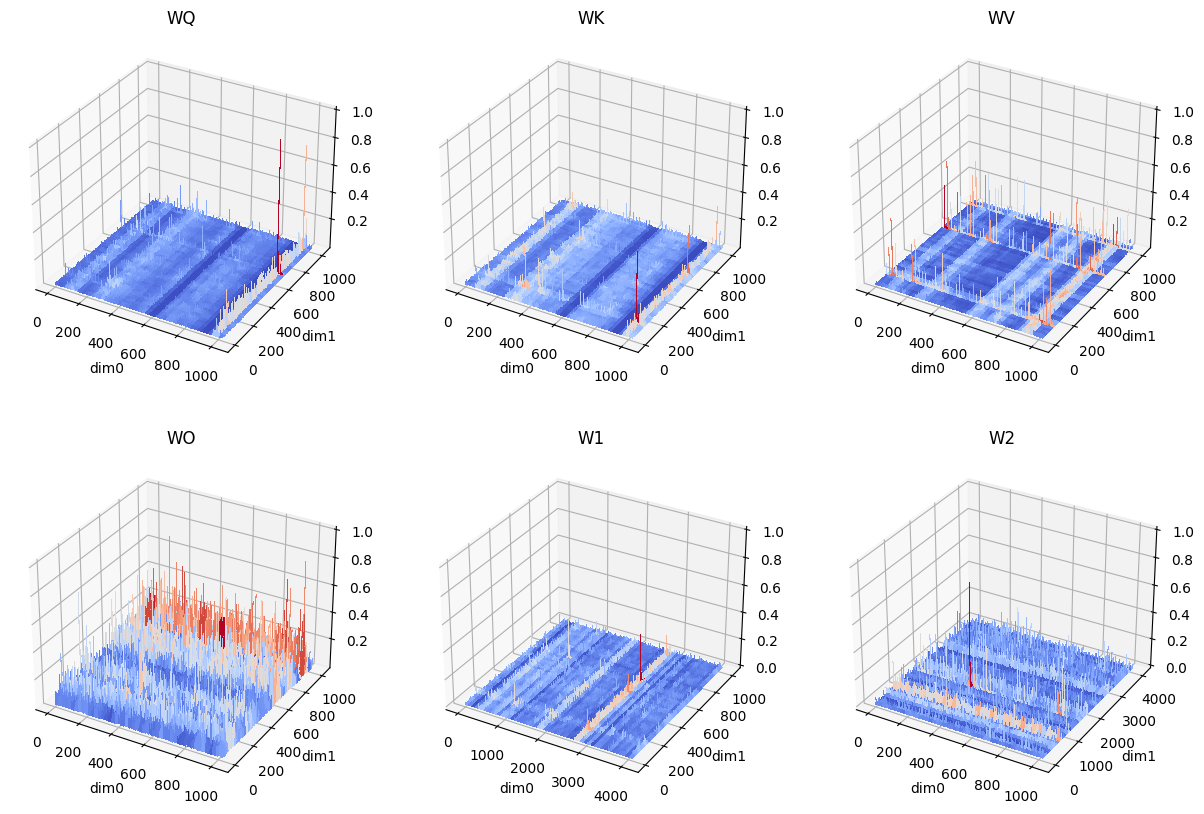}
    \caption{Outlier patterns of first moment in transformer block layer-1 of RoBERTa-Large at epoch 8.}
    \label{fig:outlier-pattern-roberta-l1}
\end{figure}

\begin{figure}[htbp]
    \centering
    \includegraphics[width=0.8\textwidth]{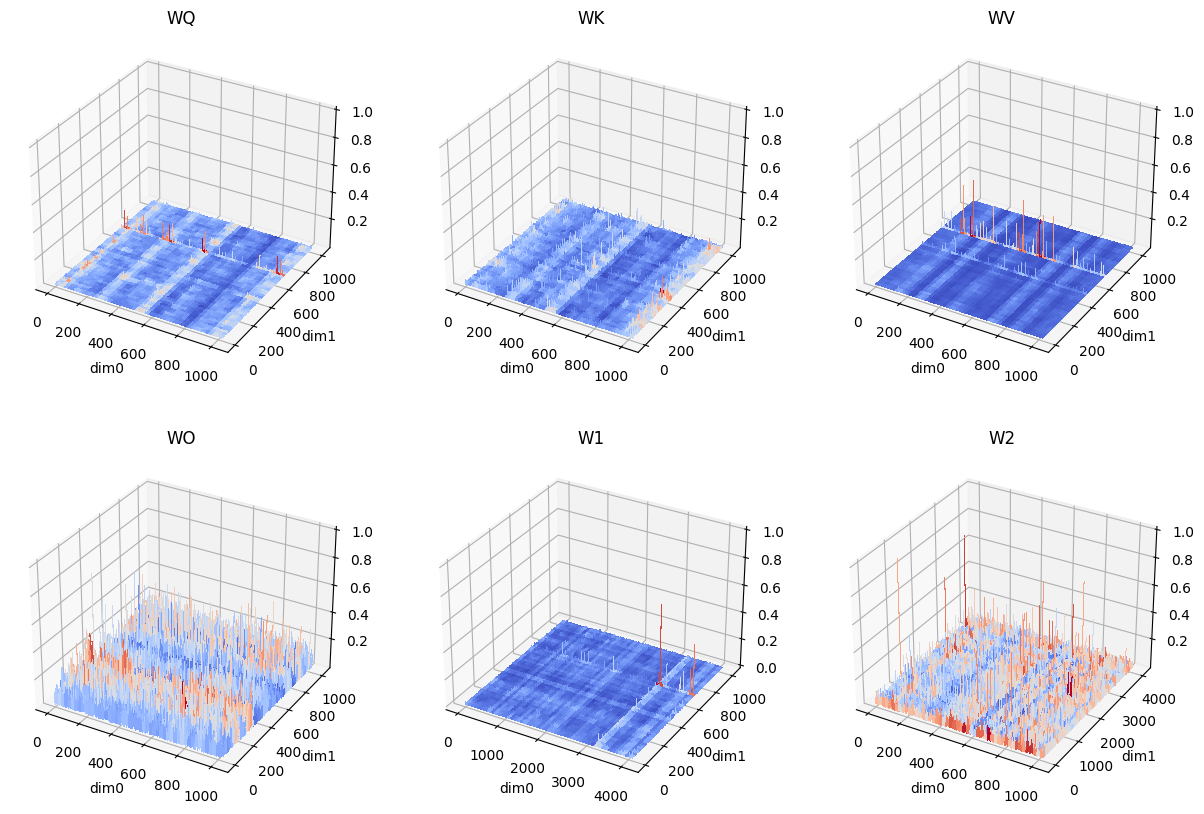}
    \caption{Outlier patterns of first moment in transformer block layer-11 of RoBERTa-Large at epoch 8.}
    \label{fig:outlier-pattern-roberta-l11}
\end{figure}

\begin{figure}[htbp]
    \centering
    \includegraphics[width=0.8\textwidth]{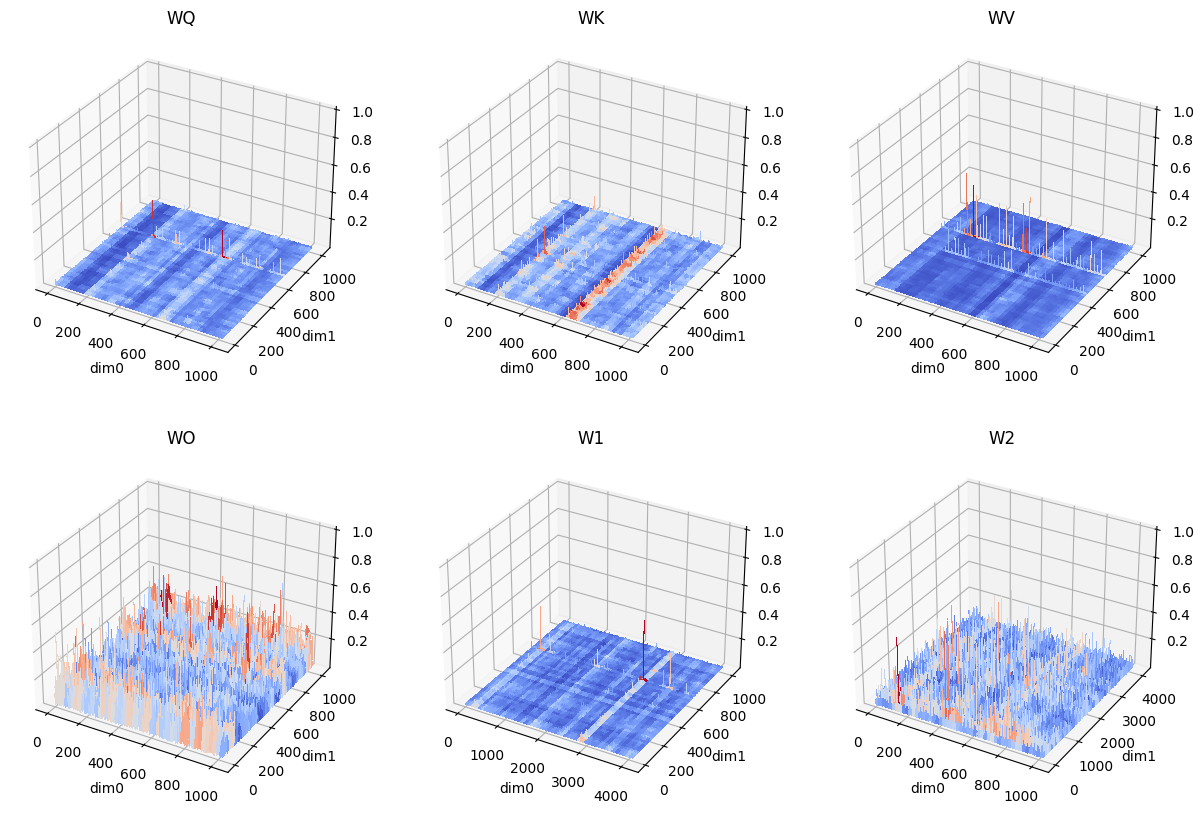}
    \caption{Outlier patterns of first moment in transformer block layer-12 of RoBERTa-Large at epoch 8.}
    \label{fig:outlier-pattern-roberta-l12}
\end{figure}

\begin{figure}[htbp]
    \centering
    \includegraphics[width=0.8\textwidth]{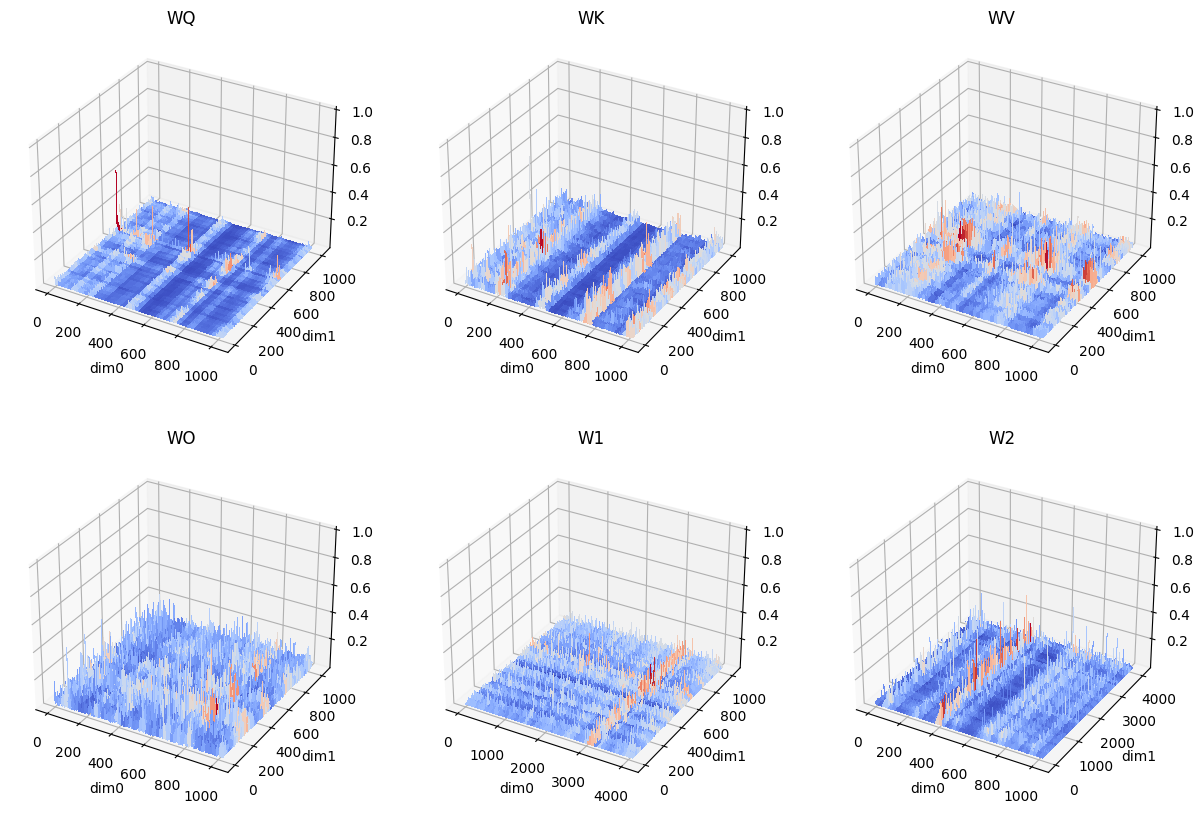}
    \caption{Outlier patterns of first moment in transformer block layer-22 of RoBERTa-Large at epoch 8.}
    \label{fig:outlier-pattern-roberta-l22}
\end{figure}

\begin{figure}[htbp]
    \centering
    \includegraphics[width=0.8\textwidth]{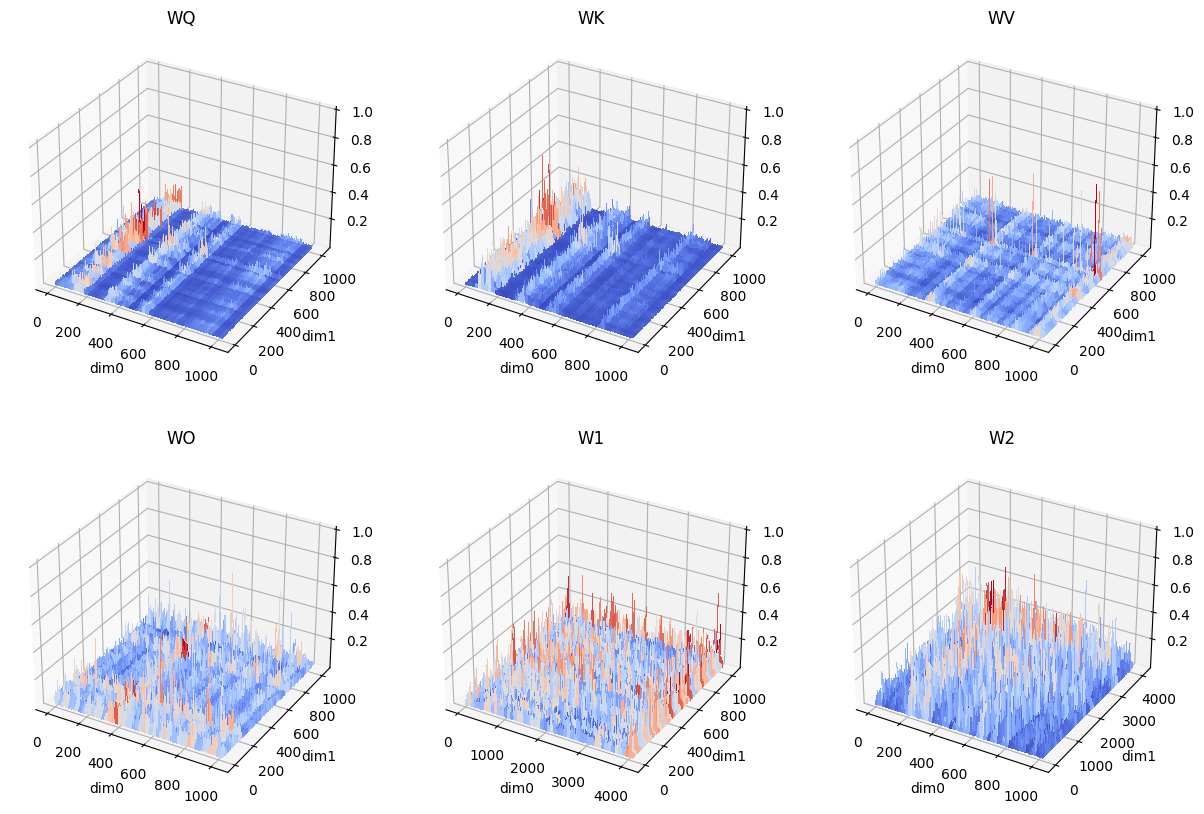}
    \caption{Outlier patterns of first moment in transformer block layer-23 of RoBERTa-Large at epoch 8.}
    \label{fig:outlier-pattern-roberta-l23}
\end{figure}

\newpage
\paragraph{GPT-2 Medium E2E-NLG finetuning}

In Fig.~\ref{fig:outlier-pattern-gpt2-l1},\ref{fig:outlier-pattern-gpt2-l2},\ref{fig:outlier-pattern-gpt2-l13},\ref{fig:outlier-pattern-gpt2-l14},\ref{fig:outlier-pattern-gpt2-l21},\ref{fig:outlier-pattern-gpt2-l22}, the magnitude of first moment in transformer blocks of GPT-2 Medium at different depths are shown.
At layer 1 and layer 2 (initial layers), patterns in $\Wv^O$ are obvious.
At layer 13 and layer 14 (intermediate layers), patterns in $\Wv^K, \Wv^O$ are obvious.
At layer 21 and layer 22 (last layers), patterns in $\Wv^Q, \Wv^K, \Wv^V, \Wv^O$ are obvious.
First moment of $\Wv^1, \Wv^2$ are consistently noisy throughout layers.
It is notable that the rows(or columns) that gather outliers are different across different layers.

\begin{figure}[htbp]
    \centering
    \includegraphics[width=0.8\textwidth]{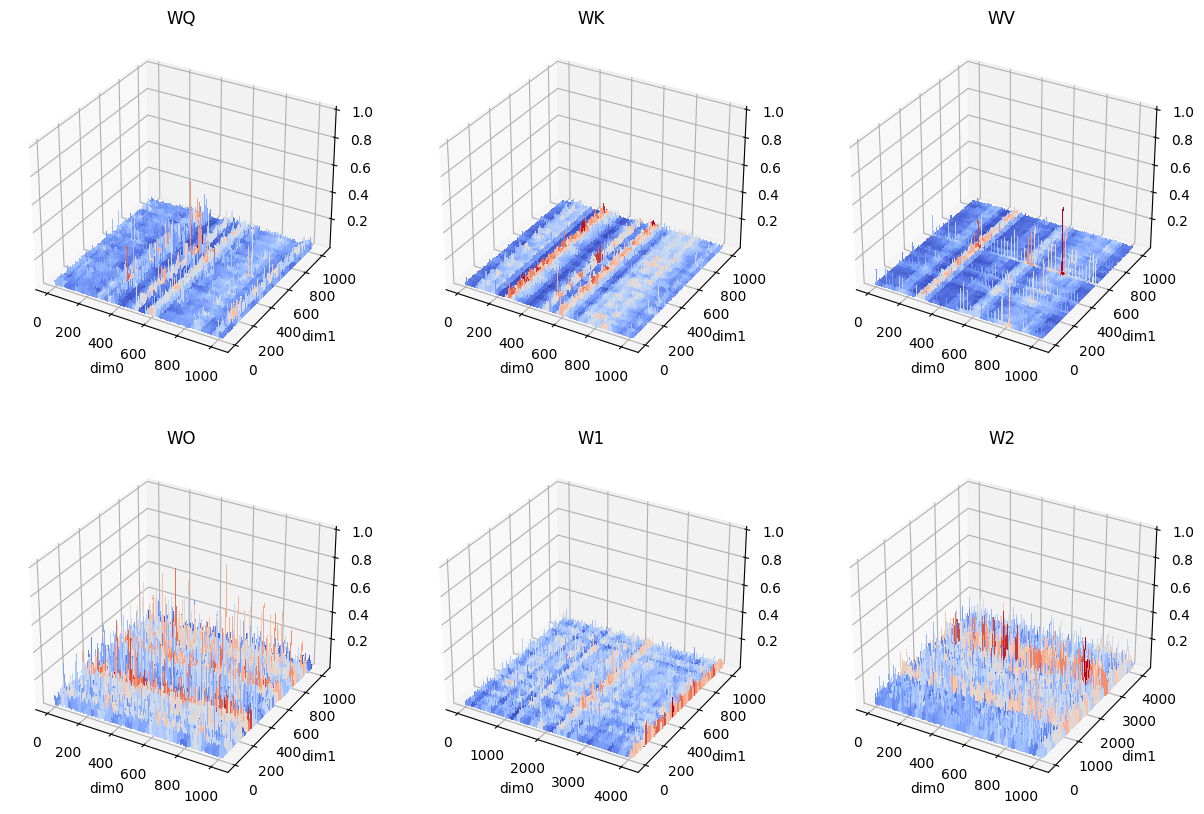}
    \caption{Outlier patterns of first moment in transformer block layer-1 of GPT-2 Medium at epoch 2.}
    \label{fig:outlier-pattern-gpt2-l1}
\end{figure}

\begin{figure}[htbp]
    \centering
    \includegraphics[width=0.8\textwidth]{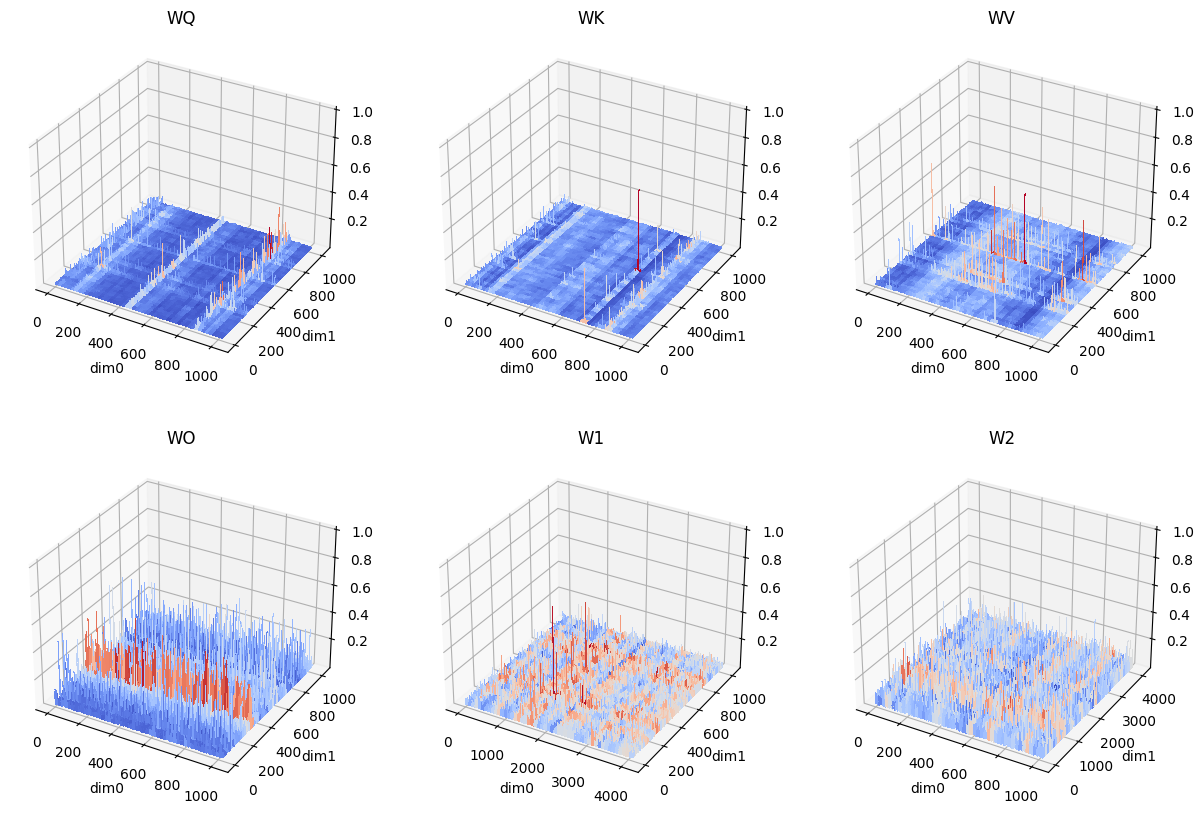}
    \caption{Outlier patterns of first moment in transformer block layer-2 of GPT-2 Medium at epoch 2.}
    \label{fig:outlier-pattern-gpt2-l2}
\end{figure}

\begin{figure}[htbp]
    \centering
    \includegraphics[width=0.8\textwidth]{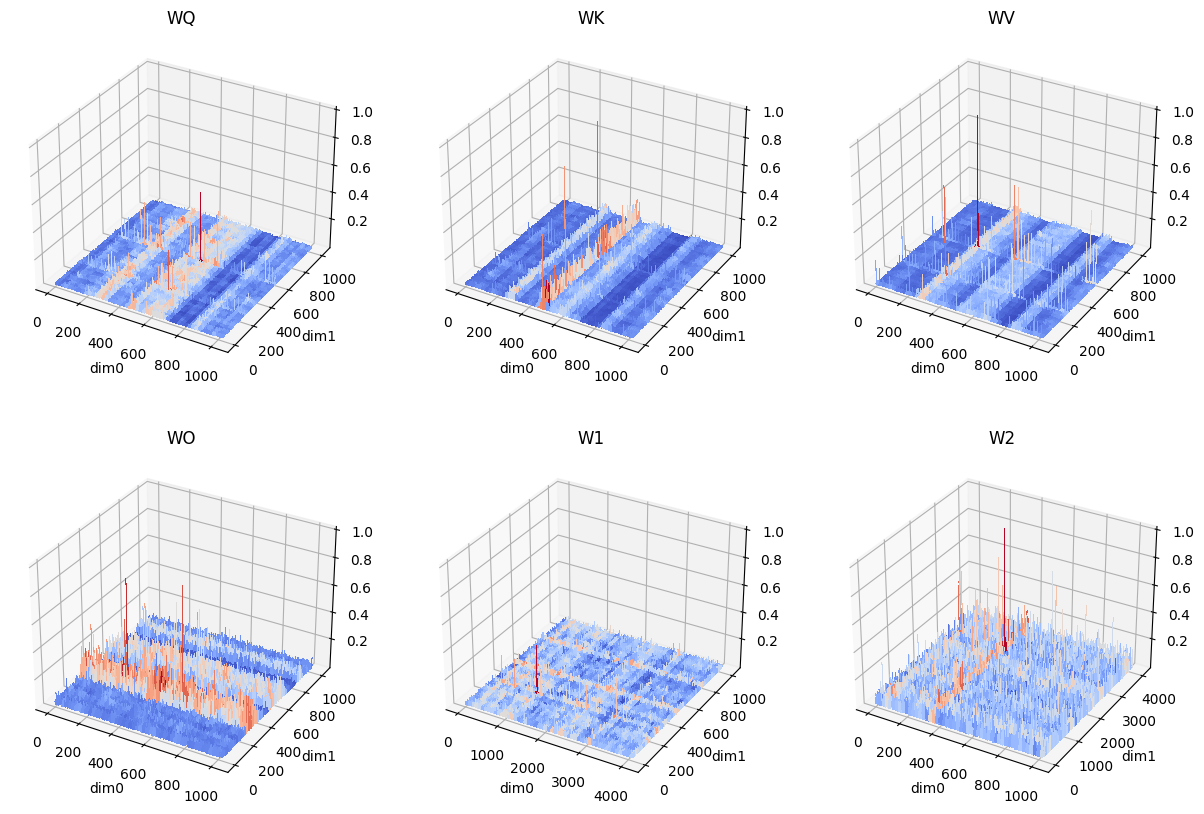}
    \caption{Outlier patterns of first moment in transformer block layer-13 of GPT-2 Medium at epoch 2.}
    \label{fig:outlier-pattern-gpt2-l13}
\end{figure}

\begin{figure}[htbp]
    \centering
    \includegraphics[width=0.8\textwidth]{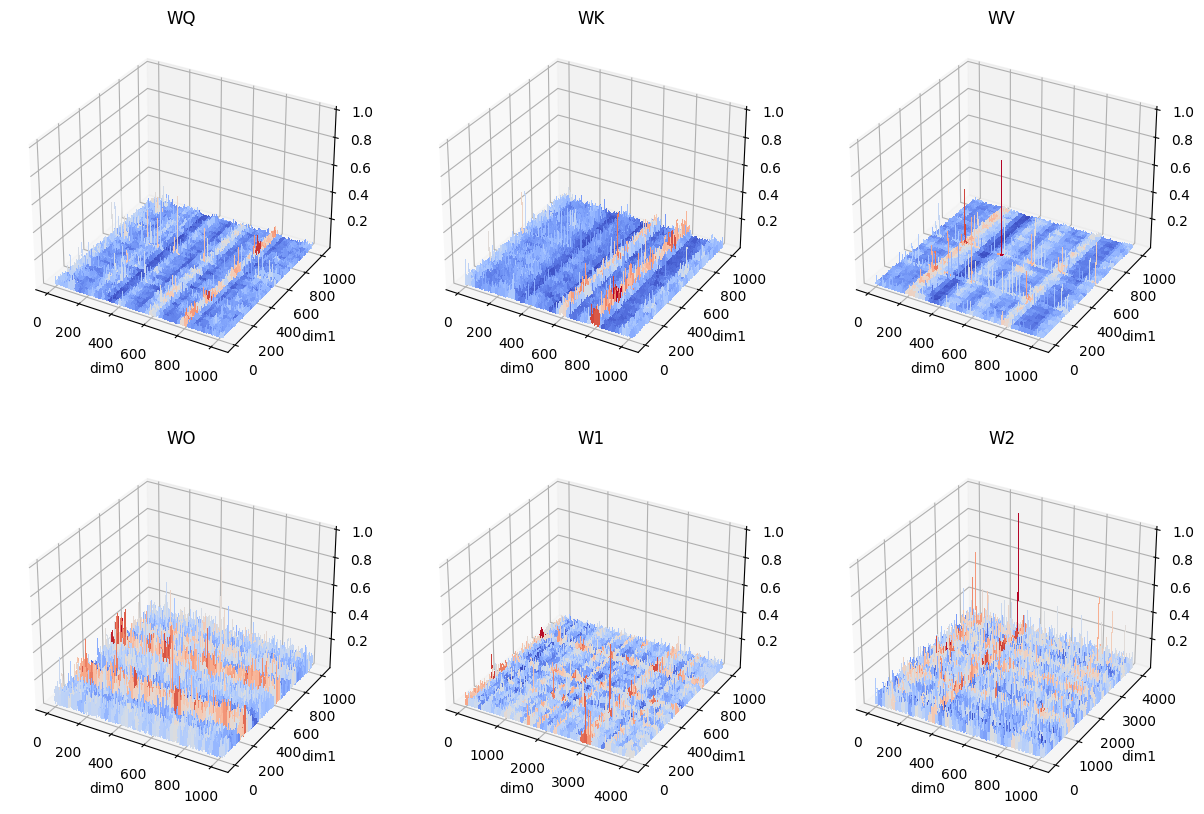}
    \caption{Outlier patterns of first moment in transformer block layer-14 of GPT-2 Medium at epoch 2.}
    \label{fig:outlier-pattern-gpt2-l14}
\end{figure}

\begin{figure}[htbp]
    \centering
    \includegraphics[width=0.8\textwidth]{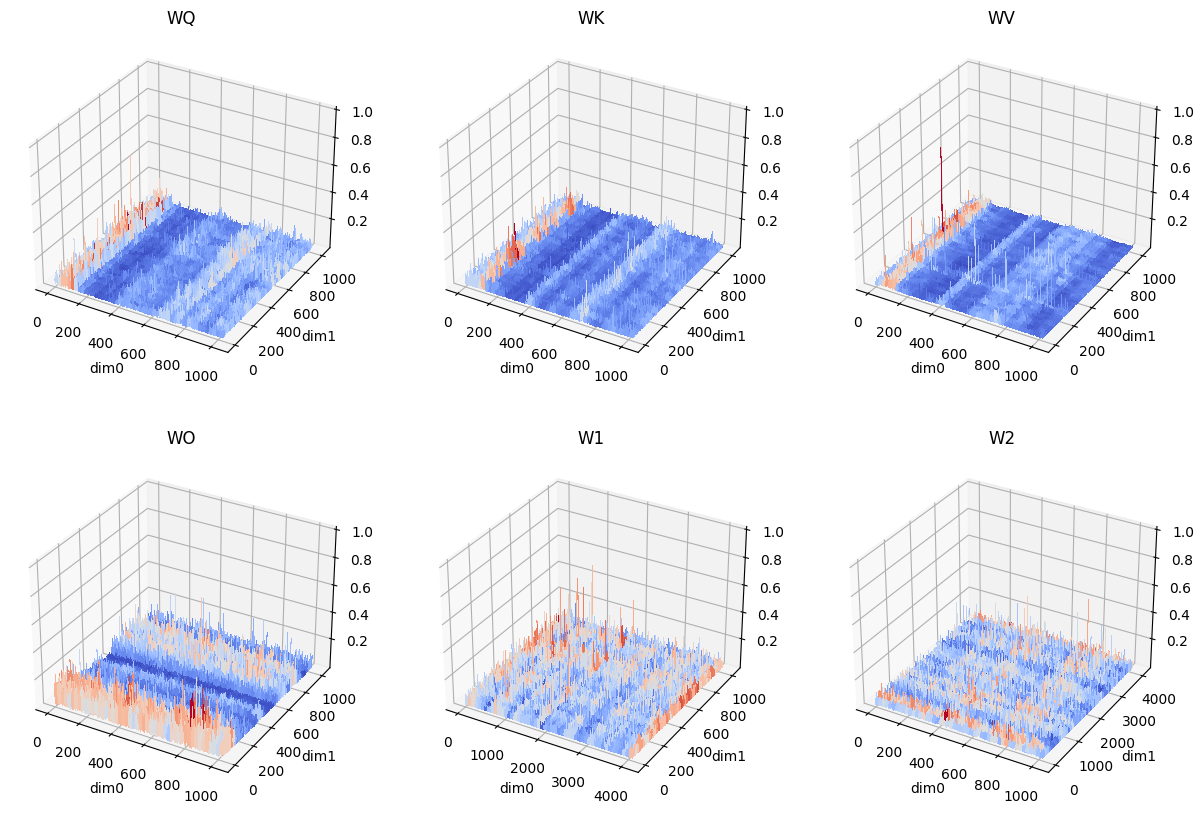}
    \caption{Outlier patterns of first moment in transformer block layer-21 of GPT-2 Medium at epoch 2.}
    \label{fig:outlier-pattern-gpt2-l21}
\end{figure}

\begin{figure}[htbp]
    \centering
    \includegraphics[width=0.8\textwidth]{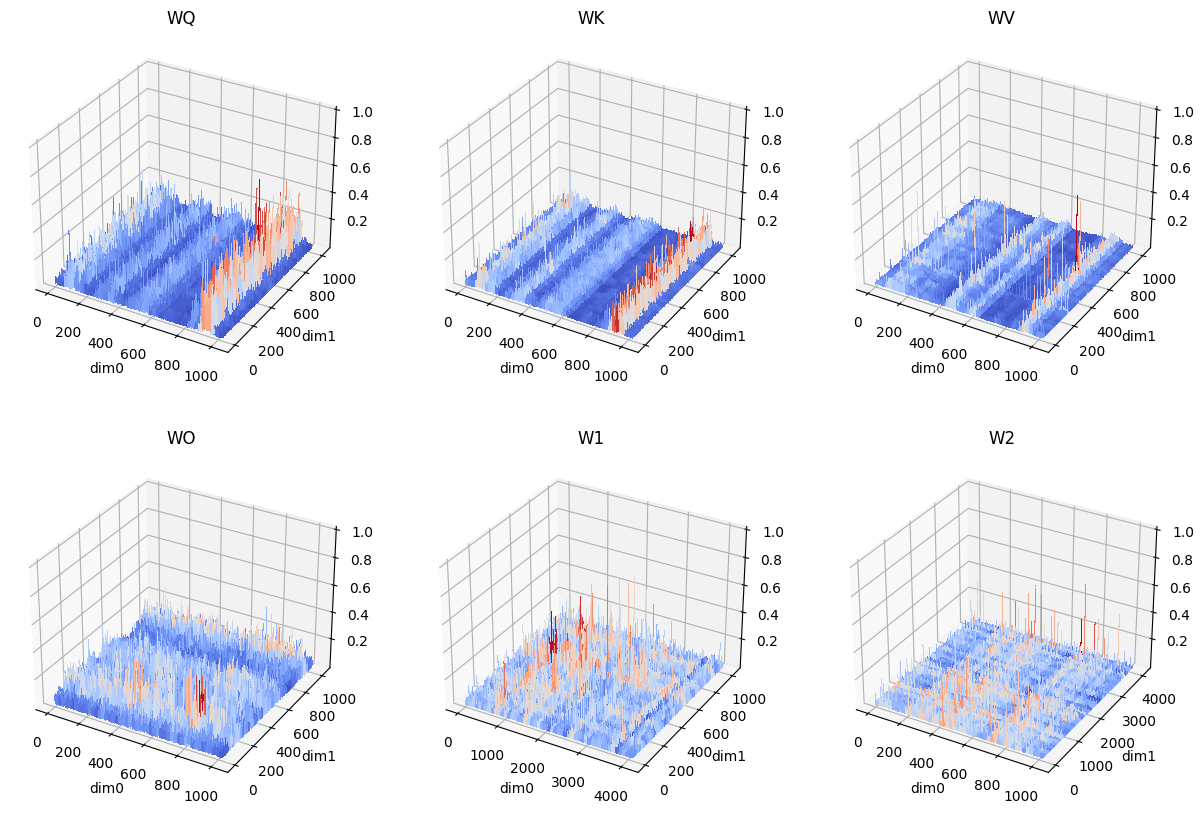}
    \caption{Outlier patterns of first moment in transformer block layer-22 of GPT-2 Medium at epoch 2.}
    \label{fig:outlier-pattern-gpt2-l22}
\end{figure}

\newpage
\section{Quantization Quality via Histogram}

\subsection{Zero-point Problem}

In Fig.~\ref{fig:histogram-gpt2-l2wqkv-gp128-zero},\ref{fig:histogram-roberta-l10wv-gp128-zero},\ref{fig:histogram-swin-l0b0wqkv-gp128-zero}, we show the effect of zero-point on quantization error for second moment via histogram. All those figures show the negative impact of zero-point on quantizing second moment.
After removing zero-point, the quantization quality improves at a great scale.

\begin{figure}[htbp]
    \centering
    \includegraphics[width=0.8\textwidth]{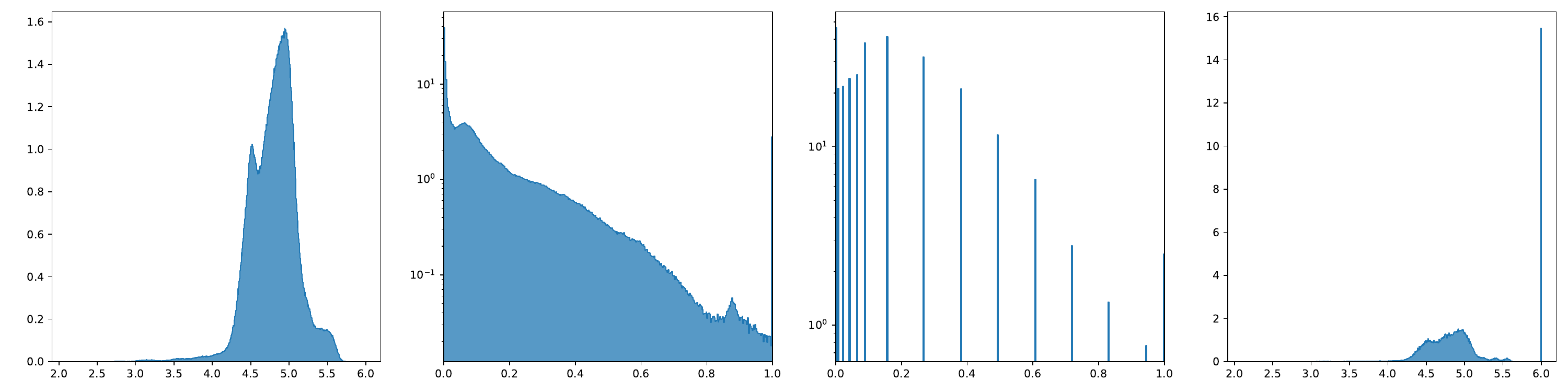}
    \includegraphics[width=0.8\textwidth]{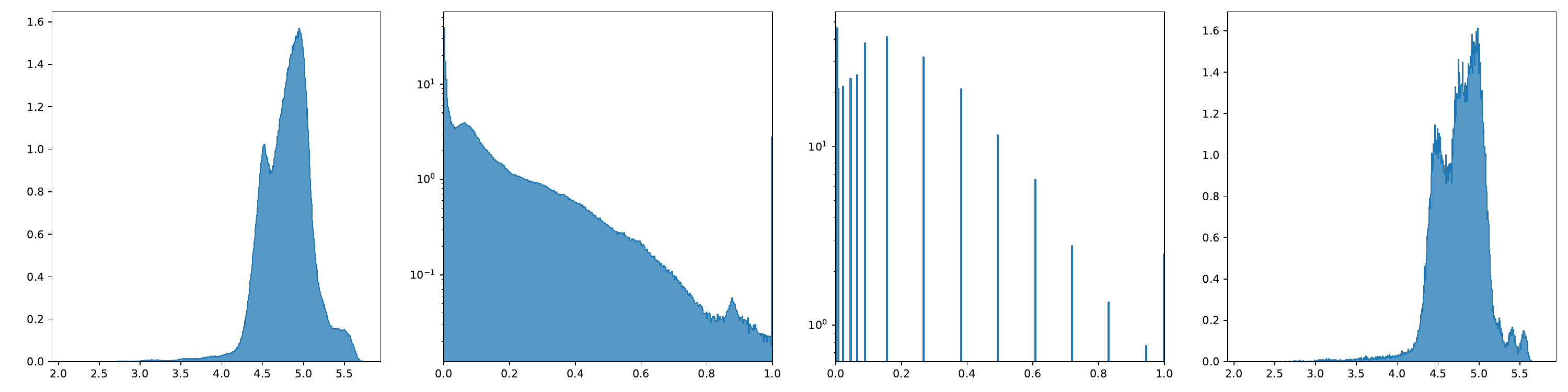}
    \caption{
    Histogram of second moment of the attention layer ($\Wv^Q, \Wv^K, \Wv^V$) in transformer block-wise layer-2 of GPT-2 Medium at epoch 2.
    In one horizontal line, the first figure is the original second moment.
    The second figure is the tensor after normalization.
    The third figure is the quantized tensor. 
    The last figure is the dequantized object. 
    Both the first and last figure is at log10 scale. 
    Both the second and third take values in [0, 1]. 
    All y-axis represents density. 
    Good quantization methods try to make the third figure identical to the second figure and make the last figure identical to the first figure. 
    Top: B128/DE quantization.
    Bottom: B128/DE-0 quantization.
    }
    \label{fig:histogram-gpt2-l2wqkv-gp128-zero}
\end{figure}

\begin{figure}[htbp]
    \centering
    \includegraphics[width=0.8\textwidth]{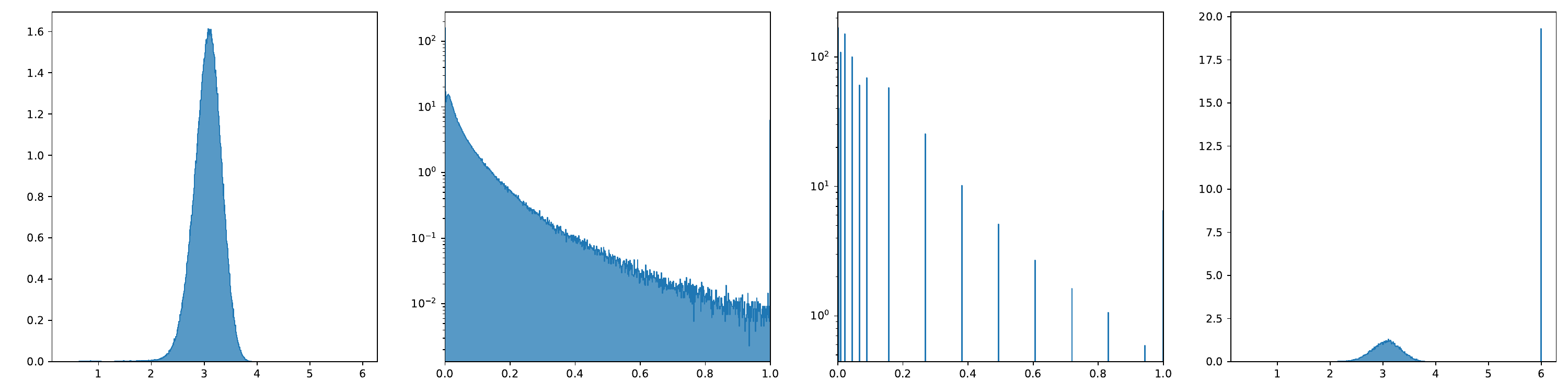}
    \includegraphics[width=0.8\textwidth]{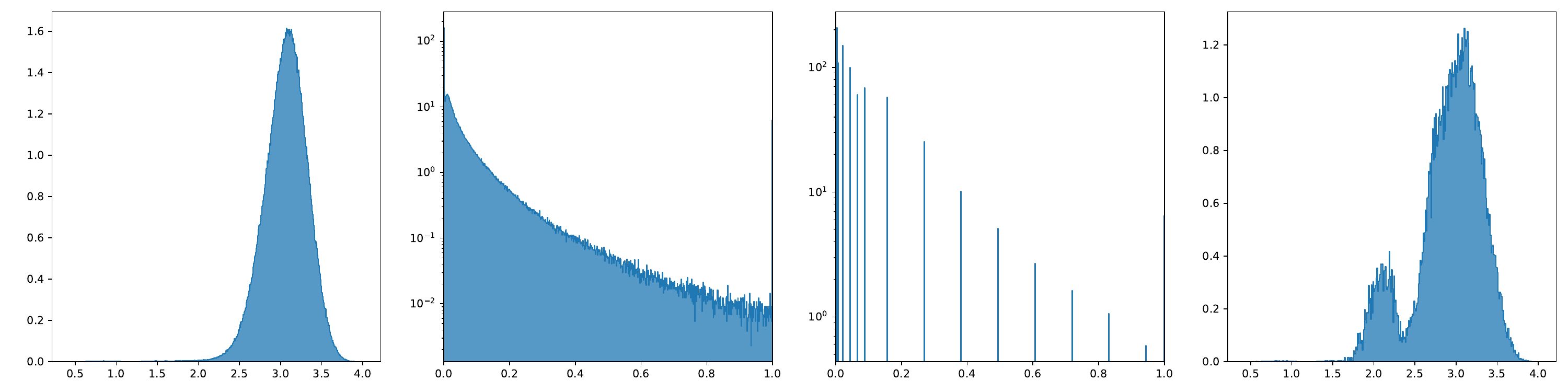}
    \caption{
    Histogram of second moment of the $\Wv^V$ in transformer block layer-10 of RoBERTa-Large at epoch 8. 
    Top: B128/DE quantization.
    Bottom: B128/DE-0 quantization.
    }
    \label{fig:histogram-roberta-l10wv-gp128-zero}
\end{figure}

\begin{figure}[htbp]
    \centering
    \includegraphics[width=0.8\textwidth]{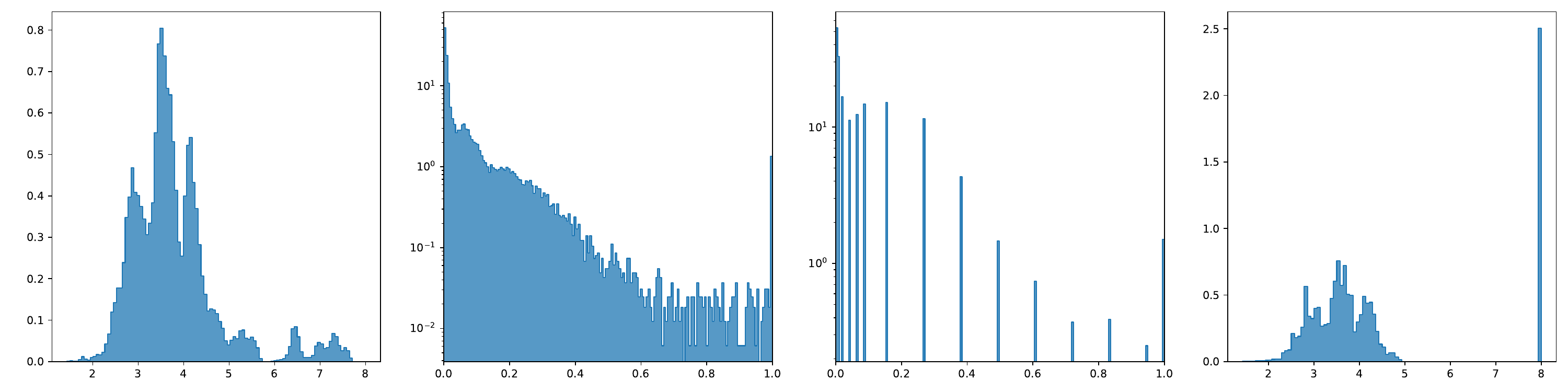}
    \includegraphics[width=0.8\textwidth]{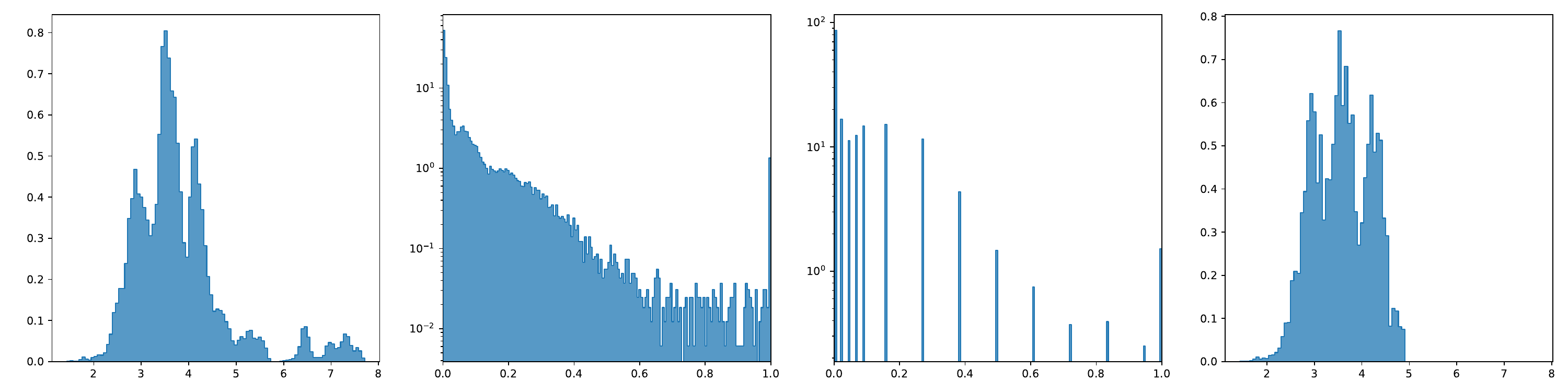}
    \caption{
    Histogram of second moment of the attention layer ($\Wv^Q, \Wv^K, \Wv^V$) in transformer block \texttt{layers.0.blocks.0} of Swin-T at epoch 210.
    Top: B128/DE quantization.
    Bottom: B128/DE-0 quantization.
    }
    \label{fig:histogram-swin-l0b0wqkv-gp128-zero}
\end{figure}

\newpage
\subsection{Comparison between Block-wise and Rank-1 Normalization}
To show the differences in quantization error for second moment 
between block-wise normalization and rank-1 normalization,
some cases where rank-1 normalization approximates better than block-wise normalization are shown in 
Fig.~\ref{fig:histogram-gpt2-l23w1-rank1-better-gp128},
\ref{fig:histogram-roberta-l4w2-rank1-better-gp128},
\ref{fig:histogram-swin-l0b0w2-rank1-better-gp128}.
Also, some cases where rank-1 normalization approximates worse than block-wise normalization is shown in Fig.~\ref{fig:histogram-gpt2-l23w1-rank1-worse-gp128},
\ref{fig:histogram-roberta-l2wv-rank1-worse-gp128},
\ref{fig:histogram-swin-l1b0wo-rank1-worse-gp128}.
Empirically, it has been observed that rank-1 normalization yields superior results when the distribution exhibits long-distance multimodal characteristics. On the other hand, block-wise normalization tends to outperform when the distribution displays short-distance multimodal patterns and/or intricate local structures.

\begin{figure}[htbp]
    \centering
    \includegraphics[width=0.8\textwidth]{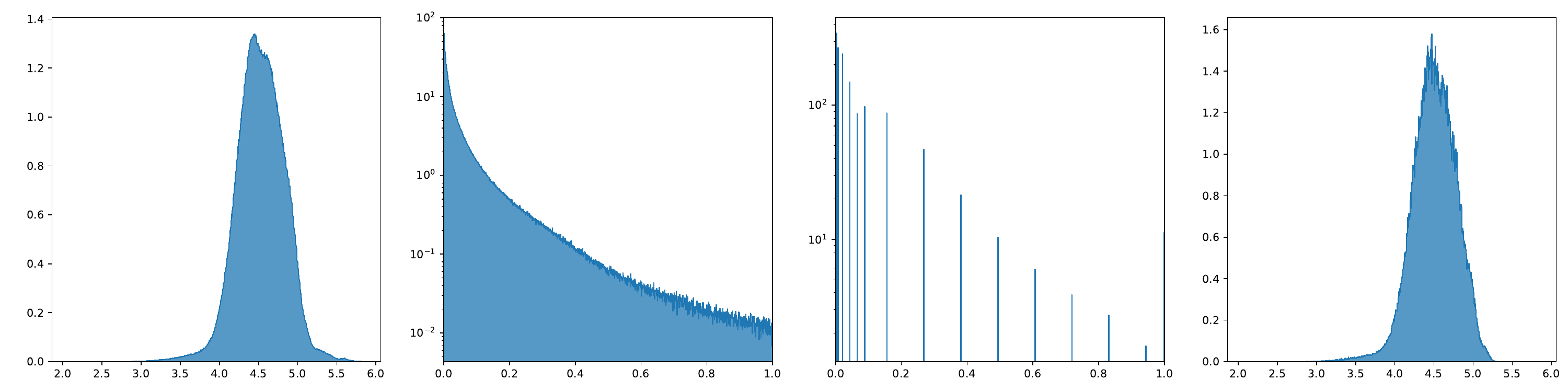}
    \includegraphics[width=0.8\textwidth]{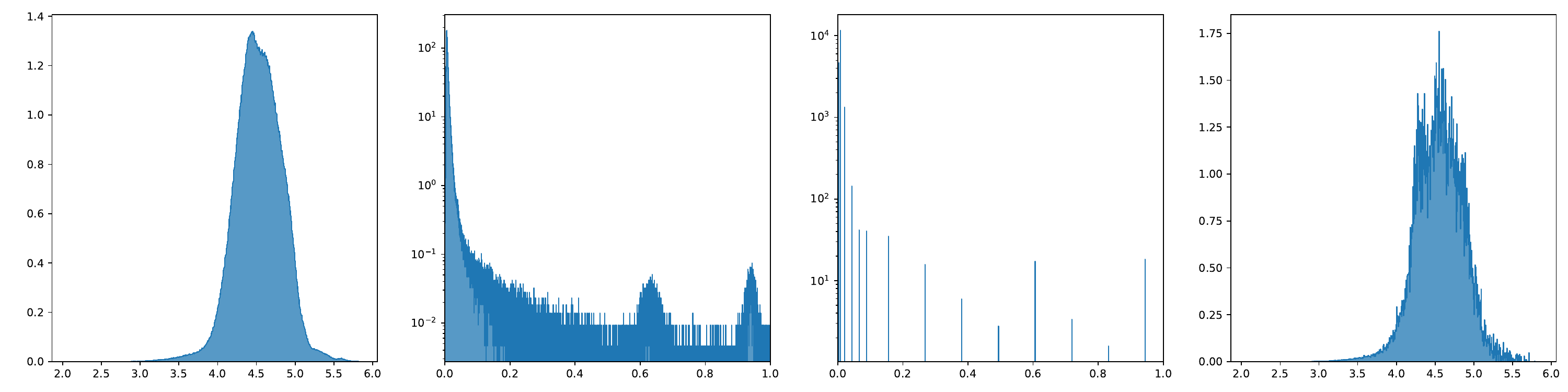}
    \caption{Histogram of second moment of $\Wv^1$ in transformer block layer-23 of GPT-2 Medium at epoch 2. 
    A case where rank-1 normalization is better than block-wise normalization with block size 128.
    In this case, the tail in the right side of distribution is captured by rank-1 normalization but lost in block-wise normalization.
    Top: B128/DE-0 quantization.
    Bottom: Rank-1/DE-0 quantization.
    }
    \label{fig:histogram-gpt2-l23w1-rank1-better-gp128}
\end{figure}

\begin{figure}[htbp]
    \centering
    \includegraphics[width=0.8\textwidth]{figures/moment-histogram/gpt2/histogram-exp_avg_sq-transformer.h.2.attn.c_attn.weight-4bit-group128-nonlinear-nozero-10516.pdf}
    \includegraphics[width=0.8\textwidth]{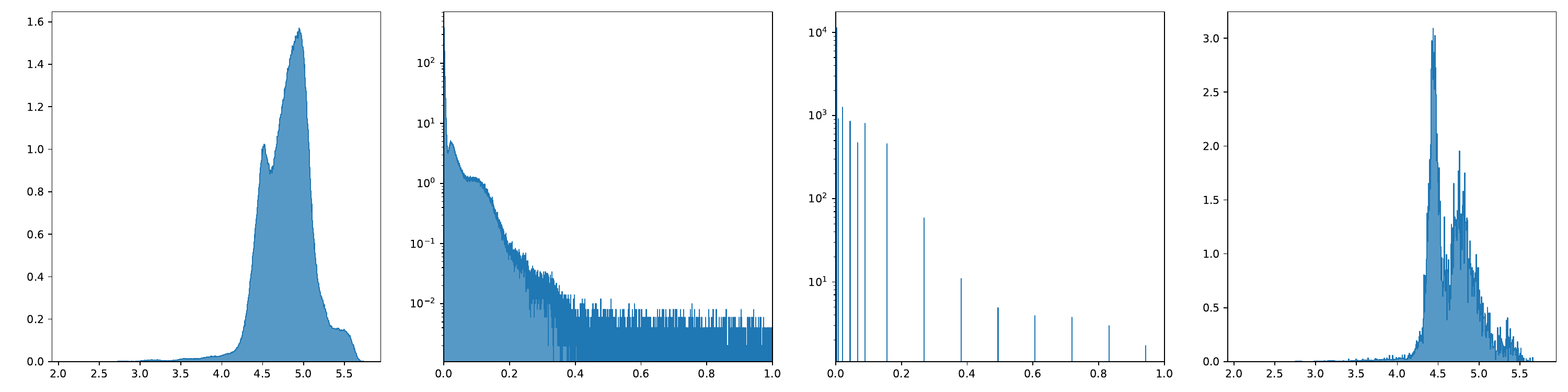}
    \caption{Histogram of second moment of the attention layer ($\Wv^Q, \Wv^K, \Wv^V$) in transformer block layer-2 of GPT-2 Medium at epoch 2. 
    A case where rank-1 normalization is worse than block-wise normalization with block size 128.
    Top: B128/DE-0 quantization.
    Bottom: Rank-1/DE-0 quantization.
    }
    \label{fig:histogram-gpt2-l23w1-rank1-worse-gp128}
\end{figure}

\begin{figure}[htbp]
    \centering
    \includegraphics[width=0.8\textwidth]{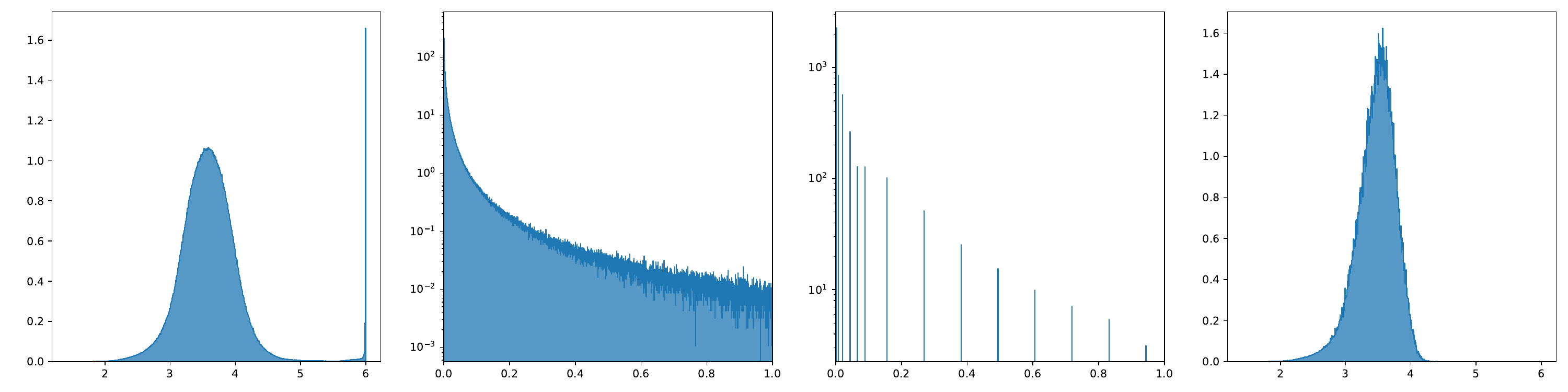}
    \includegraphics[width=0.8\textwidth]{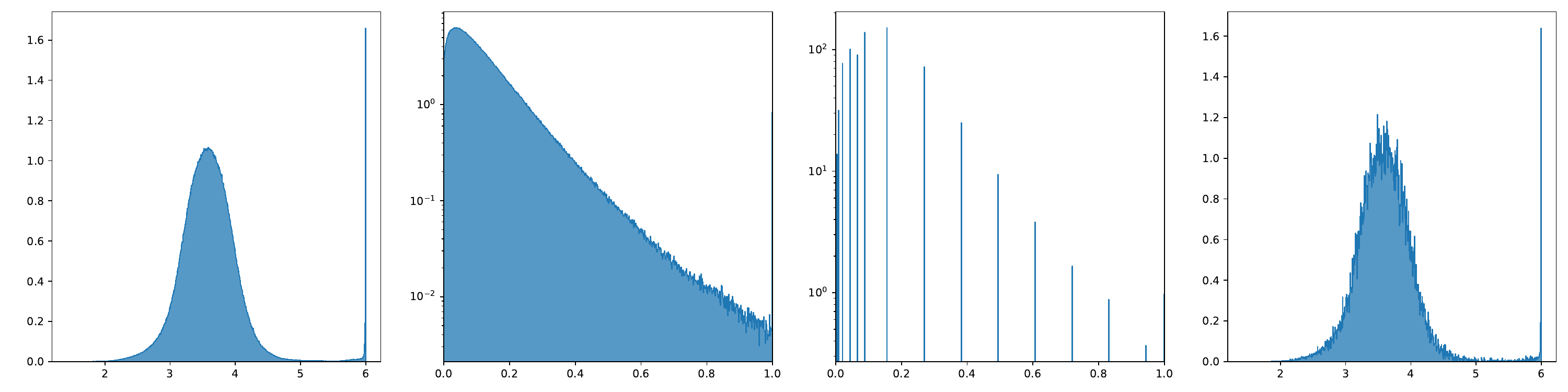}
    \caption{Histogram of second moment of $\Wv^2$ in transformer block layer-4 of RoBERTa-Large at epoch 8. 
    A case where rank-1 normalization is better than block-wise normalization with block size 128.
    Top: B128/DE-0 quantization.
    Bottom: Rank-1/DE-0 quantization.
    }
    \label{fig:histogram-roberta-l4w2-rank1-better-gp128}
\end{figure}

\begin{figure}[htbp]
    \centering
    \includegraphics[width=0.8\textwidth]{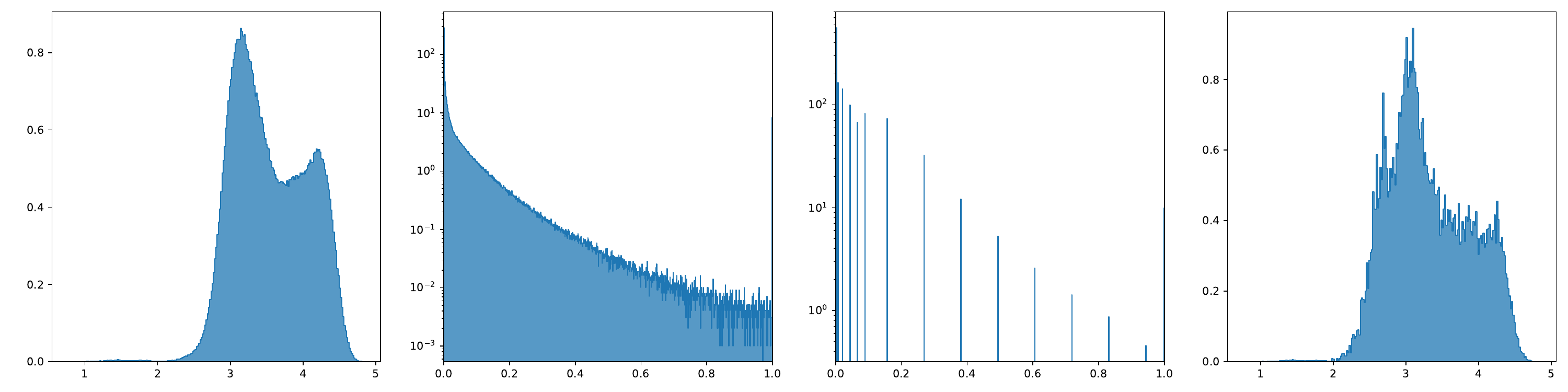}
    \includegraphics[width=0.8\textwidth]{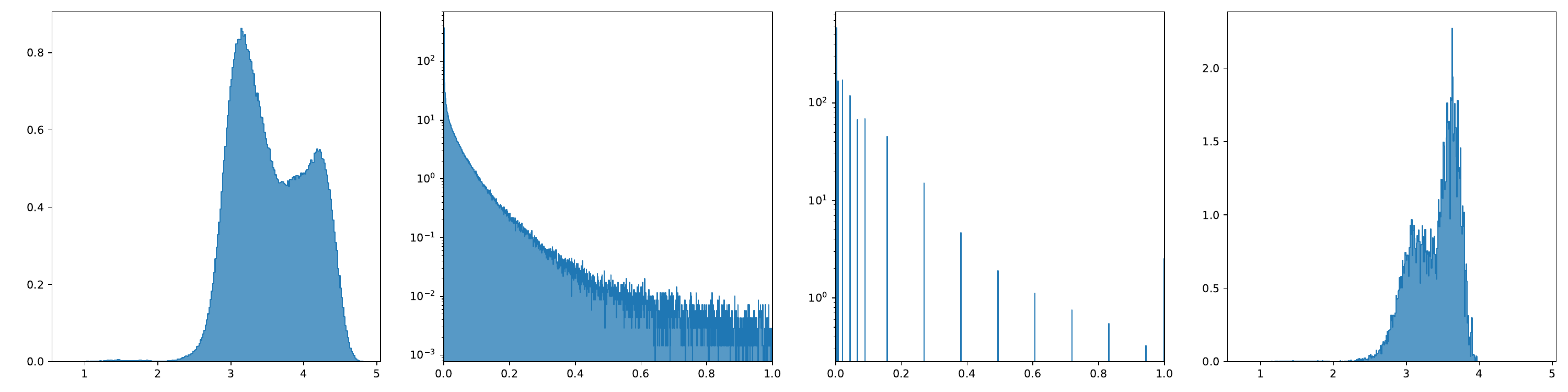}
    \caption{Histogram of second moment of $\Wv^V$ in transformer block layer-2 of RoBERTa-Large at epoch 8.
    A case where rank-1 normalization is worse than block-wise normalization with block size 128.
    Top: B128/DE-0 quantization.
    Bottom: Rank-1/DE-0 quantization.
    }
    \label{fig:histogram-roberta-l2wv-rank1-worse-gp128}
\end{figure}

\begin{figure}[htbp]
    \centering
    \includegraphics[width=0.8\textwidth]{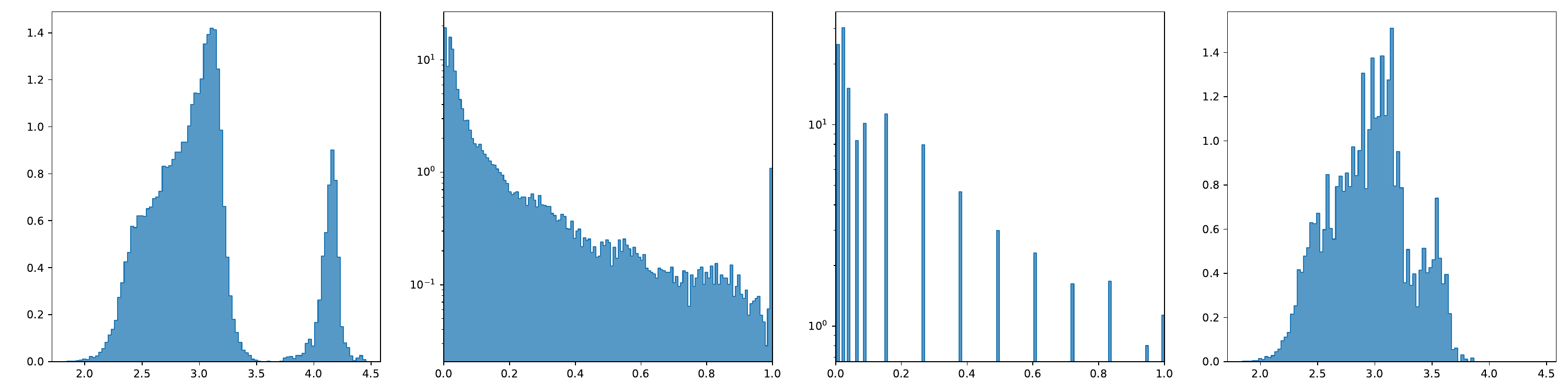}
    \includegraphics[width=0.8\textwidth]{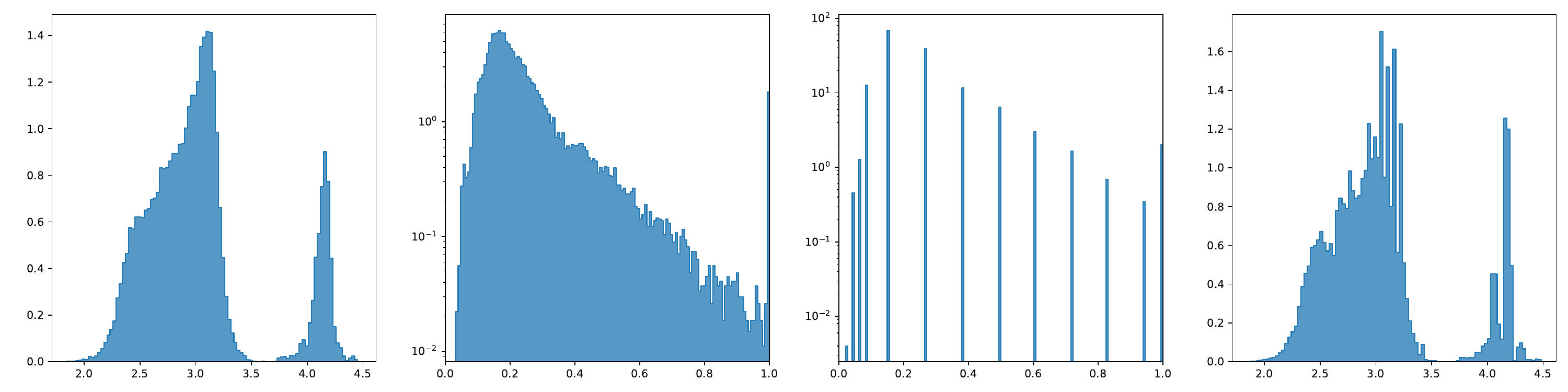}
    \caption{Histogram of second moment of $\Wv^2$ in transformer block \texttt{layers.0.blocks.0} of Swin-T at epoch 210. 
    A case where rank-1 normalization is better than block-wise normalization with block size 128.
    Top: B128/DE-0 quantization.
    Bottom: Rank-1/DE-0 quantization.
    }
    \label{fig:histogram-swin-l0b0w2-rank1-better-gp128}
\end{figure}

\begin{figure}[htbp]
    \centering
    \includegraphics[width=0.8\textwidth]{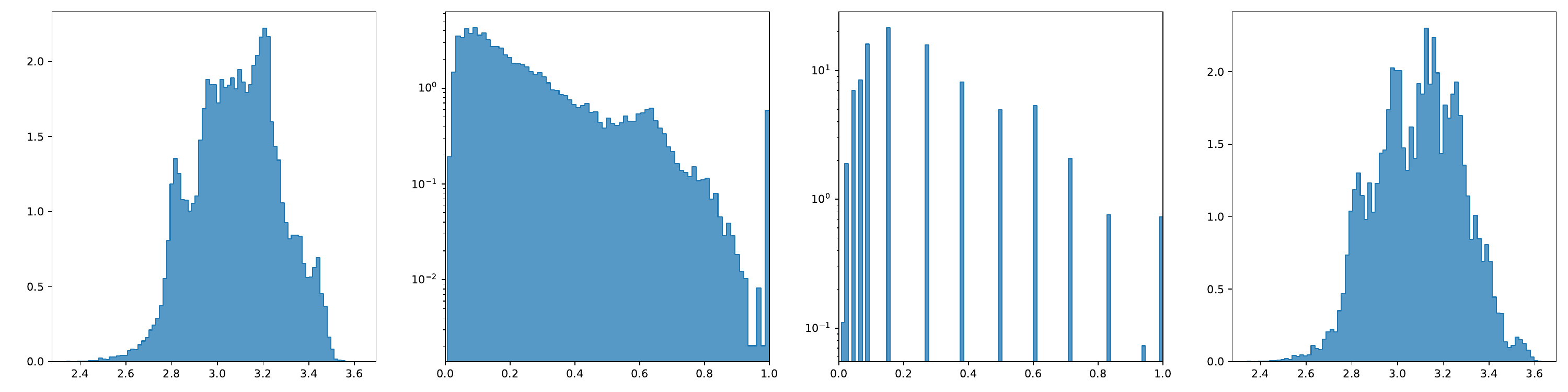}
    \includegraphics[width=0.8\textwidth]{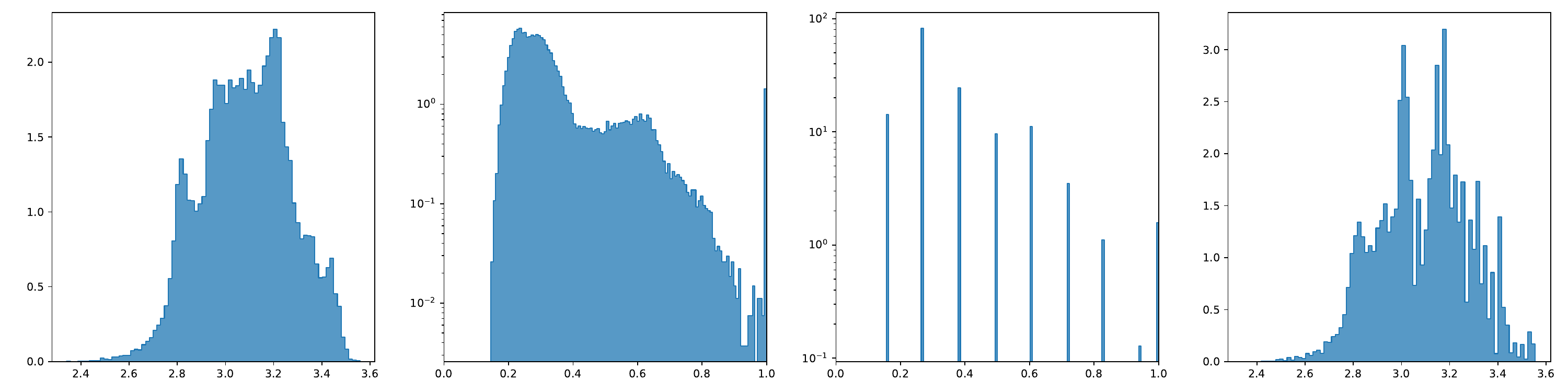}
    \caption{Histogram of second moment of $\Wv^O$ in transformer block \texttt{layers.1.blocks.0} of Swin-T at epoch 210. 
    A case where rank-1 normalization is worse than block-wise normalization with block size 128.
    Top: B128/DE-0 quantization.
    Bottom: Rank-1/DE-0 quantization.
    }
    \label{fig:histogram-swin-l1b0wo-rank1-worse-gp128}
\end{figure}

\subsection{Effectiveness of Block Size in Block-wise Normalization}

In Fig.~\ref{fig:histogram-gpt2-l20w1-block-size},\ref{fig:histogram-roberta-l22wo-block-size},\ref{fig:histogram-swin-l0b0w1-block-size}, we show the effect of block size on quantization error for both first and second moments. 
Fig.~\ref{fig:histogram-gpt2-l20w1-block-size},\ref{fig:histogram-roberta-l22wo-block-size} shows that B2048 normalization quantizes a significant portion of the points to zero, resulting in poor approximation based on the histogram. However, when we utilize a smaller block size of 128, the quantization performance improves.
Fig.~\ref{fig:histogram-swin-l0b0w1-block-size} shows smaller block size improves quantization quality on second moment.

\begin{figure}[htbp]
    \centering
    \includegraphics[width=0.8\textwidth]{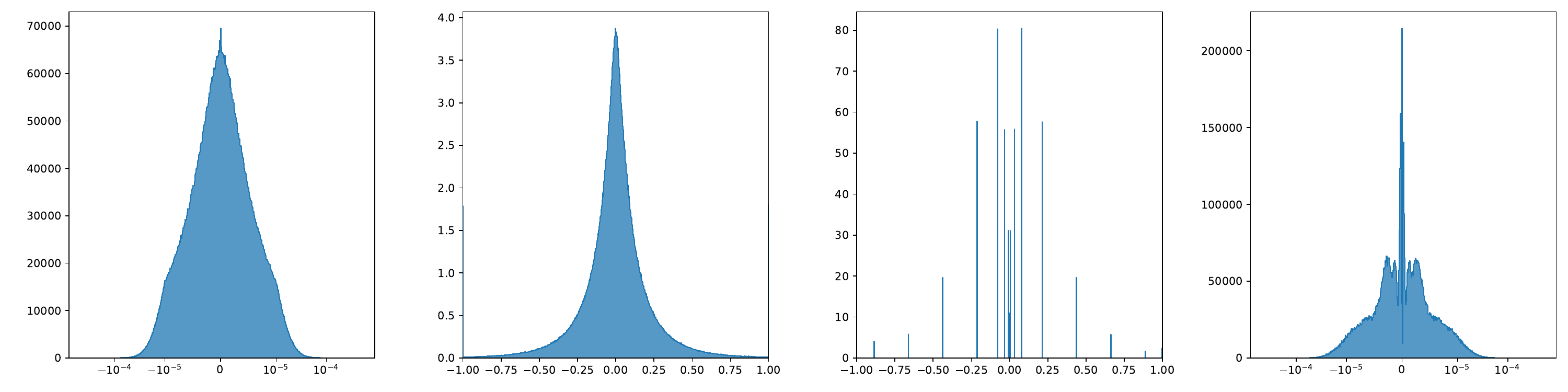}
    \includegraphics[width=0.8\textwidth]{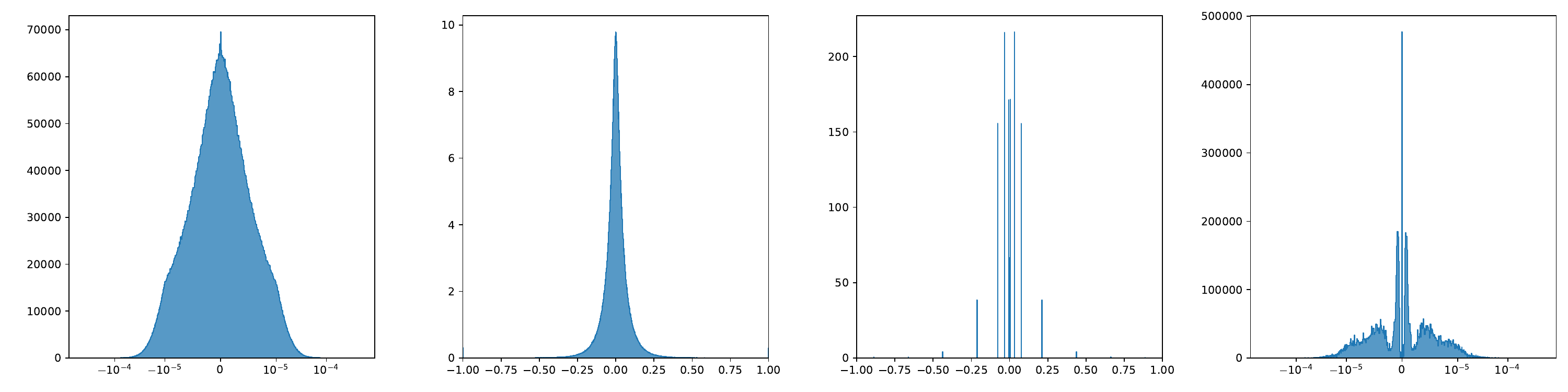}
    \caption{Histogram of first moment of $\Wv^1$ in transformer block layer-20 of GPT-2 Medium at epoch 2. 
    Top: B128/DE quantization.
    Bottom: B2048/DE quantization.
    }
    \label{fig:histogram-gpt2-l20w1-block-size}
\end{figure}

\begin{figure}[htbp]
    \centering
    \includegraphics[width=0.8\textwidth]{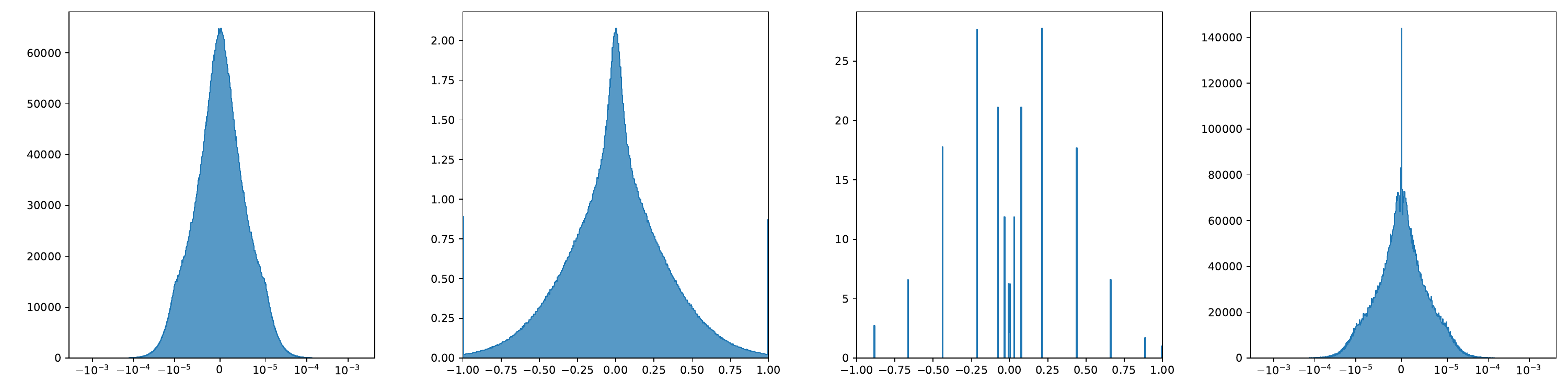}
    \includegraphics[width=0.8\textwidth]{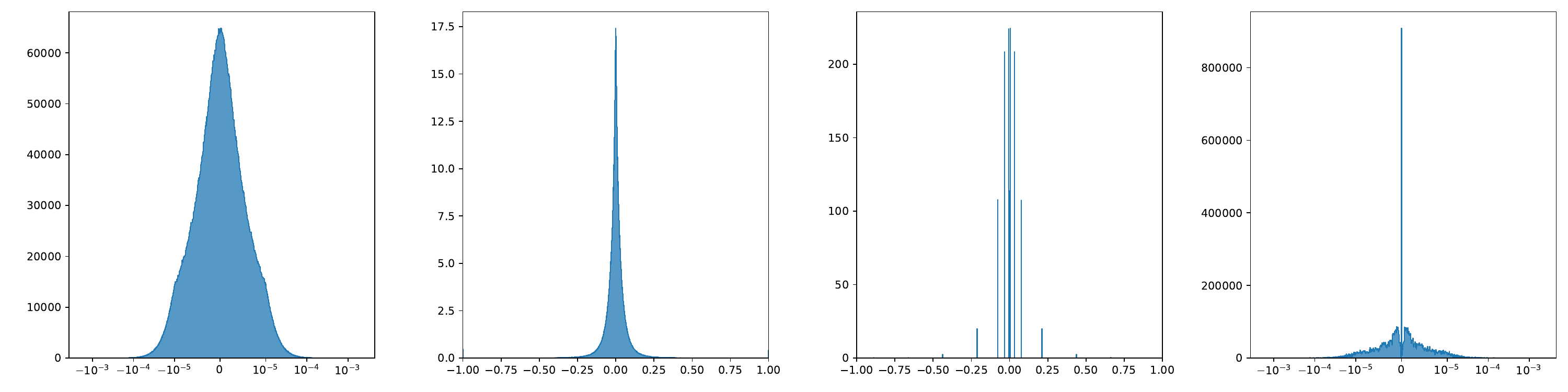}
    \caption{Histogram of first moment of $\Wv^O$ in transformer block layer-22 of RoBERTa-L at epoch 8. 
    Top: B128/DE quantization.
    Bottom: B2048/DE quantization.
    }
    \label{fig:histogram-roberta-l22wo-block-size}
\end{figure}

\begin{figure}[htbp]
    \centering
    \includegraphics[width=0.8\textwidth]{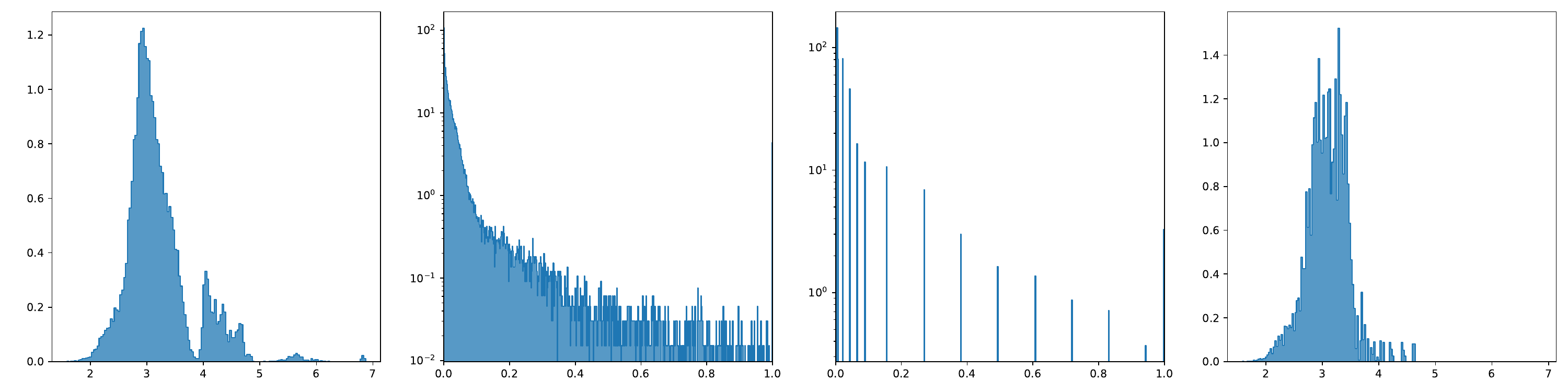}
    \includegraphics[width=0.8\textwidth]{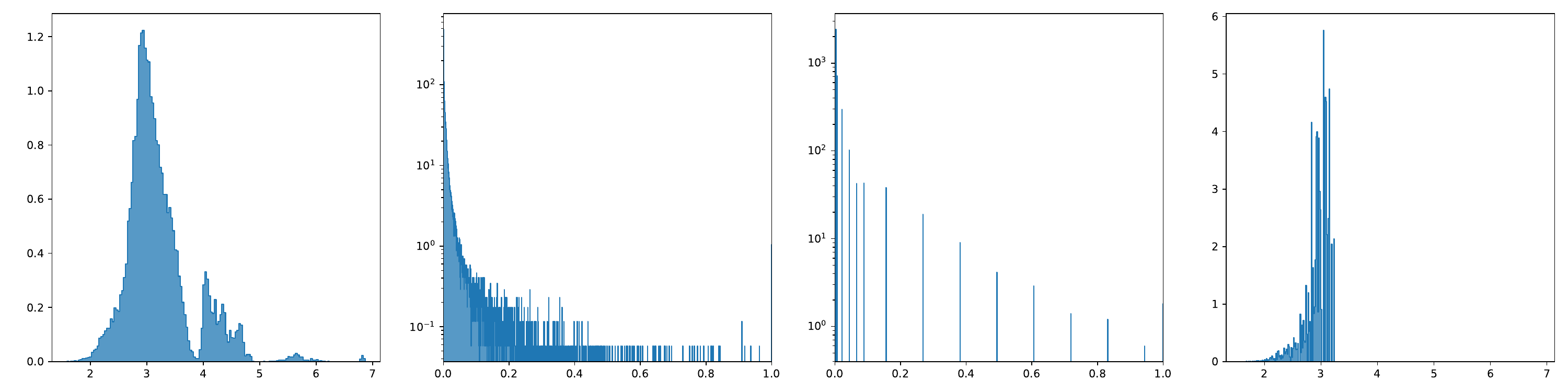}
    \caption{Histogram of second moment of $\Wv^1$ in transformer block \texttt{layers.0.blocks.0} of Swin-T at epoch 210. 
    Top: B128/DE-0 quantization.
    Bottom: B2048/DE-0 quantization.
    }
    \label{fig:histogram-swin-l0b0w1-block-size}
\end{figure}

\newpage
\section{Experimental Details}\label{sec:app-exp-details}

\subsection{Quantization}

There are several parameters in deep neural networks that play a delicate role without occupying too much memory, such as normalization layers and bias. 
In this context, we establish a rule to determine which parameters should not be quantized.
For all the experiments we conducted, the rule is straightforward: tensors with a size smaller than or equal to 4096 will not be quantized.
However, for larger models with a hidden size exceeding 4096, it is advisable to exclude the bias and normalization layers from quantization.
Regarding the quantization settings, as stated in Sec.~\ref{sec:experiments},
we employ block-wise normalization with a block size of 128, dynamic exponent mapping for first moment and rank-1 normalization, linear mapping for second moment.
When we apply factorization on second moment, only tensors with a dimension greater than or equal to 2 will be factorized while 1-dimensional tensors that meet the specified rule will still be quantized.

8-bit Adam~\citep{dettmers20218} also uses the threshold of 4096 about size to determine whether or not to quantize parameters.
Additionally, the implementation in huggingface does not quantize the parameters in \texttt{Embedding} layers regardless of the model used.
Consequently, we compare our method with the 8-bit Adam that does not quantize \texttt{Embedding}.

\subsection{Hyperparameters and Training Details}\label{sec:app-hyperparameters}

In each benchmark, unless otherwise specified, 
we maintain the same hyperparameters for a given optimize across different quantization schemes.
Additionally, we use same optimizer hyperparameters across various optimizers,
including our 4-bit optimizers, 8-bit Adam~\citep{dettmers20218}, SM3~\citep{anil2019memory}, Adafactor~\citep{shazeer2018adafactor} and the full precision counterpart AdamW~\citep{loshchilov2017decoupled}.
For Adafactor, we use $\beta_1 > 0$ as default setting which is same as the $\beta_1$ value used in AdamW. Also, the case where $\beta_1 = 0$ is compared. 
The other newly introduced hyperparameters in Adafactor are set to their default values and remain fixed throughout the experiments.
For SM3, we compare with the $\beta_1 > 0$ configuration, same as the $\beta_1$ value used in AdamW.

\begin{table}[h]
    \centering
    \caption{
    The hyperparameters for RoBERTa-L fine-tuning on GLUE.
    }
    \label{tab:roberta-glue-hyperparameter}
    \begin{tabular}{l|cccccccc}
        \toprule
        Dataset     & MNLI & QNLI & QQP & RTE & MRPC & SST-2 & CoLA & STS-B \\
        \midrule
        Batch Size  & 32 & 32 & 32 & 16 & 16 & 32 & 16 & 16 \\
        LR          & 1e-5 & 1e-5 & 1e-5 & 2e-5 & 1e-5 & 1e-5 & 1e-5 & 2e-5\\
        Warmup      & 7432 & 1986 & 28318 & 122 & 137 & 1256 & 320 & 214 \\
        Max Train Steps &  123873 & 33112 & 113272 & 2036 & 2296 & 20935 & 5336 & 3598 \\
        Max Seq. Len.   & 128 & 128 & 128 & 512 & 512 & 512 & 512 & 512 \\
        \bottomrule
    \end{tabular}
\end{table}

\begin{table}[h]
    \centering
    \caption{
    The hyperparameters for RoBERTa-L fine-tuning on SQuAD and SQuAD 2.0.
    }
    \label{tab:roberta-squad-hyperparameter}
    \begin{tabular}{l|c}
        \toprule
        Dataset     & SQuAD \& SQuAD 2.0 \\
        \midrule
        Batch Size  & 48 \\
        LR          & 1.5e-5 \\
        \# Epochs   & 2  \\
        Warmup Ratio    & 0.06 \\
        Max Seq. Len.   & 384 \\
        \bottomrule
    \end{tabular}
\end{table}

\begin{table}[h]
    \centering
    \caption{
    The hyperparameters for GPT-2 on E2E.
    }
    \label{tab:gpt2-hyperparameter}
    \begin{tabular}{l|c}
        \toprule
        Dataset     & E2E \\
        \midrule
        & Training \\
        \midrule
        Batch Size  & 8 \\
        LR          & 4e-5 \\
        \# Epochs   & 5  \\
        Warmup      & 500 \\
        Max Seq. Len.  & 512 \\
        Label Smooth    & 0.1 \\
        \midrule
        & Inference \\
        \midrule
        Beam Size   & 10 \\
        Length Penalty  & 0.8 \\
        no repeat ngram size & 4 \\
        \bottomrule
    \end{tabular}
\end{table}

\paragraph{RoBERTa}
We train all of our RoBERTa-L models 
with PyTorch Huggingface\footnote{\tiny \url{https://github.com/huggingface/transformers}}. 
On GLUE benchmark, we mainly follow the hyperparameters in fairseq~\citep{ott2019fairseq}. 
We use $\beta_1 = 0.9$, $\beta_2 = 0.98$, $\epsilon = \text{1e-6}$, a weight decay factor of 0.1 and linear learning rate schedule. Other hyperparameters are listed in Tab.~\ref{tab:roberta-glue-hyperparameter}.
On SQuAD benchmark, we mainly follow the reported hyperparameters in RoBERTa paper~\citep{liu2019roberta}. We use $\beta_1 = 0.9$, $\beta_2 = 0.98$, $\epsilon = \text{1e-6}$, a weight decay factor of 0.01 and linear learning rate schedule. The other hyperparameters are listed in Tab.~\ref{tab:roberta-squad-hyperparameter}.
On both datasets, we report the median and standard deviation results over 5 runs, the result in each run is taken from the best epoch. 
We utilize single RTX 3090 or 4090 GPU for runs of each task in GLUE datasets 
and four RTX 3090 or 4090 GPUs for SQuAD and SQuAD 2.0.

On SQuAD 2.0, there may be a performance gap observed between the reproduced results using 32-bit AdamW and the original results reported in the original paper.
This is because there are some questions without answers in SQuAD 2.0. 
It is worth noting that the approach employed by Liu et al.~\cite{liu2019roberta} to handle unanswered questions may differ from the solutions utilized in the BERT paper~\citep{kenton2019bert}, which is the reference implementation we are using

\paragraph{GPT-2}
We train all of our GPT-2 Medium models 
with the LoRA codebase\footnote{\tiny \url{https://github.com/microsoft/LoRA}}.
We mainly follow the hyperparameters in \cite{li2021prefix} and \cite{hu2022lora}. 
We use $\beta_1 = 0.9$, $\beta_2 = 0.999$, $\epsilon = \text{1e-6}$, a weight decay factor of 0.01 and linear learning rate schedule. 
The other hyperparameters used in GPT-2 are listed in Tab.~\ref{tab:gpt2-hyperparameter}.
We report the mean and standard deviation results over 3 runs, the result in each run is taken from the best epoch. We utilize fours RTX 3090 or 4090 GPUs for runs of this task.

\paragraph{Transformer}
We train all of our Transformer-Base models for machine translation 
with codebase\footnote{\tiny\url{https://github.com/NVIDIA/DeepLearningExamples/tree/master/PyTorch/Translation/Transformer}}.
We completely follow the hyperparameters in the codebase. 
We report the mean and standard deviation results over 3 runs, the result in each run is taken from the best epoch. We utilize eight RTX 3090 or 4090 GPUs for runs of this task.

\paragraph{Swin}
We train all of our Swin-T models with its official codebase\footnote{\tiny \url{https://github.com/microsoft/Swin-Transformer}}.
We completely follow the hyperparameters in the codebase. 
We report the mean and standard deviation results over 3 runs, the result in each run is taken from the best epoch. We utilize eight RTX 3090 or 4090 GPUs for runs of this task.

\paragraph{LLaMA}
We fine-tune LLaMA-7B, LLaMA-13B and LLaMA-33B with Alpaca codebase\footnote{\tiny \url{https://github.com/tatsu-lab/stanford_alpaca}}.
We follow the hyperparameters in the codebase for LLaMA-7B and LLaMA-13B, and the hyperparameters of LLaMA-33B are consistent with LLaMA-13B. 
We fine-tune LLaMA-7B with two A100 80GB GPUs.
The training loss curve is the mean results over 3 runs.
For LLaMA-7B, we enable Fully Sharded Data Parallelism (FSDP), which packs parameters into 1-dimensional array.
This packing process makes it difficult to apply factorization directly without additional engineering efforts.
Consequently, we only compare the performance of 4-bit AdamW with its full precision counterpart.

\subsection{Memory and Computing Efficiency}
In Tab.~\ref{tab:main-mem-time}, 
we present measurements of memory usage in practical settings, i.e. training configuration described in Sec.~\ref{sec:app-hyperparameters}.
Specifically, we measure the memory usage for LLaMA-7B using 2 A100 80G GPUs, RoBERTa-L using 1 RTX 4090 GPU, and GPT-2 Medium using 4 RTX 4090 GPUs. 
Additionally, the time measurement for RoBERTa-L is conducted on the RTE task.

\section{Quantization Formulation Details}
\subsection{Signed Case}\label{sec:app-quant-signed-case}

In this section, we discuss quantization function for signed tensors and the differences compared to unsigned case.
Regarding the normalization operator, the only difference lies in the fact that the sign of the tensor remains unchanged before and after normalization.
Formally, let $\Nv$ be the normalization operator for the unsigned cases. For the signed case, the normalization can be defined as
\begin{align*}
    n_j := \text{sign}(x_j)\Nv(\abs{x_j}).
\end{align*}
Therefore, the unit interval for signed case is [-1, 1].
Regarding the mapping operator, the difference  lies in the values of quantization mappings. See App.~\ref{sec:app-quant-map} for more details.

\subsection{Quantization Mappings}\label{sec:app-quant-map}

In this work, we mainly consider linear mapping and dynamic exponent mapping~\citep{dettmers20158}. See Fig.~\ref{fig:quantization-map} for illustration of quantization mappings.

\begin{figure}[ht]
    \centering
    \includegraphics[width=0.7\textwidth]{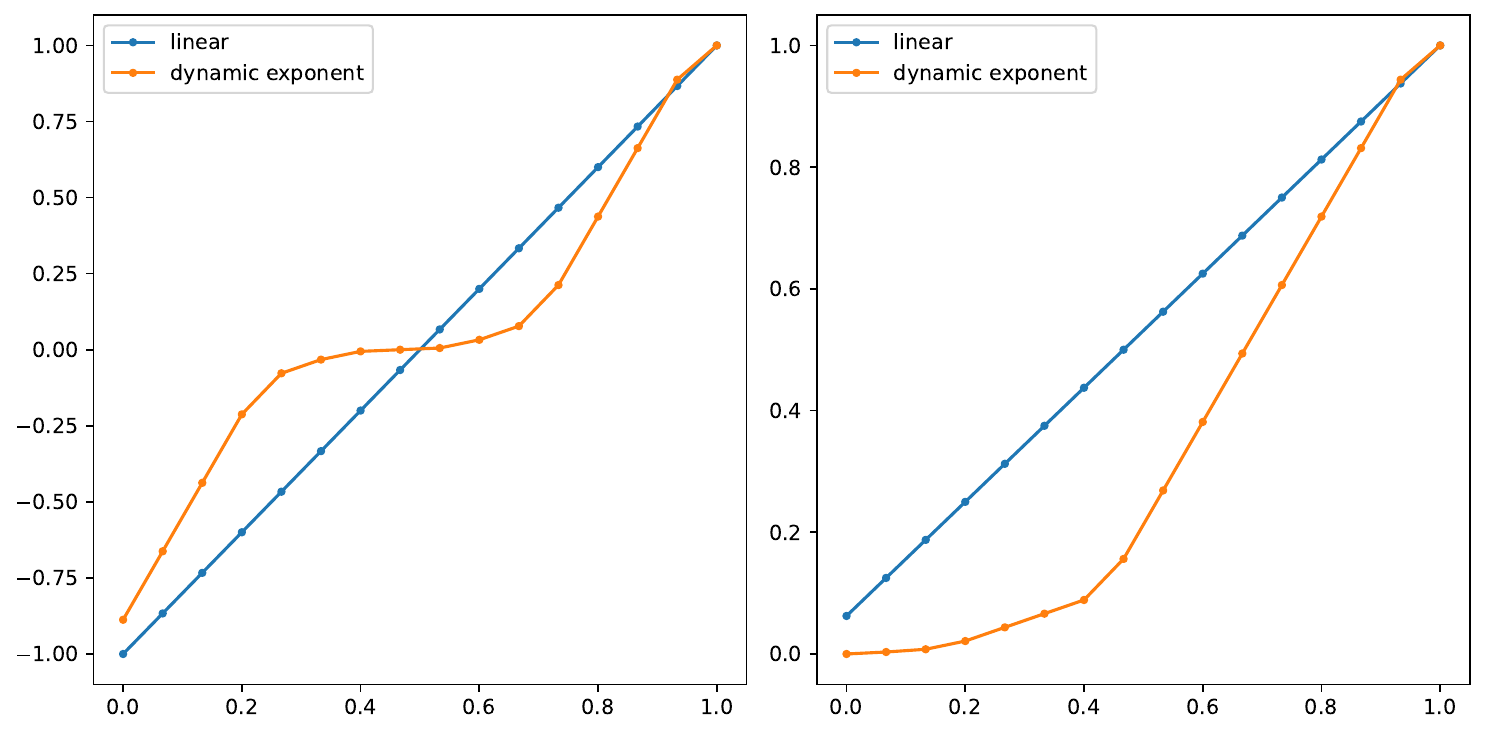}
    \caption{Visualization of the quantization mappings for the linear and dynamic exponent at 4-bit precision. Left: Signed case. Right: Unsigned case.
    }
    \label{fig:quantization-map}
\end{figure}

\paragraph{Linear mapping}
It is notable that the linear mapping considered in our work does not include zero in both signed case and unsigned case. 
Actually, we only use linear mapping in unsigned case, which is defined as \texttt{torch.linspace(0, 1, (2 ** b) + 1)[1:]}.

\paragraph{Dynamic exponent mapping} 
Let $b$ be the total bits. In the main text, we mentioned that dynamic exponent takes the form $\Tv(i) = 10^{-E(i)} \mbox{fraction}(i)$. 
In following paragraphs, we will define the dynamic exponent mapping formally based on the binary representation.

In unsigned case, dynamic exponent mapping~\cite{dettmers20158} is composed of exponent bits $E$, one indicator bit and fraction bits $F$, where $b = 1 + E + F$.
It uses the number of leading zero bits $E$ represents the exponent with base 10. 
The first bit, which is one, serves as an indicator bit that separates the exponent and the unsigned linear fraction.
The remaining bits $F$ represent an unsigned linear fraction distributed evenly in (0.1, 1), which is formally defined as
\begin{align*}
    p_j & = \frac{1 - 0.1}{2^F} j + 0.1,\quad 0 \leq j \leq 2^F, \\
    \text{fraction}[k] & = \frac{p_k + p_{k+1}}{2},\quad 0 \leq k \leq 2^F - 1.
\end{align*}
Therefore, a number with $E$ exponent bits and $F$ fraction bits valued $k$ has a value of 
\begin{align*}
    10^{-E} \times \text{fraction}[k].
\end{align*}
For signed case, the only difference is that dynamic exponent mapping additionally uses the first bit as the sign thus we have $b = 1 + E + 1 + F$. 
Specially, at 8-bit case, we learn from the codebase\footnote{\tiny\url{https://github.com/TimDettmers/bitsandbytes}} that dynamic exponent mapping assign $00000000_2 = 0_{10}$, $00000001_2 = 1_{10}$ in unsigned case and assign $10000000_2$ and $00000000_2$ with $1_{10}$ and $0_{10}$, respectively. This means $-1_{10}$ is not defined and the mapping is not symmetric in signed case.
Finally, after collecting all the represented numbers and arranging them in a sorted, increasing list, which has a length of $2^b$, the quantization mapping $\Tv(i)$ returns the $i$-th element of this list.

The construction of dynamic exponent mapping is unrelated to the number of bits.
Therefore, when we say we barely turn the 8-bit optimizer into 4-bit optimizer, 
it just use 4 total bits. The corner cases mentioned in last paragraph remain unchanged.

\subsection{Stochastic Rounding}\label{sec:app-sr}

Stochastic rounding is only used in Tab.~\ref{tab:main-ablation-sm-gpt2}. 
In this section, we talk about how to integrate stochastic rounding into our formulation of quantization.
When stochastic rounding is used, the definition of mapping $\Mv$ has some minor changes.
Specifically, $\Mv$ is still an element-wise function and defined as 
\begin{align*}
    \Mv(n_j) = \arg\min_{0 \leq i < 2^b} \set{ n_j - \Tv(i) : n_j - \Tv(i) \geq 0 } \cup 
    \arg\max_{0 \leq i < 2^b} \set{ n_j - \Tv(i) : n_j - \Tv(i) \leq 0 }.
\end{align*}
In other words, $\Mv$ maps each entry $n_j$ to the maximal index set $\Mv(n_j)$ such that for any $i \in \Mv(n_j)$ there is no other $0 \leq k \leq 2^b - 1$ with $\Tv(k)$ lying between  $\Tv(i)$ and $n_j$.
Actually, $\Mv$ acts as a filter of $\Tv$ and give a more fine-grained range of quantized output candidates. 
In this definition, $\Mv(n_j)$ has only one or two points since only stochastic rounding is considered in the final step.

Finally, we define stochastic rounding $\Rv_s$.
When $\Mv(n_j)$ only has one point, $\Rv_s$ just output this point.
When $\Mv(n_j)$ has two points $q_1$ and $q_2$ with $\Tv(q_1) < n_j < \Tv(q_2)$, 
stochastic rounding~($\Rv_s$) is defined as 
\begin{align*}
    \Rv_s\left(n_j, q_1, q_2\right) &= \begin{cases}
    q_2, \text{with proba.}~\frac{n_j - \Tv(q_1)}{\Tv(q_2) - \Tv(q_1)} \\
    \\
    q_1, \text{with proba.}~\frac{\Tv(q_2) - n_j}{\Tv(q_2) - \Tv(q_1)} \\
    \end{cases}
\end{align*}

\section{Compression-based Memory Efficient Optimizer Instances}\label{sec:app-quant-framework-example}

In this section, we present some examples about compression-based memory efficient optimizers. See Compression-based Memory Efficient SGDM in Alg.~\ref{alg:framework-sgdm} and Adam in Alg.~\ref{alg:framework-adam}.

\begin{algorithm}
  \caption{Compression-based Memory Efficient SGDM}
  \label{alg:framework-sgdm}
  \begin{algorithmic}[1]
    \REQUIRE{initial parameter $\theta_0 \in \Rb^p$, learning rate $\alpha$, initial first moment $\bar{m}_0 = 0$, total number of iterations $T$ and momentum parameter $\beta$.}
    \FOR{$t = 1, 2, \dots, T$}
      \STATE Sample a minibatch $\zeta_{t}$ and get stochastic gradient $g_t = \nabla_{\theta} f(\theta_{t-1}, \zeta_t)$
      \STATE $m_{t-1} \leftarrow \text{decompress}(\bar{m}_{t-1})$
      \STATE $m_{t} \leftarrow \beta \cdot m_{t-1} + g_{t}$
      \STATE $\theta_{t} \leftarrow \theta_{t-1} - \alpha \cdot m_t$
      \STATE $\bar{m}_{t} \leftarrow \text{compress}(m_t)$
    \ENDFOR
    \RETURN {$\theta_T$}
  \end{algorithmic}
\end{algorithm}

\begin{algorithm}
  \caption{Compression-based Memory Efficient Adam}
  \label{alg:framework-adam}
  \begin{algorithmic}[1]
    \REQUIRE{initial parameter $\theta_0 \in \Rb^p$, learning rate $\alpha$, initial moments $\bar{m}_0 = 0, \bar{v}_0 = 0$, total number of iterations $T$ and  hyperparameters $\beta_1, \beta_2, \epsilon$.}
    \FOR{$t = 1, 2, \dots, T$}
      \STATE Sample a minibatch $\zeta_{t}$ and get stochastic gradient $g_t = \nabla_{\theta} f(\theta_{t-1}, \zeta_t)$
      \STATE $m_{t-1}, v_{t-1} \leftarrow \text{decompress}(\bar{m}_{t-1}), \text{decompress}(\bar{v}_{t-1})$
      \STATE $m_{t} \leftarrow \beta_1 \cdot m_{t-1} + (1-\beta_1)\cdot g_{t}$
      \STATE $v_{t} \leftarrow \beta_2 \cdot v_{t-1} + (1-\beta_1)\cdot g_{t}^2$
      \STATE $\hat{m}_{t} \leftarrow m_t / (1 - \beta_1^t)$
      \STATE $\hat{v}_{t} \leftarrow v_t / (1 - \beta_2^t)$
      \STATE $\theta_{t} \leftarrow \theta_{t-1} - \alpha \cdot \hat{m}_t / (\sqrt{\hat{v_t}} + \epsilon)$
      \STATE $\bar{m}_{t}, \bar{v}_t \leftarrow \text{compress}(m_t), \text{compress}(v_t)$
    \ENDFOR
    \RETURN {$\theta_T$}
  \end{algorithmic}
\end{algorithm}

\section{Rank-1 Normalization}\label{sec:app-rank-1-normalization}

In this section, we present the detailed formulation of rank-1 normalization in Alg.~\ref{alg:rank-1-quant}.

\begin{algorithm}[htbp]
  \caption{Rank-1 Normalization}
  \label{alg:rank-1-quant}
  \begin{algorithmic}[1]
    \REQUIRE{tensor $x \in \Rb^{d_1 \times \dots \times d_p}$; 
    statistics $\mu_r \in \Rb^{d_r}$ for $1 \leq r \leq p$;
    permutation function $\Phi$ mapping $\set{1, \dots, d}$ to indices of tensor $x$,
    where $d = d_1 \times \dots \times d_p$.}
    \FOR{$r = 1, 2, \dots, p$}
      \FOR{$j = 1, 2, \dots, d_r$}
        \STATE $\mu_{r, j} = \max_{i_1,\dots,i_{r-1}, i_{r+1}, \dots, i_p} \abs{x_{[i_1,\dots,i_{r-1}, j, i_{r+1}, \dots, i_p]}}$
      \ENDFOR
    \ENDFOR
    \FOR{$i = 1, 2, \dots, d$}
      \STATE $M_i = \min_{1 \leq r \leq p} \mu_{r, \Phi(i)_r}$
    \ENDFOR
    \STATE reshape 1-dimensional array $M$ to the same shape as $x$
    \RETURN {$x/M$}
  \end{algorithmic}
\end{algorithm}


\section{Theoretical Analysis}

The convergence of low-bit optimizers can be guaranteed if their fp32 counterparts converge.
Here we provide a theorem about the convergence of quantized SGDM (Alg.~\ref{alg:framework-sgdm}) under some assumptions. We believe the convergence of low-bit AdamW could be inferred from the convergence of AdamW. 

First, we make some assumptions. The first three are rather standard in stochastic optimization literature, while last two depict properties of  stochastic quantizers. 
\emph{
\begin{myitems}
    \item (Convexity) The objective function is convex and has an unique global minimum $f(\theta^*)$.
    \item (Smoothness) The objective $f(\theta)$ is continuous differentiable and $L$-smooth;
    \item (Moments of stochastic gradient) The stochastic gradient $g$
    is unbiased, i.e., $\Eb[g(\theta)] = \nabla f(\theta)$, and has bounded variance, i.e.,
    $ \Eb\left[\norm{g(\theta) - \nabla f(\theta)}^2\right] < \sigma^2$, $\forall \theta \in \mathbb{R}^d$.
    \item (Unbiased quantizer)  $\forall x\in \mathbb R^d$, $\Eb\left[Q(x)\right]=x$.
\item (Bounded quantization variance) $\forall x\in \mathbb R^d$, 
$\Eb\left[\norm{Q_m(x)-x}^2\right] \le \sigma_m^2$.
\end{myitems}
}

Then, we have following theorem:
\begin{theorem}
    \label{thm:q-sgdm}
    Consider the Algorithm \ref{alg:framework-sgdm} with Assumptions 1-5. Let $\alpha \in (0, \frac{1-\beta}{L}]$, then for all $T > 0$ we have
    \begin{align}
    \Eb[f(\Bar{{\theta}}_T) - f_*] & \le \frac{1}{2T}\left( \frac{L\beta}{1-\beta} + \frac{1-\beta}{\alpha} \right) \norm{\theta_0 - \theta_*}^2 \nonumber\\
    & + \frac{\alpha \sigma^2}{(1-\beta)}  
    + \frac{\alpha\sigma_m^2}{(1-\beta)}.\label{eqn:convergence-rate}
    \end{align}
    where $\Bar{\theta}_T = \frac{1}{T}\sum_{i=0}^{T-1}\theta_i$.
\end{theorem}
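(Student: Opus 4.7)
The plan is to reduce the quantized SGDM iteration in Alg.~\ref{alg:framework-sgdm} to a standard stochastic heavy-ball update driven by a noisier effective gradient, and then apply a virtual-iterate Lyapunov analysis. First, I would write $Q(m_{t-1}) = m_{t-1} + \epsilon_{t-1}$, where the quantization error $\epsilon_{t-1}$ satisfies $\Eb[\epsilon_{t-1}\mid \mathcal F_{t-1}] = 0$ by Assumption~4 and $\Eb[\norm{\epsilon_{t-1}}^2\mid \mathcal F_{t-1}] \le \sigma_m^2$ by Assumption~5, with $\mathcal F_{t-1}$ the $\sigma$-algebra generated by the history up to step $t-1$. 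Substituting into the update, the algorithm becomes the exact SGDM recursion $m_t = \beta m_{t-1} + \hat g_t$, $\theta_t = \theta_{t-1} - \alpha m_t$, with effective gradient $\hat g_t := g_t + \beta\epsilon_{t-1}$. A short computation using Assumption~3 together with unbiasedness of $\epsilon_{t-1}$ and conditional independence of the quantization noise from the fresh gradient noise gives $\Eb[\hat g_t\mid \mathcal F_{t-1}] = \nabla f(\theta_{t-1})$ and $\Eb[\norm{\hat g_t - \nabla f(\theta_{t-1})}^2\mid \mathcal F_{t-1}] \le \sigma^2 + \beta^2\sigma_m^2 \le \sigma^2 + \sigma_m^2$.

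Next, I would introduce the virtual iterate $p_t := \theta_t - \tfrac{\alpha\beta}{1-\beta}\, m_t$ (with $p_0 = \theta_0$ since $m_0 = 0$). Using the reduced recursion, a direct calculation yields
\begin{align*}
p_t \;=\; p_{t-1} \;-\; \tfrac{\alpha}{1-\beta}\,\hat g_t,
\end{align*}
so $\{p_t\}$ is SGD with effective step size $\tilde\alpha := \alpha/(1-\beta)$ but with the stochastic gradient evaluated at $\theta_{t-1}$ instead of $p_{t-1}$. Expanding $\norm{p_t-\theta_*}^2$ and taking conditional expectation produces the standard one-step identity; convexity of $f$ applied to $\langle\nabla f(\theta_{t-1}),\, \theta_{t-1}-\theta_*\rangle$ supplies a lower bound involving $f(\theta_{t-1}) - f_*$, while the discrepancy $p_{t-1}-\theta_{t-1} = -\tfrac{\alpha\beta}{1-\beta}m_{t-1}$ produces a residual of the form $\tfrac{\alpha\beta}{1-\beta}\langle\nabla f(\theta_{t-1}), m_{t-1}\rangle$. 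This residual is controlled by the $L$-smoothness inequality $f(\theta_t) \le f(\theta_{t-1}) - \alpha\langle\nabla f(\theta_{t-1}), m_t\rangle + \tfrac{L\alpha^2}{2}\norm{m_t}^2$, which converts it to a telescoping function-value difference plus a term in $\norm{m_t}^2$. One then forms the Lyapunov function $\Phi_t := \tfrac{1-\beta}{2\alpha}\norm{p_t-\theta_*}^2 + \tfrac{L\beta}{2(1-\beta)}\norm{\theta_t-\theta_{t-1}}^2 + \bigl(f(\theta_t)-f_*\bigr)$, telescopes $\Phi_t - \Phi_{t-1}$ over $t=1,\dots,T$, divides by $T$, and applies Jensen's inequality to $\bar\theta_T$ to obtain the claim.

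The main obstacle is not the reduction step, which is essentially bookkeeping, but the delicate Lyapunov accounting: one needs the convexity lower bound, the smoothness upper bound, and the coefficient of the $\norm{m_t}^2$ terms coming from $\tilde\alpha^2\Eb[\norm{\hat g_t}^2]$ and from the smoothness step to cancel cleanly. This is precisely where $\alpha \le (1-\beta)/L$ is tight: it ensures the residual $\norm{m_t}^2$ contributions are absorbed, so that only the $\tfrac{L\beta}{1-\beta}$ and $\tfrac{1-\beta}{\alpha}$ constants on $\norm{\theta_0-\theta_*}^2/(2T)$ survive, alongside the per-step variance $\tilde\alpha^2(\sigma^2+\sigma_m^2)$ which, after summing, dividing by $T$, and folding in the $(1-\beta)$ factor from rearranging $\Phi_T - \Phi_0$, collapses to the stated noise floor $\alpha\sigma^2/(1-\beta) + \alpha\sigma_m^2/(1-\beta)$.
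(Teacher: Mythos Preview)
Your reduction to an effective stochastic gradient $\hat g_t = g_t + \beta\epsilon_{t-1}$ and your virtual iterate $p_t = \theta_t - \tfrac{\alpha\beta}{1-\beta}m_t$ are exactly the paper's setup: $p_t$ coincides with the paper's $z_t$ (since $\theta_t-\theta_{t-1}=-\alpha m_t$), and the paper's Lemma~2 bound $\Eb\|z_{t+1}-z_t\|^2 \le 2\tilde\alpha^2(\Eb\|g_{t+1}\|^2+\sigma_m^2)$ is precisely your variance computation for $\hat g_t$, just without the explicit packaging. So the overall route is the same.

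There is, however, a real slip in how you propose to handle the cross term. The residual you identify is $\tfrac{\alpha\beta}{1-\beta}\langle\nabla f(\theta_{t-1}), m_{t-1}\rangle$, but the descent lemma you invoke controls $\langle\nabla f(\theta_{t-1}), m_t\rangle$; the indices do not match and the lemma does not apply. The correct tool here is plain convexity, not smoothness: since $-\alpha m_{t-1}=\theta_{t-1}-\theta_{t-2}$, the bound $\langle\nabla f(\theta_{t-1}),\theta_{t-1}-\theta_{t-2}\rangle \ge f(\theta_{t-1})-f(\theta_{t-2})$ turns the residual directly into a telescoping function-value difference with no $\|m_t\|^2$ leftover. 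This is exactly what the paper does, and once you use it the $\tfrac{L\beta}{2(1-\beta)}\|\theta_t-\theta_{t-1}\|^2$ piece of your Lyapunov becomes unnecessary. A second point your sketch leaves implicit: the term $\tilde\alpha^2\Eb\|\hat g_t\|^2$ contains $\tilde\alpha^2\|\nabla f(\theta_{t-1})\|^2$, which must be absorbed somewhere for the step-size condition $\alpha\le(1-\beta)/L$ to suffice. The paper does this via the co-coercivity inequality $\langle\theta_t-\theta_*,\nabla f(\theta_t)\rangle \ge \tfrac{1}{L}\|\nabla f(\theta_t)\|^2$ (blended with the convexity lower bound through a parameter $\rho\in(0,1]$), and that is where the constraint on $\alpha$ actually bites; your outline should make this cancellation explicit rather than asserting it.
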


\subsection{Proof of Theorem 1}

To prove Theorem~\ref{thm:q-sgdm}, we need some useful lemmas.

\begin{lemma}
    In Algorithm \ref{alg:framework-sgdm}, The conditional first and second moments of $g_{t}$ satisfies
    \begin{align}
        \Eb[g_{t} | \theta_{t-1}] & = \nabla f(\theta_{t-1}) \\
        \Eb\left[\norm{g_{t}}^2 | \theta_{t-1}\right] & \le \norm{\nabla f(\theta_{t-1})}^2 + \sigma^2
    \end{align}
\end{lemma}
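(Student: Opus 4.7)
The plan is to verify both equalities by treating $g_t$ as a stochastic realization $g(\theta_{t-1})$ on the minibatch $\zeta_t$, and then applying the standing assumptions about the stochastic gradient oracle termwise. Conditional on $\theta_{t-1}$, the randomness in $g_t$ comes entirely from the freshly drawn minibatch $\zeta_t$, which is independent of $\theta_{t-1}$ since $\theta_{t-1}$ is determined by $\zeta_1, \ldots, \zeta_{t-1}$ (and the initial state). This independence is the only structural fact I really need; the rest is a direct translation of the oracle assumptions.

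For the first identity, I would note that $g_t = \nabla_\theta f(\theta_{t-1}, \zeta_t)$ and apply the unbiasedness assumption (Assumption~3) pointwise at $\theta = \theta_{t-1}$. Conditioning on $\theta_{t-1}$ freezes the argument while the expectation is taken over $\zeta_t$, yielding $\Eb[g_t \mid \theta_{t-1}] = \nabla f(\theta_{t-1})$ immediately.

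For the second inequality, I would invoke the standard bias--variance decomposition of the second moment:
\begin{align*}
\Eb\left[\norm{g_t}^2 \mid \theta_{t-1}\right]
&= \norm{\Eb[g_t \mid \theta_{t-1}]}^2 + \Eb\left[\norm{g_t - \Eb[g_t \mid \theta_{t-1}]}^2 \mid \theta_{t-1}\right].
\end{align*}
Substituting the first identity into the first term gives $\norm{\nabla f(\theta_{t-1})}^2$, and the second term is exactly the conditional variance at $\theta = \theta_{t-1}$, which Assumption~3 bounds by $\sigma^2$. Summing the two gives the claimed bound.

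I do not anticipate any real obstacle here: this lemma is a one-line consequence of the oracle assumptions once the conditioning on $\theta_{t-1}$ is interpreted correctly. The only point worth being careful about is the independence of $\zeta_t$ from $\theta_{t-1}$, which justifies replacing the (marginal) oracle assumption by its conditional version; this is standard and follows from the sequential sampling specified in Alg.~\ref{alg:framework-sgdm}. No properties of the quantizer (Assumptions~4--5) are needed at this stage, since $g_t$ is computed from the fresh gradient rather than from a dequantized state.
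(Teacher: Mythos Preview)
Your proposal is correct and mirrors the paper's proof essentially line for line: the first identity is immediate from Assumption~3, and the second follows from the bias--variance decomposition (the paper writes this out explicitly by adding and subtracting $\nabla f(\theta)$ and observing the cross term vanishes, which is the same computation you invoke).
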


\begin{proof}
    By assumption, we easily have
    \begin{align*}
        \E{g_{t} | \theta_{t-1}} & =
        \nabla f(\theta_{t-1}).
    \end{align*}
    With Assumption 3, it holds true that
    \begin{align*}
        \E{\norm{g(\theta)}^2} &= \E{\norm{g(\theta) - \nabla{f}(\theta) + \nabla{f}(\theta)}^2} \\ &=
        \E{\norm{g(\theta) - \nabla{f}(\theta)}^2} + \E{\norm{\nabla{f}(\theta)}^2} + 2\E{\left\langle g(\theta) - \nabla{f}(\theta), \nabla{f}(\theta) \right\rangle} \\ &=
        \E{\norm{g(\theta) - \nabla{f}(\theta)}^2} + \E{\norm{\nabla{f}(\theta)}^2} \\ &\le
        \sigma^2 + \norm{\nabla{f}(\theta)}^2,
    \end{align*}
    which implies the second part.
\end{proof}

\begin{lemma}
If Assumptions 3-5 hold, then sequence $\{z_t\}$ satisfies
    \begin{align}
        z_{t+1} - z_{t} & = \frac{1}{1 - \beta} (\theta_{t+1} - \theta_t) - \frac{\beta}{1 - \beta}(\theta_t - \theta_{t-1}) \\
        \Eb[z_{t+1} - z_{t}] & = \frac{-\alpha}{1 - \beta} \nabla f(\theta_t) \\
        \Eb[\norm{z_{t+1} - z_{t}}^2] 
        & \le  2\left(\frac{\alpha}{1-\beta}\right)^2 \left(\Eb[\norm{g_{t+1}}^2] + \sigma_m^2  \right).
    \end{align}
\end{lemma}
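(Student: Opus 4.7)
The plan is to introduce the standard momentum-analysis auxiliary sequence
$z_t := \theta_t + \tfrac{\beta}{1-\beta}(\theta_t - \theta_{t-1})$ with the convention $\theta_{-1} := \theta_0$, so that the first identity falls out of pure algebra. Expanding the definition of $z_{t+1} - z_t$ and grouping the $(\theta_{t+1}-\theta_t)$ terms gives
\[
z_{t+1}-z_t = \Bigl(1+\tfrac{\beta}{1-\beta}\Bigr)(\theta_{t+1}-\theta_t) - \tfrac{\beta}{1-\beta}(\theta_t-\theta_{t-1}) = \tfrac{1}{1-\beta}(\theta_{t+1}-\theta_t) - \tfrac{\beta}{1-\beta}(\theta_t-\theta_{t-1}),
\]
which is exactly the first claim.

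Next I would substitute the parameter updates $\theta_{t+1}-\theta_t = -\alpha m_{t+1}$ and $\theta_t-\theta_{t-1} = -\alpha m_t$ from line 5 of Alg.~\ref{alg:framework-sgdm}. The decompressed momentum entering line 4 is $\tilde m_t := \text{decompress}(\bar m_t) = Q_m(m_t)$, so writing $\xi_t := \tilde m_t - m_t$ one has $m_{t+1}=\beta\tilde m_t + g_{t+1}=\beta m_t+\beta\xi_t+g_{t+1}$. Plugging into the first identity gives the clean form
\[
z_{t+1}-z_t = -\frac{\alpha}{1-\beta}\bigl(m_{t+1}-\beta m_t\bigr) = -\frac{\alpha}{1-\beta}\bigl(\beta\xi_t+g_{t+1}\bigr).
\]
Taking expectation and applying Assumption 4 ($\Eb[\xi_t\mid \mathcal F_t]=0$) together with Lemma 3 ($\Eb[g_{t+1}\mid\theta_t]=\nabla f(\theta_t)$) immediately yields the second identity. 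For the third, I would apply the elementary inequality $\|\beta\xi_t+g_{t+1}\|^2\le 2\beta^2\|\xi_t\|^2 + 2\|g_{t+1}\|^2\le 2\|\xi_t\|^2+2\|g_{t+1}\|^2$ (using $\beta^2\le 1$) and invoke Assumption 5 to bound $\Eb\|\xi_t\|^2\le \sigma_m^2$, which delivers exactly the stated constant $2(\alpha/(1-\beta))^2(\Eb\|g_{t+1}\|^2 + \sigma_m^2)$.

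The main obstacle is the bookkeeping of randomness. One must fix a filtration in which $\xi_t$ is determined once $m_t$ is, verify that $\xi_t$ is conditionally mean-zero given everything needed to make $g_{t+1}-\nabla f(\theta_t)$ also conditionally mean-zero (so that cross terms vanish both for the first-moment identity and implicitly inside the $2a^2+2b^2$ bound), and confirm that the internal "ideal" momentum $m_t$ propagates according to $m_{t+1}=\beta(m_t+\xi_t)+g_{t+1}$ rather than the noiseless recursion one might naively write. Once these measurability issues are pinned down, the remaining computation is purely algebraic and the bounds are tight enough to land on the stated constants.
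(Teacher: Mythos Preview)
Your proposal is correct and follows essentially the same route as the paper's proof. You introduce the quantization noise $\xi_t = Q_m(m_t)-m_t$ explicitly and reduce $z_{t+1}-z_t$ to $-\tfrac{\alpha}{1-\beta}(\beta\xi_t+g_{t+1})$, whereas the paper keeps this implicit by writing $m_{t+1}-(\beta m_t+g_{t+1})$ and bounding that term via Assumption~5; the algebra and the use of $\|a+b\|^2\le 2\|a\|^2+2\|b\|^2$ (together with $\beta^2\le 1$) are identical in both. One cosmetic point: the unbiased-gradient fact you cite is the paper's Lemma~1, not ``Lemma~3.''
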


\begin{proof}
    By definition of $z_t$, we have the first equation immediately.
    Take expectation on the first equation and we get
    \begin{align*}
        \E{z_{t+1}-z_t} &= \frac{1}{1-\beta}\E{\theta_{t+1}-\theta_{t}} - \frac{\beta}{1-\beta}\E{\theta_{t}-\theta_{t-1}}.
    \end{align*}
    Note that
    \begin{align*}
        \E{\theta_{t+1}-\theta_{t}} &= \E{\theta_{t+1} - (\theta_{t} - \alpha m_{t+1})} - \E{\alpha m_{t+1}} \\ &=
        -\alpha\E{m_{t+1}} \\ &=
        -\alpha\E{\beta m_{t} + g_{t+1}} \\ &=
        -\alpha\beta\E{m_{t}} - \alpha \nabla f(\theta_{t}),
    \end{align*}
    and
    \begin{align*}
        \E{\theta_{t}-\theta_{t-1}} &= \E{\theta_{t} - (\theta_{t-1} - \alpha m_{t})} - \E{\alpha m_{t}} \\ &=
        -\alpha\E{m_{t}},
    \end{align*}
    which gives the second equation.
    \begin{align*}
        \Eb[z_{t+1} - z_{t}] & = \frac{-\alpha}{1 - \beta} \nabla f(\theta_t)
    \end{align*}
    For the last equation, since
    \begin{align*}
        z_{t+1}-z_t &= \frac{1}{1-\beta}(\theta_{t+1}-\theta_{t}) - \frac{\beta}{1-\beta}(\theta_{t}-\theta_{t-1}) \\ 
        &=
        -\frac{\alpha}{1-\beta}(m_{t+1}-\beta m_t)
    \end{align*}
    Take expectation and we have
    \begin{align*}
        \E{\norm{z_{t+1}-z_t}^2} &=
        \left(\frac{\alpha}{1-\beta}\right)^2\E{\norm{m_{t+1}-\beta m_t}^2} \\
        &\le
        2\left(\frac{\alpha}{1-\beta}\right)^2 \left( \E{\norm{m_{t+1}-(\beta m_t + g_{t+1})}^2} + \E{\norm{g_{t+1}}^2}\right) \\
        &\le
        2\left(\frac{\alpha}{1-\beta}\right)^2\left(\E{\norm{g_{t+1}}^2}  + \sigma_m^2\right).
    \end{align*}
\end{proof}

\begin{proof}[Proof of Theorem~\ref{thm:q-sgdm}]
    From Lemma 2, we have
    \begin{align*}
        \E{\norm{z_{t+1}-z_t}^2}\le
        2\left(\frac{\alpha}{1-\beta}\right)^2\left(\E{\norm{g_{t+1}}^2}  + \sigma_m^2\right).
    \end{align*}
    Substituting Lemma 1 gives
    \begin{align}
        \label{eq:zdiff}
        \E{\norm{z_{t+1}-z_t}^2}
        \le 2\left(\frac{\alpha}{1-\beta}\right)^2\left(
        \norm{\nabla f(\theta_{t})}^2 + \sigma^2 + \sigma_m^2\right).
    \end{align}
    Suppose $\theta_*$ is the optimal parameter and $f_* = f(\theta_*)$ is the minimal objective value. First, we have
    \begin{align*}
    \norm{z_{t+1} - \theta_*}^2 & = \norm{z_{t} - \theta_*}^2 
    + 2 \left\langle z_t - \theta_*, z_{t+1} - z_t \right\rangle + \norm{z_{t+1} - z_t}^2
\end{align*}
Take expectation over the randomness in the $(t+1)-$th step, we have
\begin{align*}
    \Eb[\norm{z_{t+1} - \theta_*}^2] & = \norm{z_{t} - \theta_*}^2 
    - \frac{2\alpha}{1-\beta} \left\langle z_t - \theta_*, \nabla f(\theta_t) \right\rangle + \Eb[\norm{z_{t+1} - z_t}^2] \\
    & = \norm{z_{t} - \theta_*}^2 
    - \frac{2\alpha}{1-\beta} \left\langle \theta_t - \theta_*, \nabla f(\theta_t) \right\rangle \\
    & - \frac{2\alpha\beta}{(1-\beta)^2} \left\langle \theta_t - \theta_{t-1}, \nabla f(\theta_t)
    \right\rangle + \Eb[\norm{z_{t+1} - z_t}^2]
\end{align*}
    Since $f$ is continuously differentiable and L-smooth, we have the following inequalities. 
    \citep{2013Introductory}
    \begin{align}
        \left\langle \theta_t - \theta_*, \nabla f(\theta_t) \right\rangle \ge \frac{1}{L}\norm{\nabla f(\theta_t)}^2 \\
        \left\langle \theta_t - \theta_*, \nabla f(\theta_t) \right\rangle \ge f(\theta_t) - f_* + \frac{1}{2L}\norm{\nabla f(\theta_t)}^2 \\
        \left\langle \theta_t - \theta_{t-1}, \nabla f(\theta_t)\right\rangle \ge f(\theta_t) - f(\theta_{t-1})
    \end{align}
    Substitute them and get
\begin{align*}
    \Eb[\norm{z_{t+1} - \theta_*}^2] 
    & \le \norm{z_{t} - \theta_*}^2 
    - \frac{2\alpha (1 - \rho)}{L(1-\beta)} \norm{\nabla f(\theta_t)}^2 
    - \frac{2\alpha \rho}{1-\beta} (f(\theta_t) - f_*)\\
    & - \frac{\alpha \rho}{L(1-\beta)} \norm{\nabla f(\theta_t)}^2  - \frac{2\alpha \beta}{(1-\beta)^2} (f(\theta_t) - f(\theta_{t-1}))
    + \Eb[\norm{z_{t+1} - z_t}^2]
\end{align*}
    where $\rho\in(0,1]$ is a parameter used to balance the first two inequalities. 
    Denote $M = 2\left(\frac{\alpha}{1-\beta}\right)^2 \left( \sigma^2 + \sigma_m^2 \right)$. 
    Substitute Eq.~\ref{eq:zdiff} into this inequality and collect the terms, we get
\begin{align*}
    & \left( \frac{2\alpha \rho}{1-\beta} + \frac{2\alpha \beta}{(1-\beta)^2} \right) (f(\theta_t) - f_*) + \Eb[\norm{z_{t+1} - \theta_*}^2] \\
    \le &  \frac{2\alpha \beta}{(1-\beta)^2} \left( f(\theta_{t-1}) - f_*\right) 
    + \norm{z_t - \theta_*}^2 
    + \left( \frac{2\alpha^2}{(1 - \beta)^2}  - \frac{\alpha (2 - \rho)}{L(1-\beta)} \right) \norm{\nabla f(\theta_t)}^2 + M
\end{align*}
When $\alpha$ satisfies the condition $ \frac{2\alpha^2}{(1 - \beta)^2}  - \frac{\alpha (2 - \rho)}{L(1-\beta)} \le 0$, i.e. $ 0 \le \alpha \le \frac{(1 - \beta)(2 - \rho)}{2L}$, the term about $\norm{\nabla f(\theta_t)}^2$ is non-positive, thus we have
\begin{align*}
    & \left( \frac{2\alpha \rho}{1-\beta} + \frac{2\alpha \beta}{(1-\beta)^2} \right) (f(\theta_t) - f_*) + \Eb[\norm{z_{t+1} - \theta_*}^2] \\
    \le &  \frac{2\alpha \beta}{(1-\beta)^2} \left( f(\theta_{t-1}) - f_*\right) 
    + \norm{z_t - \theta_*}^2 + M
\end{align*}
Summing this inequality from 0 to $T - 1$ and taking full expectation gives
\begin{align*}
    & \frac{2\alpha \rho}{1-\beta} \sum_{i=0}^{T-1} \Eb[f(\theta_i) - f_*] 
    + \sum_{i=0}^{T-1} \left( \frac{2\alpha \beta}{(1-\beta)^2} \Eb[f(\theta_{i}) - f_*] + \Eb[\norm{z_{i+1} - \theta_*}^2]\right) \\
    \le & \sum_{i=0}^{T-1}  \left( \frac{2\alpha \beta}{(1-\beta)^2} \Eb[f(\theta_{i-1}) - f_*] + \Eb[\norm{z_{i} - \theta_*}^2]\right) + T\cdot M
\end{align*}
which implies that
\begin{align*}
    \frac{2\alpha \rho}{1-\beta} \sum_{i=0}^{T-1} \Eb[f(\theta_i) - f_*] 
    & \le \frac{2\alpha \beta}{(1-\beta)^2} (f(\theta_0) - f_*) + \norm{\theta_0 - \theta_*}^2 + T\cdot M
\end{align*}
Since $f$ is \emph{convex}, we have $T f(\Bar{{\theta}}_T) \le \frac{1}{T}\sum_{i=0}^{T-1} f(\theta_i))$.
Subsequently we have
\begin{align*}
    \Eb[f(\Bar{{\theta}}_T) - f_*] & \le \frac{1}{T}\left( \frac{\beta}{\rho(1-\beta)}(f(\theta_0) - f_*) + \frac{1-\beta}{2\alpha \rho}\norm{\theta_0 - \theta_*}^2 \right) \\
    & + \frac{1-\beta}{2\alpha \rho}M
\end{align*}
Finally, when $\alpha \in (0, \frac{1-\beta}{L}]$, we can take $\rho=1$, use L-smooth condition again and substitute $M$, which gives
\begin{align*}
    \Eb[f(\Bar{{\theta}}_T) - f_*] & \le \frac{1}{2T}\left( \frac{L\beta}{1-\beta} + \frac{1-\beta}{\alpha} \right) \norm{\theta_0 - \theta_*}^2\\
    & + \frac{\alpha \sigma^2}{(1-\beta)} 
    + \frac{\alpha\sigma_m^2}{(1-\beta)}
\end{align*}
\end{proof}

\end{document}